\newenvironment{proofarg}[1]{\par\noindent{\bf #1\ }}{\hfill\BlackBox\\[2mm]}
\newenvironment{proofsketch}[1]{\par\noindent{\bf Proof Sketch\ }}{\hfill\BlackBox\\[2mm]}
\DeclareMathOperator*{\argmin}{arg\,min}
\newcommand{\regfunc}{f}
\newcommand{\estfunc}{\widehat{\regfunc}}
\newcommand{\RR}{R$^2$\xspace}
\newcommand{\labsam}[1]{t_{#1}}
\newcommand{\err}{\text{err}}
\newcommand{\var}{\text{var}}
\newcommand{\gtweight}{w^*}
\newcommand{\labelnoise}{\varepsilon}
\newcommand{\featureRV}{X}
\newcommand{\labelRV}{y}
\newcommand{\compRV}{Z}
\newcommand{\featureV}{x}
\newcommand{\labelV}{y}
\newcommand{\distrX}{\mathbb{P}_X}
\newcommand{\stdlabelnoise}{\sigma}
\newcommand{\gtweightnorm}{r^*}
\newcommand{\gtweightvec}{v^*}
\newcommand{\estweightnorm}{\widehat{r}}
\newcommand{\estweightvec}{\widehat{v}}
\newcommand{\estweight}{\widehat{w}}
\newcommand{\inprod}[2]{\ensuremath{\langle #1 , \, #2 \rangle}}
\newcommand{\numlabel}{m}
\newcommand{\numcomp}{n}
\newcommand{\ball}[2]{B(#1,#2)}
\newcommand{\errorcomppart}{\varepsilon}
\newcommand{\dimension}{d}
\newcommand{\comporacle}{\mathcal{O}_c}
\newcommand{\labeloracle}{\mathcal{O}_l}
\newcommand{\targeterror}{\gamma}
\newcommand{\baseval}{T}
\newcommand{\smsp}{\vspace{0mm}}
\newcommand{\medsp}{\vspace{0mm}}
\newcommand{\sign}{\text{sign}}
\newcommand{\Ologlog}{\widehat{O}}
\newcommand{\Olog}{\widetilde{O}}
\newcommand{\minimax}{\mathfrak{M}}
\begin{document}
	
	\title{Regression with Comparisons:
		Escaping the Curse of Dimensionality with Ordinal Information}
	
	\author{\name Yichong Xu \email yichongx@cs.cmu.edu \\
		\addr Machine Learning Department
		\AND
		\name Sivaraman Balakrishnan \email siva@stat.cmu.edu \\
		\addr Department of Statistics and Data Science \\
		\addr Machine Learning Department
		\AND
		\name Aarti Singh \email aarti@cs.cmu.edu \\
		\addr Machine Learning Department
		\AND
		\name Artur Dubrawski \email awd@cs.cmu.edu \\
		\addr Auton Lab, The Robotics Institute\\
		Carnegie Mellon University\\
		Pittsburgh, PA 15213, USA}
	
	\editor{}
	\maketitle
	\normalem
	\begin{abstract}
		In supervised learning, we typically leverage a fully labeled dataset to design methods
		for function estimation or prediction.
		In many practical situations, we are able to obtain
		alternative feedback, possibly at a low cost. A broad goal is to understand the usefulness of, and to design algorithms to exploit, this alternative feedback. 
		In this paper, we consider a semi-supervised regression setting, where we obtain additional \emph{ordinal} (or \emph{comparison}) information for the unlabeled samples. We consider
		ordinal feedback of varying qualities where we have either a perfect ordering of the samples, a noisy ordering of the samples or noisy pairwise comparisons between the samples.
		We provide a precise quantification of the usefulness of these types of ordinal feedback in both nonparametric and linear regression, showing that in many cases it is possible to accurately estimate an underlying function with a very small labeled set, effectively \emph{escaping the curse of dimensionality}.
		We also present lower bounds, that establish fundamental limits for the task and show that our algorithms are optimal in a variety of settings. Finally, we present extensive 
		experiments on new datasets that demonstrate the efficacy and practicality of our algorithms and investigate their robustness to various sources of noise and model misspecification.
	\end{abstract}
	
	\begin{keywords}
		Pairwise Comparison, Ranking, Regression, Interactive Learning
	\end{keywords}
	
	\section{Introduction \label{sec:intro}}
	
	
	Classical regression is centered around the 
	development and analysis of methods that use labeled observations, 
	$\{(X_1, y_1),\ldots,(X_n, y_n)\}$, 
	where each $(X_i,y_i) \in \mathbb{R}^d \times \mathbb{R}$, 
	in various tasks of estimation and inference. 	
Nonparametric and high-dimensional methods are appealing in practice owing to their
flexibility, and the relatively weak a-priori structural assumptions that they impose on the unknown regression function. 
However, the price we pay is that these methods
typically require a large amount of labeled data to estimate complex target functions,
scaling exponentially with the dimension for fully nonparametric methods and scaling linearly with the dimension
for high-dimensional parametric methods -- the so-called curse of dimensionality. This has
motivated research on structural constraints -- for instance,
sparsity or manifold constraints -- as well as research on
active learning and semi-supervised learning where labeled
samples are used judiciously. 

We consider a complementary approach, motivated by applications in material science,
crowdsourcing, and healthcare, where we are able to supplement a small labeled dataset with a potentially larger dataset
of \emph{ordinal} information. Such ordinal information is obtained
either in the form of a (noisy) ranking of unlabeled points
or in the form of (noisy) pairwise comparisons between
function values at unlabeled points.
Some illustrative applications of the methods we develop in this paper include:\\
	
	\vspace{0.05cm}
	
	\textbf{Example 1: Crowdsourcing.} 
	In crowdsourcing we rely on human labeling effort, and in
many cases humans are able to provide more accurate ordinal feedback with substantially less effort (see for instance~\cite{tsukida2011analyze,shah2015estimation,shah2016permutation}). 
	When crowdsourced workers are asked to give numerical price estimates, they typically have difficulty giving a precise answer, resulting in a high variance between worker responses. On the other hand, when presented two products or listings side by side, workers may be more accurate at comparing them. We conduct experiments on the task of price estimation in Section \ref{sec:expr_AirBnB}.\\
	
	\vspace{0.05cm}
	
	\textbf{Example 2: Material Synthesis.} In material synthesis, the broad goal is to design complex new materials and machine learning approaches are gaining popularity \citep{materialml1,materialml2}. Typically, given a setting of input parameters (temperature, pressure etc.) we are able to perform a synthesis experiment and measure the quality of resulting synthesized material. Understanding this landscape of material quality is essentially a task of high-dimensional function estimation. Synthesis experiments can be costly and material scientists when presented with pairs of input parameters are often able to cheaply and reasonably accurately provide comparative assessments of quality.\\
	
	\vspace{0.05cm}
	
	\textbf{Example 3: Patient Diagnosis.} In clinical settings, precise assessment of each individual patient's health status can be difficult, expensive, and risky 
	 but comparing the relative
status of two patients may be relatively easy and accurate. 
	
		\vspace{0.05cm}

	In each of these settings, it is important to develop methods for function estimation that combine standard supervision with (potentially) cheaper and abundant ordinal or comparative supervision.\\


	\subsection{Our Contributions\label{sec:contribution}}
	We consider both linear and nonparametric regression with both direct (cardinal) and 
	comparison (ordinal) information.
	In both cases, we consider the standard statistical learning setup, where the samples $X$ are drawn i.i.d. from a distribution $\mathbb{P}_X$ on $\mathbb{R}^d$. The labels $y$ are related to the features $X$ as,
	\begin{align*}
	\labelV=\regfunc(\featureRV)+\labelnoise,
	\end{align*} 
	where $\regfunc = \mathbb{E}[y | X]$ is the underlying regression function of interest, and $\labelnoise$ is the mean-zero label noise. Our goal is to construct an estimator $\estfunc$ of $\regfunc$ that has low risk or mean squared error (MSE),
	\begin{align*}
	R(\widehat{f}, f) = \mathbb{E}(\estfunc(X) - \regfunc(X))^2,
	\end{align*}
	where the expectation is taken over the labeled and unlabeled training samples, as well as a new test point $X$.  We also study the fundamental information-theoretic limits of estimation with classical and ordinal supervision by establishing lower (and upper) bounds on the minimax risk.
	Letting $\eta$ denote various problem dependent parameters, which we introduce more formally in the sequel, the minimax risk:
	\begin{align}
	\label{eqn:minimax}
	\minimax(m,n; \eta) = \inf_{\widehat{f}} \sup_{f \in \mathcal{F}_{\eta}} R(\widehat{f},f),
	\end{align}
	provides an information-theoretic benchmark to assess the performance of an estimator.
	
	First, focusing on nonparametric regression, we develop a novel Ranking-Regression (\RR) algorithm for nonparametric regression that can leverage ordinal information, in addition to direct labels. We make the following contributions:
	\begin{itemize}
		\item To establish the usefulness of ordinal information in nonparametric regression, in Section~\ref{sec:perfect_rank} we consider the idealized setting where we obtain a perfect ordering of the unlabeled set. We show that the risk of the R$^2$ algorithm can be bounded with high-probability as $\widetilde{O}(m^{-2/3}+n^{-2/d})$\footnote{We use the standard big-O notation throughout this paper, and use $\widetilde{O}$ when we suppress log-factors, and $\Ologlog$ when we suppress log-log factors.}, where $m$ denotes the number of labeled samples and $n$ the number of ranked samples. To achieve an MSE of $\varepsilon$, the number of labeled samples required by \RR is \emph{independent} of the dimensionality of the input features.
		This result establishes that sufficient ordinal information of high quality can allow us to effectively circumvent
		the curse of dimensionality.
		\item In Sections~\ref{sec:noisy_rank} and~\ref{sec:with_comp} we analyze the \RR algorithm when using either a noisy ranking of the samples or noisy pairwise comparisons between them. For noisy ranking, we show that the MSE is bounded by $\widetilde{O}(m^{-2/3}+\sqrt{\nu}+n^{-2/d})$, where $\nu$ is the Kendall-Tau distance between the true and noisy ranking. As a corollary, we combine \RR with algorithms for ranking from pairwise comparisons \citep{braverman2009sorting} to obtain an MSE of $\widetilde{O}(m^{-2/3}+n^{-2/d})$ when $d\geq 4$, when the comparison noise is bounded.
	\end{itemize}
	Turning our attention to the setting of linear regression with ordinal information, we develop an algorithm that uses
	active (or adaptive) comparison queries in order to reduce both the label complexity and total query complexity. 
	\begin{itemize}
		\item In Section \ref{sec:lreg} we develop and analyze an interactive learning algorithm that estimates a linear predictor
		using both labels and comparison queries. Given a budget of $\numlabel$ label queries and $\numcomp$ comparison queries, we show that MSE of our algorithm decays at the rate of $\widetilde{O}\left(1/\numlabel+\exp(-\numcomp/\dimension)\right)$. Once again we see that when sufficiently many comparisons are available, the label complexity of our algorithm is \emph{independent} of the dimension $\dimension$. 
		\end{itemize}
		To complement these results we also 
		give information-theoretic lower bounds to characterize the fundamental limits of combining ordinal and standard supervision.	
		\begin{itemize}
		\item 
 For nonparametric regression, we show that the \RR algorithm is optimal up to log factors 
		both when it has access to a perfect ranking, as well as when the comparisons have bounded noise. For linear regression, we show that the rate of $O(1/\numlabel)$, and the total number of queries, are not improvable up to log factors.
	\end{itemize}
	On the empirical side we comprehensively evaluate the algorithms we propose, on simulated and real datasets.
	\begin{itemize}
	\item We use simulated data, and study the performance of our algorithms as we vary the noise in the labels and in the ranking. 
	\item Second, we consider a practical application of predicting people's ages from photographs. For this dataset we obtain comparisons using people's apparent ages (as opposed to their true biological age).
	\item Finally, we curate a new dataset using crowdsourced data obtained through Amazon's Mechanical Turk. We provide workers AirBnB listings and attempt to estimate property asking prices. We obtain both direct and comparison queries for the listings, and also study the time taken by workers to  provide these different types of feedback. We find that, our algorithms which combine 
direct and comparison queries are able to achieve significantly better accuracy than standard supervised regression methods, for a fixed time budget.
	\end{itemize}

	\subsection{Related Work}
	
	As a classical way to reduce the labeling effort, active learning has been mostly focused on classification \citep{hanneke2009theoretical}. For regression, it is known that, in many natural settings, the 
	ability to make active queries does not 
	lead to improved rates over passive baselines. For example, \cite{chaudhuri2015convergence} shows that when the underlying model is linear, the ability to make active queries can only improve the rate of convergence by a constant factor, and leads to no improvement when the feature distribution is spherical Gaussian. In \cite{willett2006faster}, the authors show a similar result that in nonparametric settings, active queries do not lead to a faster rate for smooth regression functions except when the regression function is piecewise smooth.
	
	There is considerable work in supervised and unsupervised learning on incorporating additional types of feedback beyond labels. For instance, \cite{zou2015,poulis2017} study the benefits of different types of ``feature feedback'' in clustering and supervised learning respectively. \cite{dasgupta2017learning} consider learning with partial corrections, where the user provides corrective feedback to the algorithm when the algorithm makes an incorrect prediction. \cite{ramaswamy2018consistent} consider multiclass classification where the user can choose to abstain from making predictions at a cost.
		
	There is also a vast literature on models and methods for analyzing pairwise comparison data \citep{tsukida2011analyze,agarwal2017learning}, like the classical Bradley-Terry \citep{bradley1952rank} and Thurstone \citep{thurstone1927law} models. In these literature, the typical focus is on ranking or quality estimation for a fixed set of objects. In contrast, we focus on function estimation and the resulting models and methods are quite different. We build on the work on ``noisy sorting'' \citep{braverman2009sorting,shah2016stochastically} to extract a consensus ranking from noisy pairwise comparisons.
	
	Perhaps the closest in spirit to this work are the two recent papers \citep{kane2017active,xu2017noise} that consider binary classification with ordinal information. These works differ from the proposed approach in their focus on classification, emphasis on active querying strategies and binary-search-based methods.
	
	Given ordinal information of sufficient fidelity, the problem of nonparametric regression is related to the problem of regression with shape constraints, or more specifically isotonic regression \citep{barlow1972statistical,zhang2002risk}. Accordingly, we leverage such algorithms in our work and we comment further on the connections in Section~\ref{sec:perfect_rank}. Some salient differences between this literature and
our work are that we design methods that work in a semisupervised setting, and further that our target is an unknown
$d$-dimensional (smooth) regression function as opposed to a
univariate shape-constrained function.

	
	
	The rest of this paper is organized as follows. In Section~\ref{sec:npreg}, we consider the problem of combining direct and comparison-based labels for nonparametric regression, providing upper and lower bounds for both noiseless and noisy ordinal models. In Section~\ref{sec:lreg}, we consider the problem of combining adaptively chosen direct and comparison-based labels for linear regression. In Section~\ref{sec:experiments}, we turn our attention to an empirical evaluation of our proposed methods on real and synthetic data.  
	Finally, we conclude in Section~\ref{sec:discussion} with a number of additional results and open directions.
In the Appendix we present detailed proofs for various technical results and a few additional supplementary experimental results.

	\section{Nonparametric Regression with Ordinal Information\label{sec:npreg}}
	We now provide analysis for nonparametric regression. First, in Section~\ref{sec:background_np} we establish the problem setup and notations. Then, we introduce the \RR algorithm in Section~\ref{sec:perfect_rank} and analyze it under perfect rankings. Next, we analyze its performance for noisy rankings and comparisons in Sections~\ref{sec:noisy_rank} and \ref{sec:with_comp}.

	\subsection{Background and Problem Setup\label{sec:background_np}}
	We consider a nonparametric regression model with random design, i.e.\ we suppose first that we are given access to an unlabeled set $\mathcal{U} = \{X_1,\ldots,X_n\}$, where $X_i \in \mathcal{X} \subset [0,1]^d$, and $X_i$ are drawn i.i.d. from a distribution $\mathbb{P}_{X}$ and we assume that $\mathbb{P}_{X}$ has a density $p$ which is upper and lower bounded as $0 < p_{\min} \leq p(x) \leq p_{\max}$ for $x \in \mathcal{X}$. Our goal is to estimate a function $f: \mathcal{X} \mapsto \mathbb{R}$, where
	following classical work \citep{gyorfi2006distribution,tsybakov2009introduction} we assume that $f$
	is bounded in $[-M,M]$ and belongs to a H\"{o}lder ball $\mathcal{F}_{s,L}$, with $0 < s \leq 1$:
	\begin{align*}
	\mathcal{F}_{s,L} = \left\{f : |f(x) - f(y)| \leq L \|x - y\|_2^s, \forall~x,y \in \mathcal{X} \right\}.
	\end{align*}
	For $s = 1$ this is the class of Lipschitz functions. We discuss the estimation of smoother functions (i.e.\ the case when $s > 1$) in~Section \ref{sec:discussion}. We obtain two forms of supervision:
	\begin{enumerate}
		\item {\bf Classical supervision: } For a (uniformly) randomly chosen subset $\mathcal{L} \subseteq \mathcal{U}$ of size $m$ (we assume throughout that $m \leq n$ and focus on settings where $m \ll n$) we make noisy observations of the form:
		\begin{align*}
		y_i = f(X_i) + \epsilon_i,~~i \in \mathcal{L},
		\end{align*}
		$\text{where }\epsilon_i\text{ are i.i.d.}$ Gaussian with $\mathbb{E}[\epsilon_i] = 0, \text{Var}[\epsilon_i] = 1$. We denote the indices of the labeled samples as $\{t_1,\ldots, t_m\} \subset \{1,\ldots,n\}$.\item {\bf Ordinal supervision: } For the given dataset 
		$\{X_1,\ldots,X_n\}$ 
		we let $\pi$ denote the \emph{true ordering}, i.e. $\pi$ is a permutation of $\{1,\ldots,n\}$ such that for $i,j \in \{1,\ldots,n\}$, with $\pi(i) \leq \pi(j)$ we have that $f(X_{i}) \leq f(X_{j})$. We assume access to one of the following types of ordinal supervision:
		
		(1) We assume that we are given access to a noisy ranking $\widehat{\pi}$, i.e. for a parameter $\nu \in [0,1]$ we assume that the Kendall-Tau distance between $\widehat{\pi}$ and the true-ordering is upper-bounded as:
		\begin{align}
		\label{eqn:kt}
		\sum_{i,j\in [n]} \mathbb{I}[(\pi(i)-\pi(j))(\widehat{\pi}(i)-\widehat{\pi}(j)) < 0]\leq \nu n^2. 
		\end{align}
		
		(2) For each pair of samples $(X_i,X_j)$, with $i < j$ we obtain a comparison $Z_{ij}$ where for some constant $\lambda > 0$:
		\begin{align}
		\label{eqn:nc}
		\mathbb{P}(Z_{ij} = \mathbb{I}(f(X_i) > f(X_j))) \geq \frac{1}{2} + \lambda.
		\end{align}
		As we discuss in Section~\ref{sec:with_comp}, it is straightforward to extend our results to a setting where only a randomly chosen subset of all pairwise comparisons are observed.

	\end{enumerate}
	Although classic supervised learning learns a regression function with labels only and without ordinal supervision, we note that learning cannot happen in the opposite way: That is, we cannot consistently estimate the underlying function with only ordinal supervision and without labels: the underlying function is only identifiable up to certain monotonic transformations. 
	
	As discussed in Section \ref{sec:contribution}, our goal is to design an estimator $\estfunc$ of $\regfunc$ that achieves the minimax mean squared error (\ref{eqn:minimax}), when $\regfunc\in \mathcal{F}_{s,L}$. We conclude this section recalling a well-known fact: given access to only classical supervision the minimax risk $\mathfrak{M}(m; \eta) = \Theta(m^{-\frac{2s}{2s+d}}),$ suffers from an exponential curse of dimensionality.
	
	\subsection{\label{sec:perfect_rank} Nonparametric Regression with Perfect Ranking}
	To ascertain the value of ordinal information we first
	consider an idealized setting, where we are given a perfect ranking $\pi$ of the unlabeled samples in~$\mathcal{U}$. We present our Ranking-Regression (\RR) algorithm with performance guarantees in Section \ref{sec:upper_main}, and a lower bound for it in Section \ref{sec:lower_bound_main} which shows that \RR is optimal up to log factors.
	
	\subsubsection{\label{sec:upper_main}Upper bounds for the \RR Algorithm}
	
	\begin{algorithm}[htb]
		\caption{\RR: Ranking-Regression}
		\begin{algorithmic}[1]		
			\REQUIRE  Unlabeled data $\mathcal{U}=\{X_1,\ldots,X_n\}$, a labeled set 
			of size $m$ and corresponding labels, i.e. samples $\{(X_{t_1},y_{t_1}),\ldots,(X_{t_m},y_{t_m})\},$ and a ranking $\widehat{\pi}$.
			\STATE Order elements in $\mathcal{U}$ as $(X_{\widehat{\pi}(1)},\ldots,X_{\widehat{\pi}(n)})$.
			\STATE Run isotonic regression (see~\eqref{eqn:isotonic}) on
			$\{y_{\labsam{1}},\ldots,y_{\labsam{m}}\}$. Denote the estimated values by $\{\widehat{y}_{\labsam{1}},\ldots,\widehat{y}_{\labsam{m}}\}$.
			\STATE For $i=1,2,\ldots,n$, let $\widetilde{i}=t_k$, where $\widehat{\pi}(\labsam{k})$ is the largest value 
			such that $\widehat{\pi}(\labsam{k})\leq \widehat{\pi}(i), k=0,1,\ldots,m$, and $\widetilde{i} = \star$ if no such $\labsam{k}$ exists.
			Set \begin{align*}
			\widehat{y}_i = \begin{cases} \widehat{y}_{\widetilde{i}}~~&\text{if}~~\widetilde{i} \neq \star \\
			0~~&\text{otherwise}.\end{cases}
			\end{align*}
			\ENSURE Function $\widehat{f} =~$NearestNeighbor$(\{(X_i,\widehat{y}_i)\}_{i=1}^n)$.
		\end{algorithmic}
		\label{algo:compreg_iso}
	\end{algorithm}	
	
	Our nonparametric regression estimator is described in Algorithm \ref{algo:compreg_iso} and Figure \ref{fig:RR_des}. We first rank all the samples in $\mathcal{U}$ according to the (given or estimated) permutation $\widehat{\pi}$. We then run isotonic regression \citep{barlow1972statistical} on the labeled samples in $\mathcal{L}$ to de-noise them and borrow statistical strength. In more detail, we solve the following program to de-noise the labeled samples:
	\begin{align}
	\label{eqn:isotonic}
	\begin{split}
	&\min_{\{\widehat{y}_{\widehat{\pi}(t_1)},\ldots,\widehat{y}_{\widehat{\pi}(t_m)}\}} \sum_{k=1}^m (\widehat{y}_{\widehat{\pi}(t_k)}-y_{\widehat{\pi}(t_k)})^2\\
	&~~~~~~\text{s.t. }~~~\widehat{y}_{t_k} \leq \widehat{y}_{t_l}~~\forall~(k,l)~~\text{such that}~~\widehat{\pi}(t_k) < \widehat{\pi}(t_l)  \\
	&~~~~~~~~~~~~~~-M \leq \{\hat{y}_{\widehat{\pi}(t_1)},\ldots, \hat{y}_{\widehat{\pi}(t_m)} \} \leq M.
	\end{split}
	\end{align}
	We introduce the bounds $\{M,-M\}$ in the above program to ease our analysis. In our experiments, we simply set $M$ to be a large positive value so that it has no influence on our estimator.
	We then leverage the ordinal information in $\widehat{\pi}$ to impute regression estimates for the unlabeled samples in $\mathcal{U}$, by assigning each unlabeled sample the value of the nearest (de-noised) labeled sample which has a smaller function value according to $\widehat{\pi}$. Finally, for a new test point, we use the imputed (or estimated) function value of the nearest neighbor in $\mathcal{U}$.
	\begin{figure}
		\centering
		\includegraphics[width=0.65\textwidth]{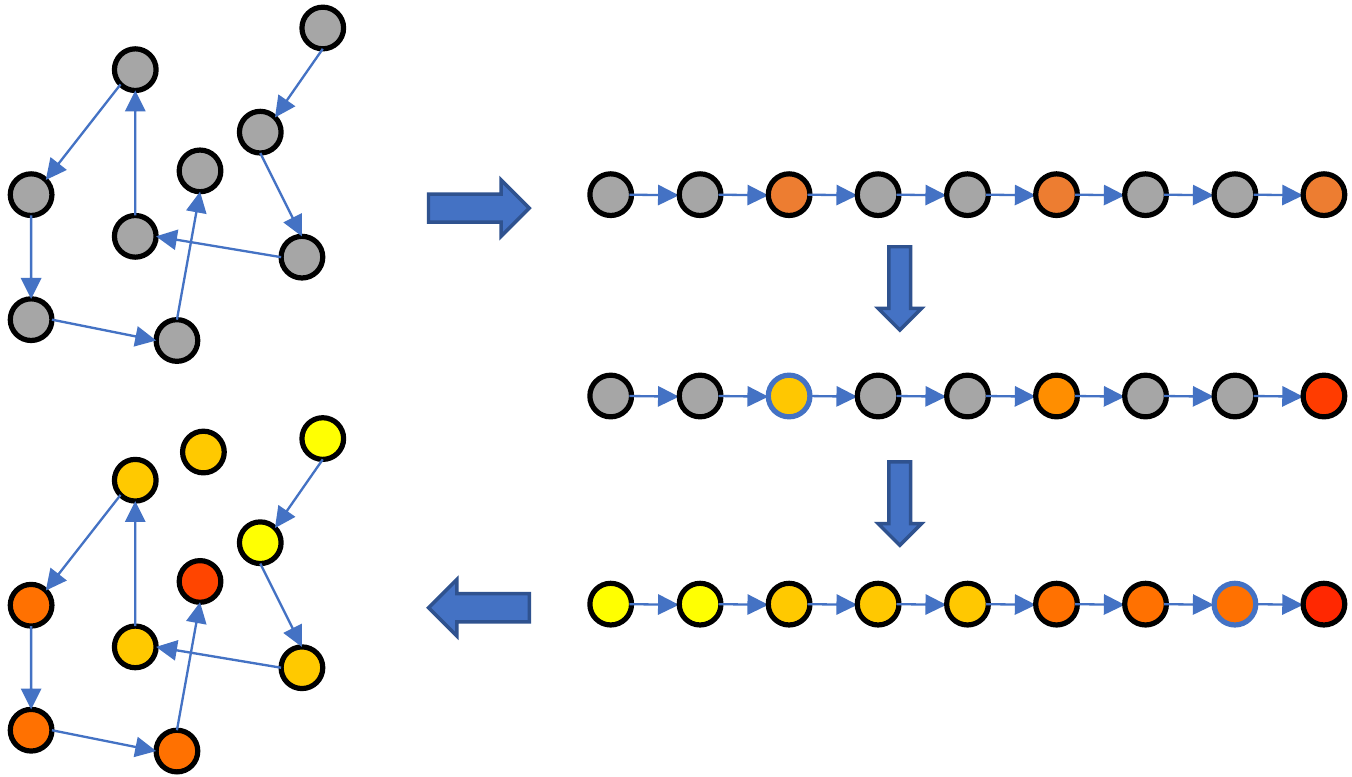}
		\caption{\label{fig:RR_des} Top Left: A group of unlabeled points are ranked according to function values using ordinal information only. Top~Right: We obtain function values of $m$ randomly chosen samples. Middle~Right: The values are adjusted using isotonic regression. Bottom~Right: Function values of other unlabeled points are inferred. Bottom~Left: For a new point, the estimated value is given by the nearest neighbor in $\mathcal{U}$.}
	\end{figure}

	In the setting where we use a perfect ranking the following theorem characterizes the performance of \RR:
	\begin{theorem}
		\label{thm:mainupper}
		For constants $C_1,C_2 > 0$ the MSE of $\widehat{f}$ is bounded by
		\begin{align*}
		\mathbb{E}(\widehat{f}(X)-\regfunc(X))^2\leq&~ C_1m^{-2/3}\log^2 n\log m + C_2 n^{-2s/d}.
		\end{align*}
	\end{theorem}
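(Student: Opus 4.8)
The plan is to bound the error at a fresh test point $X$ by a \emph{nearest-neighbor approximation} term plus an \emph{imputation} term. Since the output of \RR is $\widehat{f}(X)=\widehat{y}_{\mathrm{NN}(X)}$, where $\mathrm{NN}(X)$ is the index in $\mathcal{U}$ of the nearest sample to $X$,
\[
\mathbb{E}\big(\widehat{f}(X)-f(X)\big)^2 \le 2\,\mathbb{E}\big(\widehat{y}_{\mathrm{NN}(X)}-f(X_{\mathrm{NN}(X)})\big)^2 + 2\,\mathbb{E}\big(f(X_{\mathrm{NN}(X)})-f(X)\big)^2 .
\]
The second term is handled by a textbook nearest-neighbor argument: because $\mathbb{P}_X$ has a density bounded below by $p_{\min}$ on $\mathcal{X}\subseteq[0,1]^d$, one has $\mathbb{P}(\|X-X_{\mathrm{NN}(X)}\|_2>t)\le e^{-c\,n t^d}$ and hence $\mathbb{E}\|X-X_{\mathrm{NN}(X)}\|_2^2\lesssim n^{-2/d}$; since $f\in\mathcal{F}_{s,L}$ with $s\le 1$, Jensen's inequality gives $\mathbb{E}\big(f(X_{\mathrm{NN}(X)})-f(X)\big)^2\le L^2\,\mathbb{E}\|X-X_{\mathrm{NN}(X)}\|_2^{2s}\lesssim L^2 n^{-2s/d}$, which produces the $C_2 n^{-2s/d}$ term.

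For the imputation term, recall $\widehat{y}_i=\widehat{y}_{\widetilde{i}}$, where $\widetilde{i}$ is the labeled sample immediately preceding $i$ in the (here perfect) ranking, or $\widehat{y}_i=0$ if no such sample exists. Listing the labeled indices in increasing $\pi$-order as $l_1,\dots,l_m$ and letting block $k$ be the set of indices $i$ with $\widetilde{i}=l_k$, perfectness of the ranking forces $f(X_{l_k})\le f(X_i)\le f(X_{l_{k+1}})$ for every $i$ in block $k$ (with the convention $f(X_{l_{m+1}}):=M$), so $(\widehat{y}_i-f(X_i))^2\le 2(\widehat{y}_{l_k}-f(X_{l_k}))^2+2\Delta_k^2$ with $\Delta_k:=f(X_{l_{k+1}})-f(X_{l_k})\ge 0$ and, crucially, $\sum_k\Delta_k\le 2M$. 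Integrating over the independent test point and writing $q_i=\mathbb{P}_X(\mathrm{NN}(X)=i)$ for the Voronoi mass of $X_i$ and $Q_k=\sum_{i\in\text{block }k} q_i$,
\[
\mathbb{E}_X\big(\widehat{y}_{\mathrm{NN}(X)}-f(X_{\mathrm{NN}(X)})\big)^2 \le 2\sum_k Q_k\big(\widehat{y}_{l_k}-f(X_{l_k})\big)^2 + 2\sum_k Q_k\Delta_k^2 + M^2\!\!\sum_{i:\,\widetilde{i}=\star}\!\! q_i .
\]

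The technical core is a uniform control of the geometric weights. A union bound over a uniformly random size-$m$ subset of $\{1,\dots,n\}$ shows that with probability $\ge 1-n^{-1}$ every block has at most $O\big((n/m)\log n\big)$ elements (and so does the set of indices ranked below all labeled samples); combined with the high-probability fact that every Voronoi cell of $n$ i.i.d.\ samples drawn from a density in $[p_{\min},p_{\max}]$ has $\mathbb{P}_X$-mass $O(\log n/n)$, this yields $\max_k Q_k=O\big((\log^2 n)/m\big)$ and $\sum_{\widetilde{i}=\star}q_i=O\big((\log^2 n)/m\big)$ on a high-probability event; on its complement one uses the crude bounds $Q_k\le 1$, $|\widehat{y}_{l_k}|\le M$, $\Delta_k\le 2M$, whose contribution is negligible once the failure probability is made polynomially small. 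For the first sum, $\sum_k Q_k(\widehat{y}_{l_k}-f(X_{l_k}))^2\le\max_k Q_k\cdot\sum_k(\widehat{y}_{l_k}-f(X_{l_k}))^2$, and since $f(X_{l_1}),\dots,f(X_{l_m})$ is a monotone sequence in $[-M,M]$ observed with unit-variance Gaussian noise, the classical risk bound for isotonic regression \citep{zhang2002risk}, in its high-probability form, gives $\sum_k(\widehat{y}_{l_k}-f(X_{l_k}))^2=O(m^{1/3}\log m)$, so this sum is $O\big(m^{-2/3}\log^2 n\log m\big)$. For the second sum, $\sum_k Q_k\Delta_k^2\le \max_k Q_k\cdot 2M\sum_k\Delta_k\le 4M^2\max_k Q_k=O\big((\log^2 n)/m\big)$, which is of lower order than $m^{-2/3}$; the $\star$-term is bounded likewise. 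Collecting the pieces gives the claimed bound.

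The main obstacle is that the isotonic estimate, the block decomposition and the Voronoi masses are all functions of the same randomness (the labeled subset together with the sample locations), so the last step must be organized with care: one establishes the purely geometric bounds ($\max_k Q_k$, the block sizes, the $\star$-mass) on a high-probability event that does not involve the label noise, applies the isotonic concentration bound on top of it, and dispatches the rare complementary event by worst-case arguments. The fact that the ranking is \emph{perfect} is precisely what makes this decoupling clean: it guarantees $f(X_{l_1})\le\dots\le f(X_{l_m})$ and the within-block sandwiching for \emph{every} realization, so no event concerning the ranking itself needs to be tracked.
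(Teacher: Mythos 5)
Your proposal is correct and follows essentially the same route as the paper's proof: the same nearest-neighbor/imputation decomposition, the same two geometric high-probability events (per-cell Voronoi mass $O(\log n/n)$ and labeled-index spacings $O((n/m)\log m)$, combined into $\max_k Q_k = O(\log^2 n \cdot \log m/m)$), the same sandwiching of within-block errors by consecutive labeled gaps with the telescoping bound $\sum_k \Delta_k \le 2M$, and the same appeal to the $O(m^{1/3})$ cumulative isotonic risk of \cite{zhang2002risk}. The only cosmetic difference is that you invoke a high-probability form of the isotonic bound, whereas the paper uses the expectation version together with $\mathbb{E}[\,\cdot\mid\mathcal{E}_0] \le \mathbb{E}[\,\cdot\,]/\mathbb{P}(\mathcal{E}_0) \le 2\,\mathbb{E}[\,\cdot\,]$ to handle the dependence you flag.
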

	Before we turn our attention to the proof of this result, we examine some consequences. \\
	{\bf Remarks: } (1) Theorem \ref{thm:mainupper} shows a surprising dependency on the sizes of the labeled and unlabeled sets ($m$ and $n$). The MSE of nonparametric regression using only the labeled samples is $\Theta(m^{-\frac{2s}{2s+d}})$ which is exponential in $d$ and makes nonparametric regression impractical in high-dimensions. Focusing on the dependence on $m$, Theorem \ref{thm:mainupper} improves the rate to  $m^{-2/3}\text{polylog}(m, n)$, which is no longer exponential. By using enough ordinal information we can  avoid the curse of dimensionality. \\
	(2) On the other hand, the dependence on $n$ (which dictates the amount of ordinal information needed) is still exponential. This illustrates that ordinal information is most beneficial when it is copious. We show in Section \ref{sec:lower_bound_main} that this is unimprovable in an information-theoretic sense. \\
	(3) Somewhat surprisingly, we also observe that the dependence on $n$ is faster than the $n^{-\frac{2s}{2s+d}}$ rate that would be obtained if all the samples were labeled. Intuitively, this is because of the noise in labels: Given the $m$ labels along with the (perfect) ranking, the difference between two neighboring labels is typically very small (around $1/m$).
		Therefore, any unlabeled points in $\mathcal{U}\setminus \mathcal{L}$ will be restricted to an interval much narrower  
		than the constant noise in direct labels.
 In the case where all points are labeled (i.e., $m=n$), the MSE is of order $n^{-2/3}+n^{-2s/d}$, also slightly better rate than when no ordinal information is available. On the other hand, the improvement is stronger when $m \ll n$.\\
	(4) Finally, we also note in passing that the above theorem provides an upper bound on the minimax risk in~\eqref{eqn:minimax}.
	
	\vspace{0.2cm}
	
	\begin{proofsketch}
	
		We provide a brief outline and defer technical details to the Supplementary Material.
		For a randomly drawn point $X\in \mathcal{X}$, we denote by $X_\alpha$ the nearest neighbor of $X$ in $\mathcal{U}$.
		We decompose the MSE as
		\begin{align}
		\mathbb{E}\left[(\widehat{f}(X)-\regfunc(X) )^2\right]
		\leq 2 \mathbb{E}\left[(\widehat{f}(X)-\regfunc(X_\alpha) )^2\right]+2\mathbb{E}\left[(\regfunc(X_\alpha)-\regfunc(X))^2 \right].\label{eqn:decomp_alpha}
		\end{align}
		The second term corresponds roughly to the finite-sample bias induced by the discrepancy between the function value at $X$ and the closest labeled sample. We use standard sample-spacing arguments (see \cite{gyorfi2006distribution}) to bound this term. This term contributes the $n^{-2s/d}$ rate to the final result.
		For the first term, we show a technical result in the Appendix (Lemma~\ref{lem:siva_tech}). Without loss of  generality suppose $f\big(X_{t_{1}}\big)\leq \cdots f\big(X_{t_{m}}\big)$. 
		By conditioning on a probable configuration of the points and enumerating over choices of the nearest neighbor we find that roughly (see Lemma~\ref{lem:siva_tech} for a precise statement):
		\begin{align}
		\label{eqn:tenam}
		&\mathbb{E}\left[(\widehat{f}(X)-\regfunc(X_\alpha) )^2\right] \leq \Big(\frac{\log^2 n \log m}{m}\Big) \times \nonumber \\
		&\mathbb{E}\Big(\sum_{k=1}^m \Big(\big(\widehat{f}(X_{t_k})-f(X_{t_k}) \big)^2+\big(f\big(X_{t_{k+1}}\big)-f\big(X_{t_k}\big)\big)^2 \Big)\Big).
		\end{align}
		Intuitively, these terms are related to the estimation error arising in isotonic regression (first term) and a term that captures the variance of the function values (second term). When the function $f$ is bounded, we show that the dominant term is the isotonic estimation error which is on the order of $m^{1/3}$. Putting these pieces together we obtain the theorem.
	\end{proofsketch}
	
	\subsubsection{\label{sec:lower_bound_main} Lower bounds with Ordinal Data}
	To understand the fundamental limits on the usefulness of ordinal information, as well as to study the optimality of the \RR algorithm we now turn our attention to establishing lower bounds on the minimax risk.
	In our lower bounds we choose $\mathbb{P}_{\mathcal{X}}$ to be uniform on $[0,1]^d$.
	Our estimators $\widehat{f}$ are functions of the labeled samples: $\{(X_{t_1},y_{t_1}),\ldots,(X_{t_m},y_{t_m})\},$ the set $\mathcal{U} = \{X_1,\ldots,X_n\}$ and the true ranking $\pi$.
	We have the following result:
	\begin{theorem}
		\label{thm:lowerboundMSE}
		For any estimator $\widehat{f}$ we have that for a universal constant $C > 0$,
		\begin{align*} \inf_{\widehat{f}} \sup_{f \in \mathcal{F}_{s,L}}\mathbb{E}\left[(f(X)-\widehat{f}(X))^2\right]\geq C(m^{-2/3}+n^{-2s/d}). \end{align*}
	\end{theorem}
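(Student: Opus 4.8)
The plan is to prove the two terms separately, as lower bounds on the same minimax quantity, and then combine them via $\max(a,b)\ge\tfrac12(a+b)$; each term comes from Assouad's lemma / the hypercube method (see e.g.~\cite{tsybakov2009introduction}). The only nonstandard feature is that the estimator is handed the \emph{true} ranking $\pi$ of $\mathcal{U}$, so in both constructions I will make sure a single fixed (canonical, with ties broken by index) ranking is simultaneously valid for \emph{every} function in the hard family; then revealing $\pi$ conveys no information and the problem reduces to a standard random‑design lower bound. Throughout, $\mathbb{P}_X$ is uniform on $[0,1]^d$, so $L^2(\mathbb{P}_X)$ is just Lebesgue $L^2$.

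For the $n^{-2s/d}$ term I use a purely approximation‑theoretic construction (no label noise is needed). Partition $[0,1]^d$ into $N=\lceil cn\rceil$ axis‑parallel cubes of side $w\asymp n^{-1/d}$, and on cube $j$ place a bump $g_j(x)=h\,\phi\big((x-x^c_j)/w\big)$, where $\phi$ is a fixed $(s,c_\phi)$‑H\"older bump supported strictly inside the unit cube with $\phi(0)=1$, and $h\asymp L\,n^{-s/d}$ is taken as large as possible subject to $g_j\in\mathcal{F}_{s,L}$ and $|g_j|\le M$. For $\tau\in\{0,1\}^N$ set $f_\tau=\sum_j\tau_j g_j$; since the $g_j$ have disjoint supports separated by flat gaps, $f_\tau\in\mathcal{F}_{s,L}$. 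Condition on $\mathcal{U}$; choosing $c$ so the expected number of empty cubes exceeds $n$, with probability at least $\tfrac12$ at least $n$ cubes contain no point of $\mathcal{U}$. For such an empty cube $j$, the coordinate $\tau_j$ does not change $f_\tau$ at any point of $\mathcal{U}$, hence affects neither the $m$ noisy labels nor the ranking; so any estimator restricted to cube $j$ is independent of $\tau_j$, and averaging over $\tau_j\sim\mathrm{Unif}\{0,1\}$ it incurs expected squared error at least $\tfrac14\|g_j\|_{L^2(\mathbb{P}_X)}^2\asymp n^{-1-2s/d}$ on that cube. Summing over the $\gtrsim n$ empty cubes and taking expectation over $\mathcal{U}$ gives a lower bound of order $n^{-2s/d}$.

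For the $m^{-2/3}$ term I build a family that is hard \emph{even with} the ranking, by encoding an isotonic‑regression instance along the first coordinate. Split $[0,1]$ into $k$ equal intervals and set $f_\tau(x)=g_0(x_1)+\delta\sum_{j=1}^k\tau_j\psi_j(x_1)$ with $\tau\in\{-1,1\}^k$, where $\psi_j$ is a fixed bump on the $j$‑th interval with $\|\psi_j\|_\infty\le1$, and $g_0$ is linear with slope of order $k\delta$, large enough that $f_\tau$ is \emph{strictly} increasing in $x_1$ for every $\tau$. Then sorting $\mathcal{U}$ by its $x_1$‑coordinate is a valid ranking for all $f_\tau$, and one checks $f_\tau\in\mathcal{F}_{s,L}$ and $|f_\tau|\le M$ as long as $k\delta\lesssim\min\{M,\,L\,d^{-(1-s)/2}\}$. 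Condition on $(\mathcal{U},\mathcal{L})$ on the high‑probability event that every interval receives $\Theta(m/k)$ labeled points; flipping one $\tau_j$ perturbs only those labels, with KL divergence $\asymp(m/k)\delta^2$, which is $O(1)$ once $\delta^2\asymp k/m$. Assouad's lemma then lower bounds the risk by a constant times $k\cdot(\delta^2/k)=\delta^2$; optimizing the constraints $\delta\lesssim 1/k$ and $\delta^2\lesssim k/m$ gives $k\asymp m^{1/3}$ and $\delta^2\asymp m^{-2/3}$, hence a lower bound of order $m^{-2/3}$.

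The routine parts — the bump constructions, the KL bookkeeping, and the Assouad computation — are standard; the delicate point is accommodating the estimator's access to the true ranking. In the $n^{-2s/d}$ construction this is exactly what forces the perturbations onto cubes containing \emph{no} unlabeled point (rather than merely no labeled point), necessitating the counting/concentration step for empty cubes; in the $m^{-2/3}$ construction it forces the perturbed functions to stay strictly monotone along the chosen coordinate, and that constraint is precisely what caps the perturbation at $\delta\lesssim 1/k$ and yields the $m^{-2/3}$ rate rather than something faster. A secondary nuisance is that the labeled set is a random subset of a random unlabeled set, so all per‑cube and per‑interval counts must first be controlled on a high‑probability event before conditioning and invoking Assouad.
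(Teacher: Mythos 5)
Your proposal is correct and follows essentially the same route as the paper: the $m^{-2/3}$ term via a hypercube of small bumps riding on a strictly increasing ramp in the first coordinate (so the revealed ranking is simultaneously valid for the whole family and hence uninformative), and the $n^{-2s/d}$ term via disjoint bumps placed on cells containing no point of $\mathcal{U}$ (so neither the labels nor the ranking constrain those coordinates). The only differences are cosmetic — you invoke Assouad where the paper uses Gilbert–Varshamov plus Fano, and you count empty cells probabilistically where the paper takes $2n$ cells so that at least $n$ are empty deterministically (a simplification you may as well adopt, since ``expected number of empty cells exceeds $n$'' does not by itself give your claimed probability-$\tfrac12$ event).
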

	Comparing with the result in Theorem~\ref{thm:mainupper} we conclude that the \RR algorithm is optimal up to log factors, when the ranking is noiseless.
	\begin{proofarg}{Proof Sketch}
		We establish each term in the lower bound separately. Intuitively, for the $n^{-2s/d}$ lower bound we consider the case when all the $n$ points are labeled perfectly (in which case the ranking is redundant) and show that even in this setting the MSE of any estimator is at least $n^{-2s/d}$ due to the finite resolution of the sample.	
		
		To prove the $m^{-2/3}$ lower bound we construct a novel packing set of functions in the class $\mathcal{F}_{s,L}$, and use information-theoretic techniques (Fano's inequality) to establish the lower bound.
		The functions we construct are all increasing functions, and as a result the ranking $\pi$ provides no additional information for these functions, easing the analysis. Figure~\ref{fig:lb_proof} contrasts the classical construction for lower bounds in nonparametric regression (where tiny bumps are introduced to a reference function) with our construction where we additionally ensure the perturbed functions are all increasing. To complete the proof, we provide bounds on the cardinality of the packing set we create, as well as bounds on the Kullback-Leibler divergence between the induced distributions on the labeled samples. We provide the technical details in Section~\ref{sec:proof_lb_np_perfect}.
		
	\end{proofarg}
	\begin{figure}
		\centering
		\begin{subfigure}[b]{0.5\textwidth}
			\centering
			\includegraphics[width=\textwidth]{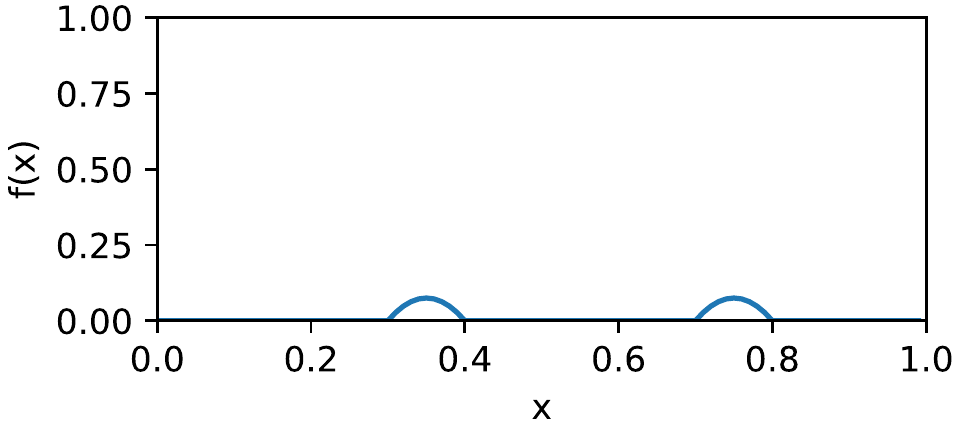}
		\end{subfigure}%
		\hfill
		\begin{subfigure}[b]{0.5\textwidth}
			\centering
			\includegraphics[width=\textwidth]{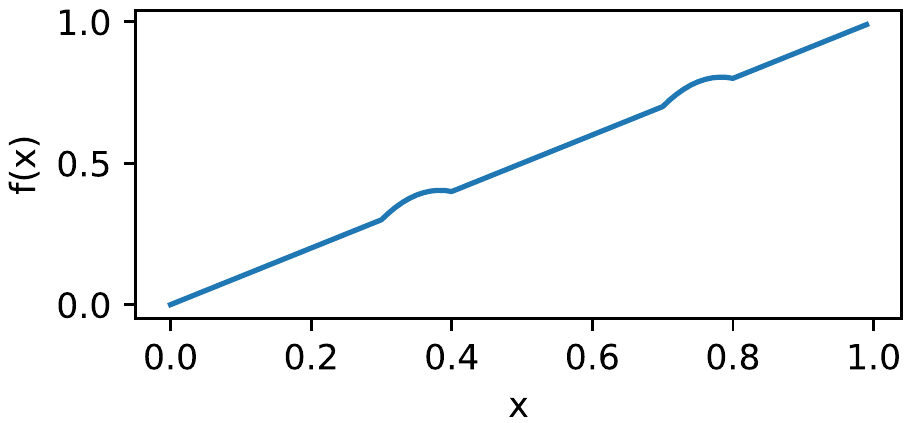}
		\end{subfigure}
		
		\caption[Graphical description]{Original construction for nonparametric regression in 1-d (left), and our construction (right).\label{fig:lb_proof}}

	\end{figure}
	
	\subsection{\label{sec:noisy_rank} Nonparametric Regression using Noisy Ranking}
	In this section, we study the setting where the ordinal information is noisy. We focus here on the 
	setting where as in equation~\eqref{eqn:kt} we obtain a ranking $\widehat{\pi}$ whose Kendall-Tau
	distance from the true ranking $\pi$ is at most $\nu n^2$. We show that the \RR algorithm is quite robust
	to ranking errors and achieves an MSE of $\widetilde{O}(m^{-2/3}+\sqrt{\nu}+n^{-2s/d})$. We establish a complementary lower bound of $\widetilde{O}(m^{-2/3}+\nu^2+n^{-2s/d})$ in Section~\ref{sec:lower_agn}.
	
	\subsubsection{\label{sec:noisy_perm} Upper Bounds for the \RR Algorithm}
	
	We characterize the robustness of \RR to ranking errors, i.e. when $\widehat{\pi}$ satisfies the condition in~\eqref{eqn:kt}, in the following theorem:	
	\begin{theorem}
		\label{thm:agn_upper}
		
		For constants $C_1, C_2 > 0$, the MSE of the \RR estimate $\widehat{f}$ is bounded by
		\begin{align*}
		\mathbb{E}[(\widehat{f}(X)-\regfunc(X))^2]\leq C_1\left(\log^2 n\log m \left(m^{-2/3}+\sqrt{\nu}\right)\right) + C_2 n^{-2s/d}.
		\end{align*}
	\end{theorem}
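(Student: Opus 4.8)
\begin{proofsketch}{x}
The plan is to mirror the proof of Theorem~\ref{thm:mainupper}, isolating the extra error created by the fact that $\widehat{\pi}\neq\pi$. Writing $X_\alpha$ for the nearest neighbour of a test point $X$ in $\mathcal{U}$, we begin from the decomposition~\eqref{eqn:decomp_alpha}. The term $\mathbb{E}[(f(X_\alpha)-f(X))^2]$ is a pure sample-spacing quantity independent of the ranking, so the argument of Theorem~\ref{thm:mainupper} applies verbatim and yields the $C_2 n^{-2s/d}$ piece. Everything new lives in $\mathbb{E}[(\widehat{f}(X)-f(X_\alpha))^2]$: now the true function values are not monotone along the $\widehat{\pi}$-order used by the isotonic step, and the labeled point $X_{\widetilde\alpha}$ whose de-noised label is imputed at $X_\alpha$ need not satisfy $f(X_{\widetilde\alpha})\le f(X_\alpha)$.

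The first step is a noisy-ranking analogue of Lemma~\ref{lem:siva_tech}: conditioning on a typical configuration of $\mathcal{U}$ and enumerating over the candidate nearest neighbours exactly as in~\eqref{eqn:tenam}, it suffices (up to lower-order terms) to bound
\[
\frac{\log^2 n\log m}{m}\;\mathbb{E}\!\left[\,\|\widehat{y}-f_{\mathcal{L}}\|_2^2\;+\;\sum_{k=1}^{m}\big(f(X_{s_{k+1}})-f(X_{s_k})\big)^2\right],
\]
where $X_{s_1},\dots,X_{s_m}$ are the labeled samples listed in $\widehat{\pi}$-order and $f_{\mathcal{L}}=(f(X_{s_1}),\dots,f(X_{s_m}))$. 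Let $\mathcal{C}_{\widehat{\pi}}\subset[-M,M]^m$ be the cone of sequences monotone along $\widehat{\pi}$ and put $\widehat{\theta}:=\Pi_{\mathcal{C}_{\widehat{\pi}}}f_{\mathcal{L}}$. Since $\widehat{y}=\Pi_{\mathcal{C}_{\widehat{\pi}}}(f_{\mathcal{L}}+\epsilon)$, non-expansiveness of the projection and the triangle inequality give
\[
\|\widehat{y}-f_{\mathcal{L}}\|_2^2\;\lesssim\;\big\|\Pi_{\mathcal{C}_{\widehat{\pi}}}(\widehat{\theta}+\epsilon)-\widehat{\theta}\big\|_2^2\;+\;\mathrm{dist}^2\!\big(f_{\mathcal{L}},\mathcal{C}_{\widehat{\pi}}\big).
\]
Conditioned on $(\mathcal{L},\widehat{\pi})$ the first term is the risk of \emph{correctly specified} isotonic regression for the bounded monotone signal $\widehat{\theta}$, hence $\widetilde{O}(m^{1/3})$ by classical results~\citep{zhang2002risk}; the approximation error $\mathrm{dist}^2(f_{\mathcal{L}},\mathcal{C}_{\widehat{\pi}})$ is the genuinely new object.

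The crux is a deterministic estimate for bounded sequences: if $\theta\in[-M,M]^m$ has $I$ inversions (pairs $k<l$ with $\theta_k>\theta_l$) then its $\ell_2$-distance to the monotone cone is $\widetilde{O}(M^2(1+\sqrt{I}))$ --- a bounded sequence with few inversions is $\ell_2$-close to monotone, and the configuration of two adversarially ordered equal blocks already displays the $\sqrt{I}$ scaling, so it is tight. Reordering $f_{\mathcal{L}}$ by $\widehat{\pi}$ identifies $\mathcal{C}_{\widehat{\pi}}$ with the usual monotone cone and makes the relevant $I$ equal to the number of pairs of $\mathcal{L}$ on which $\widehat{\pi}$ and $\pi$ disagree; since $\mathcal{L}$ is a uniform size-$m$ subset of $\mathcal{U}$ and, by~\eqref{eqn:kt}, at most $\nu n^2$ of the $\binom{n}{2}$ pairs are inverted, $\mathbb{E}[I]\le\nu m^2$ and so $\mathbb{E}\sqrt{I}\le m\sqrt{\nu}$ by Jensen (a high-probability bound $I=\widetilde{O}(\nu m^2)$ is also available, since sampling-without-replacement membership indicators are negatively associated). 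A parallel inversion-counting argument --- decomposing the total variation of $f$ along $\widehat{\pi}$ into a base term present even for a perfect ranking (which contributes only lower-order terms) plus a correction governed by $I$ --- shows the gap sum $\sum_k(f(X_{s_{k+1}})-f(X_{s_k}))^2$ is $\widetilde{O}(M^2(1+\sqrt{I}))$ as well. Putting the pieces together, $\tfrac1m\big(\widetilde{O}(m^{1/3})+\mathbb{E}\,\mathrm{dist}^2+\mathbb{E}[\text{gap sum}]\big)\lesssim m^{-2/3}+M^2\sqrt{\nu}$ (using $1/m<m^{-2/3}$), which after reinstating the $\log^2 n\log m$ prefactor and absorbing $M^2$ into $C_1$ gives the claim. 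The main obstacle is the combinatorial estimate relating the number of inversions to the $\ell_2$ distance to the monotone cone (and the analogous control of the gap sum): the naive displacement- or deletion-based bounds are far too lossy when $\widehat{\pi}$ harbours a few long-range errors, so a blockwise / level-set argument is needed to obtain the $\sqrt{I}$ rate uniformly over $\widehat{\pi}$.
\end{proofsketch}
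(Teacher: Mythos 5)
Your proposal is correct and follows essentially the same route as the paper: the same decomposition as Theorem~\ref{thm:mainupper}, the reduction of the ranking error to the $\ell_2$-distance from the labeled values to the monotone cone (the paper's Theorem~\ref{thm:zhang_iso_agn} plays the role of your projection/non-expansiveness step), and your deterministic $\widetilde{O}\big(M^2\sqrt{I}\big)$ estimate is exactly the paper's Lemma~\ref{lemma:rankerr_to_valueerr}, proved there via an extreme-point (two-level) reduction plus the Spearman-footrule bound on displacements. The only cosmetic difference is that you count inversions within the labeled subset directly ($\mathbb{E}[I]\le \nu m^2$), whereas the paper applies the displacement bound to all $n$ points and restricts to $\mathcal{L}$ via the inclusion probability $m/n$; both yield the same $\sqrt{\nu}\,m$ contribution.
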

	{\bf Remarks: } (1) Once again we observe that in the regime where sufficient ordinal information is available, i.e. when $n$ is large, the rate no longer has an exponential dependence on the dimension $d$.\\
	(2) This result also shows that the \RR algorithm is inherently robust to noise in the ranking, and the mean squared error degrades gracefully as a function of the noise parameter $\nu$. We investigate the optimality of the $\sqrt{\nu}$-dependence in the next section. 
	
	We now turn our attention to the proof of this result.
	\begin{proofarg}{Proof Sketch}
		When using an estimated permutation $\widehat{\pi}$ the true function of interest $f$ is no longer 
		an increasing (isotonic) function with respect to $\widehat{\pi}$, and this results in a model-misspecification 
		\emph{bias}. 
		The core technical novelty of our proof is in relating the upper bound on
		the error in $\widehat{\pi}$ to an upper bound on this bias. Concretely, in the Appendix we show the following lemma:
		\begin{lemma}
			\label{lemma:rankerr_to_valueerr}
			For any permutation $\widehat{\pi}$ satisfying the condition in~\eqref{eqn:kt}
			\begin{align*}
			\sum_{i=1}^n (f(X_{\pi^{-1}(i)})-f(X_{\widehat{\pi}^{-1}(i)}))^2\leq 8M^2\sqrt{2\nu}n.
			\end{align*}
		\end{lemma}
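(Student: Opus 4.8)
The plan is to reduce the statement to a clean combinatorial inequality about permutations and then prove that inequality. First I would relabel the samples so that $\pi$ is the identity; writing $a_i := f(X_{\pi^{-1}(i)})$ we then have $a_1 \le a_2 \le \cdots \le a_n$ with each $a_i \in [-M,M]$. Setting $\sigma := \pi\circ\widehat\pi^{-1}$, a direct check gives $f(X_{\widehat\pi^{-1}(i)}) = a_{\sigma(i)}$, so the quantity to be bounded is exactly $\sum_{i=1}^n (a_i - a_{\sigma(i)})^2$. The sum in~\eqref{eqn:kt} runs over ordered pairs and the event it counts is symmetric in the two indices, so it equals twice the number $N$ of discordant pairs between $\pi$ and $\widehat\pi$, which is the number of inversions of $\sigma$; hence the hypothesis gives $N \le \nu n^2/2$. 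The goal becomes: bound $\sum_i (a_i - a_{\sigma(i)})^2$ in terms of $M$ and $N$.

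Next I would exploit boundedness to trade squares for absolute values, since $|a_i - a_{\sigma(i)}| \le 2M$ yields $\sum_i(a_i - a_{\sigma(i)})^2 \le 2M\sum_i |a_i - a_{\sigma(i)}|$. To control the resulting $\ell_1$ sum I would telescope over consecutive gaps: with $\delta_k := a_{k+1}-a_k \ge 0$ (so that $\sum_{k=1}^{n-1}\delta_k = a_n - a_1 \le 2M$) we have $|a_i - a_{\sigma(i)}| = \sum_k \delta_k\,\mathbb{I}[\min(i,\sigma(i)) \le k < \max(i,\sigma(i))]$, and swapping the order of summation gives $\sum_i |a_i - a_{\sigma(i)}| = \sum_{k=1}^{n-1}\delta_k\, c_k$, where $c_k$ is the number of indices $i$ for which $i$ and $\sigma(i)$ lie on opposite sides of the cut between positions $k$ and $k+1$. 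Since $\sum_k \delta_k \le 2M$, it then suffices to bound $\max_k c_k$.

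The heart of the argument — and the step I expect to be the main obstacle — is the claim $c_k \le 2\sqrt{N}$ for every $k$. Because $\sigma$ is a bijection of $[n]$, the number $p$ of indices $i \le k$ with $\sigma(i) > k$ equals the number of indices $j > k$ with $\sigma(j) \le k$, so $c_k = 2p$. For any such pair $(i,j)$ one has $i < j$ while $\sigma(i) > k \ge \sigma(j)$, i.e.\ $(i,j)$ is an inversion of $\sigma$; these $p^2$ inversions are pairwise distinct, so $p^2 \le N$ and therefore $c_k \le 2\sqrt{N}$. The point is that a cut crossed by many indices is forced to create quadratically many inversions, which is exactly what converts the \emph{linear} Kendall--Tau budget into a $\sqrt{\nu}$ rate; getting this conversion right (rather than, say, a trivial $c_k \le N$) is the crux.

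Putting the pieces together, $\sum_i(a_i - a_{\sigma(i)})^2 \le (2M)(2\sqrt N)(2M) = 8M^2\sqrt N \le 8M^2\sqrt{\nu n^2/2} = 4\sqrt 2\, M^2 n\sqrt\nu \le 8M^2\sqrt{2\nu}\,n$, the claimed bound (with a factor of two to spare). The only routine care needed is in checking the equivalence between discordant pairs of $(\pi,\widehat\pi)$ and inversions of $\sigma$ and in the bookkeeping of the telescoping identity, neither of which presents a genuine difficulty.
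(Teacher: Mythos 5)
Your proof is correct, and it takes a genuinely different route from the paper's. The paper's argument begins with an extremal reduction: viewing the sum $\sum_i(\theta_i-\theta_{(i)})^2$ as a quadratic in each coordinate, its maximum over $[-M,M]^n$ is attained with every value in $\{-M,M\}$, and monotonicity of the sorted sequence forces the maximizer to be a single step function; each summand is then $0$ or $4M^2$, and the problem reduces to counting the indices that cross that one step location. That count is controlled by importing the Diaconis--Graham inequality (Spearman footrule at most twice the Kendall tau distance) and summing displacement lower bounds over the crossing indices. You avoid both the extremal reduction and the footrule inequality: you trade squares for absolute values using boundedness, telescope over the consecutive gaps $\delta_k$ so that every cut appears weighted by its gap, and bound the number of indices crossing each cut directly by $2\sqrt{N}$ via the observation that $p$ up-crossings and $p$ down-crossings force $p^2$ distinct inversions of $\sigma=\pi\circ\widehat{\pi}^{-1}$. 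Your crossing-count lemma is in effect a localized, self-contained version of the footrule-to-Kendall conversion, and handling all cuts simultaneously (rather than only the single extremal cut) is what keeps the argument elementary; it also yields the slightly sharper constant $4M^2\sqrt{2\nu}\,n$. Both proofs ultimately hinge on the same phenomenon --- a cut crossed by many indices creates quadratically many discordant pairs --- so the $\sqrt{\nu}$ rate emerges for the same reason in each, and your bookkeeping of ordered versus unordered pairs in \eqref{eqn:kt} and of the correspondence between discordant pairs and inversions of $\sigma$ is accurate.
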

		Using this result we bound the minimal error of approximating
		an increasing sequence according to $\pi$ by an increasing sequence according to the estimated (and misspecified) ranking $\widehat{\pi}$. We denote this error on $m$ labeled points by $\Delta$, and using Lemma~\ref{lemma:rankerr_to_valueerr} we show that in expectation (over the random choice of the labeled set) 
		\begin{align*}
		\mathbb{E}[\Delta] \leq  8 M^2\sqrt{2\nu}m. 
		\end{align*}
		With this technical result in place
		we follow the same decomposition and subsequent steps before we arrive at the expression in equation~\eqref{eqn:tenam}. In this case, the first term for some constant $C > 0$ is bounded as:
		\begin{align*}
		\mathbb{E} \Big(\sum_{k=1}^m \big(\widehat{f}(X_{t_k})-f(X_{t_k}) \big)^2\Big) \leq 2\mathbb{E}[\Delta] + C m^{1/3},
		\end{align*}
		where the first term corresponds to the model-misspecification bias and the second corresponds to the usual isotonic regression rate. Putting these terms together in the decomposition in~\eqref{eqn:tenam} we obtain the theorem.
		
	\end{proofarg}
	
	In settings where $\nu$ is large \RR can be led astray by the ordinal information, and a standard nonparametric regressor can achieve the (possibly faster) $O\big(m^{-\frac{2s}{2s+d}}\big)$ rate by ignoring the ordinal information. 
	In this case, a simple and standard cross-validation procedure can combine the benefits of both methods: we estimate the regression function twice, once using \RR and once using k nearest neighbors, and choose the regression function that performs better on a held-out validation set. The following theorem shows guarantees for this method and an upper bound for the minimax risk (\ref{eqn:minimax}):
	\begin{theorem}
		\label{thm:upper_cv}
		Under the same assumptions as Theorem \ref{thm:agn_upper}, there exists an estimator $\widehat{f}$ such that
		\begin{align*}
		\mathbb{E}[(\widehat{f}(X)-\regfunc(X))^2]= \widetilde{O}\left(m^{-2/3}+\min\{\sqrt{\nu},m^{-\frac{2s}{2s+d}} \} + n^{-2s/d}\right).
		\end{align*}		
	\end{theorem}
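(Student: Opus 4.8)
The estimator is the cross-validated choice between \RR and $k$-nearest neighbors sketched above, and the plan is to show (a) each of the two candidates achieves its own rate, and (b) validation-based selection incurs only an overhead negligible compared with $m^{-2/3}$. First I would split the labeled set $\mathcal{L}$ uniformly at random into a training set $\mathcal{L}_{\mathrm{tr}}$ of size $m_1=\lceil m/2\rceil$ and a validation set $\mathcal{L}_{\mathrm{val}}$ of size $m_2=\lfloor m/2\rfloor$, and then \emph{set aside} the validation points so that they are used neither as labeled nor as unlabeled data in what follows; write $\mathcal{U}'=\mathcal{U}\setminus\{X_t:t\in\mathcal{L}_{\mathrm{val}}\}$, which has size $n'=n-m_2\ge n/2$ (using $m\le n$). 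A short combinatorial check shows $\mathcal{L}_{\mathrm{tr}}$ is still a uniformly random size-$m_1$ subset of $\mathcal{U}'$. On $(\mathcal{U}',\{y_t\}_{t\in\mathcal{L}_{\mathrm{tr}}},\widehat{\pi}|_{\mathcal{U}'})$ I would run Algorithm~\ref{algo:compreg_iso} to obtain $\widehat{f}_1$, and a $k$-NN regressor with $k\asymp m_1^{2s/(2s+d)}$ truncated to $[-M,M]$ to obtain $\widehat{f}_2$. Since the Kendall--Tau error of $\widehat{\pi}$ restricted to $\mathcal{U}'$ is at most $\nu n^2\le 4\nu (n')^2$, Theorem~\ref{thm:agn_upper} gives $\mathbb{E}[R(\widehat{f}_1,f)]=\widetilde{O}(m^{-2/3}+\sqrt{\nu}+n^{-2s/d})$, while the classical analysis of $k$-NN regression over H\"older balls \citep{gyorfi2006distribution} gives $\mathbb{E}[R(\widehat{f}_2,f)]=O(m^{-2s/(2s+d)})$.

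Next I would output $\widehat{f}=\arg\min_{j\in\{1,2\}}\frac{1}{m_2}\sum_{t\in\mathcal{L}_{\mathrm{val}}}(\widehat{f}_j(X_t)-y_t)^2$. The key structural observation is that $\widehat{f}_1,\widehat{f}_2$ depend only on $\mathcal{U}'$ and the labels on $\mathcal{L}_{\mathrm{tr}}$, hence are independent of the validation pairs $(X_t,y_t)$, whose features $\{X_t\}_{t\in\mathcal{L}_{\mathrm{val}}}$ are i.i.d.\ $\mathbb{P}_X$. Conditioning on $\widehat{f}_1,\widehat{f}_2$ and subtracting the candidate-independent term $\frac{1}{m_2}\sum_t\epsilon_t^2$, the validation objective of $\widehat{f}_j$ is an average of $m_2$ i.i.d.\ random variables with mean $R(\widehat{f}_j,f)$; because $\widehat{f}_j\in[-M,M]$ and $\epsilon_t\sim N(0,1)$, each is sub-exponential (the sub-exponential, rather than sub-Gaussian, tail coming from the Gaussian noise times the bounded residual) with variance $O(M^2R(\widehat{f}_j,f))$. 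The main work is a Bernstein-type \emph{relative} deviation bound: with probability $\ge 1-\delta$, each validation objective lies within $\tfrac12 R(\widehat{f}_j,f)+O(M^2\log(1/\delta)/m_2)$ of its mean for $j=1,2$ simultaneously. A routine manipulation then yields $R(\widehat{f},f)\le 3\min_j R(\widehat{f}_j,f)+O(M^2\log(1/\delta)/m_2)$ on that event; taking $\delta=1/m$, bounding $R(\widehat{f},f)\le 4M^2$ off the event, and taking expectations gives $\mathbb{E}[R(\widehat{f},f)]\le 3\min_j\mathbb{E}[R(\widehat{f}_j,f)]+\widetilde{O}(1/m)$.

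Finally I would assemble the pieces: $\widetilde{O}(1/m)=\widetilde{O}(m^{-2/3})$, and the elementary inequality $\min\{a+b+c,d\}\le a+\min\{b,d\}+c$ applied with $a=m^{-2/3},b=\sqrt{\nu},c=n^{-2s/d},d=m^{-2s/(2s+d)}$ converts $\min_j\mathbb{E}[R(\widehat{f}_j,f)]$ into the claimed rate $\widetilde{O}(m^{-2/3}+\min\{\sqrt{\nu},m^{-2s/(2s+d)}\}+n^{-2s/d})$. The only delicate step is the relative-deviation bound for the validation loss under Gaussian (sub-exponential) noise: it is exactly what makes the model-selection overhead $\widetilde{O}(1/m)$, which is dominated by $m^{-2/3}$, instead of the $\widetilde{O}(m^{-1/2})$ a crude Hoeffding bound would give — a term that would be too large to match the stated rate. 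Everything else is bookkeeping given Theorem~\ref{thm:agn_upper} and standard $k$-NN guarantees.
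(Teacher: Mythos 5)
Your proposal is correct and follows essentially the same route as the paper: cross-validating between \RR and $k$-NN, and — crucially — replacing the naive $\widetilde{O}(1/\sqrt{m})$ Hoeffding analysis of the validation loss with a Bernstein-type relative deviation bound whose variance term is proportional to $R(\widehat{f}_j,f)$ (the paper invokes the Craig--Bernstein inequality on $Z_i=(y_i-\widehat{f}(X_i))^2-(y_i-f(X_i))^2$, following Barron's squared-loss argument), yielding the $\widetilde{O}(1/m)$ selection overhead. The only cosmetic differences are that you compare two candidates with the oracle choice of $k$ while the paper unions over all $k\in[m]$ (costing only a log factor), and that you make the sample splitting and the independence of the validation features explicit where the paper simply posits a separate validation set.
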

	The main technical difficulty in analyzing the model-selection procedure we propose in this context is that a na\"{i}ve analysis of the procedure, using classical tail bounds to control the deviation between the empirical risk and the population risk, results in an excess risk of $\widetilde{O}(1/\sqrt{m}).$ However, this rate would overwhelm the $\widetilde{O}(m^{-2/3})$ bound that arises from  isotonic regression.
	We instead follow a more careful analysis 
	outlined in the work of~\cite{barron1991} which exploits properties of the squared-loss to obtain an excess risk bound of $\widetilde{O}(1/m).$ We provide a detailed proof in the Appendix for convenience and note that in our setting, the $y$ values are not assumed to be bounded and this necessitates some minor modifications to the original proof \citep{barron1991}.

	\subsubsection{\label{sec:lower_agn}Lower bounds with Noisy Ordinal Data}
	In this section we turn our attention to lower bounds in the setting with noisy ordinal information. In particular, we construct a permutation  
	$\widehat{\pi}$ such that for a pair $(X_i,X_j)$ of points randomly chosen from $\mathbb{P}_{\mathcal{X}}$:
	\begin{align*}
	\mathbb{P}[(\pi(i)-\pi(j))(\widehat{\pi}(i)-\widehat{\pi}(j))<0]\leq \nu.
	\end{align*}
	We analyze the minimax risk of an estimator which has access to this noisy permutation $\widehat{\pi}$, in addition to the labeled and unlabeled sets (as in~Section~\ref{sec:lower_bound_main}).
	\begin{theorem}
		\label{thm:lowerboundMSE_agn}
		There is a constant $C > 0$ such that for any estimator $\widehat{f}$ taking input $X_1,\ldots,X_n, y_1,\ldots,y_m$ and $\widehat{\pi}$,
		\begin{align*}
		\inf_{\widehat{f}} \sup_{f \in \mathcal{F}_{s,L}} \mathbb{E}\big(f(X)-\widehat{f}(X)\big)^2\geq C(m^{-\frac{2}{3}}+\min\{\nu^2,m^{-\frac{2}{d+2}}\}+n^{-2s/d}).
		\end{align*}
	\end{theorem}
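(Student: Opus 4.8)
The plan is to prove the three summands $m^{-2/3}$, $\min\{\nu^2,m^{-2/(d+2)}\}$ and $n^{-2s/d}$ by three separate hard sub-families; since a minimax lower bound only requires one bad instance, showing $\inf_{\widehat f}\sup_f R\ge A$, $\ge B$ and $\ge C$ individually yields $\inf_{\widehat f}\sup_f R\ge \max\{A,B,C\}\ge\tfrac13(A+B+C)$. The first and third terms are essentially inherited from Theorem~\ref{thm:lowerboundMSE}: if we hand the estimator the \emph{true} ordering, $\widehat\pi=\pi$, this is a legal noisy ranking (inversion probability $0\le\nu$), so the $m^{-2/3}$ packing of increasing functions (for which $\pi$ is uninformative) and the $n^{-2s/d}$ finite-resolution argument apply verbatim. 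All the new work is the middle term $\min\{\nu^2,m^{-2/(d+2)}\}$.

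For that term I would fix a strictly increasing base $f_0(x)=c\,x_1$ with $c$ small enough that $f_0\in\mathcal F_{s,L}$ with room to spare, and take the adversarial noisy ranking to be $\widehat\pi_0$, the permutation that sorts the $X_i$ by their first coordinate --- crucially the \emph{same} permutation for every function in the family, so the estimator extracts nothing from it. Then I tile $[0,1]^d$ into $K\asymp h^{-d}$ cubes of side $h$, and for $\omega\in\{0,1\}^K$ set $f_\omega=f_0+\sum_Q\omega_Q\phi_Q$, where $\phi_Q(x)=\delta\,\psi\big((x_1-a_Q)/h\big)$ on cube $Q$ is a height-$\delta$ bump depending only on $x_1$, vanishing on $\partial Q$, with $\|\psi'\|_\infty\le1$. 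Two observations fix the scalings. First, choosing $h\asymp\delta/c$ makes $f_\omega$ \emph{monotone in $x_1$ on each cube}, so no pair of points in a common cube is ever inverted relative to $\widehat\pi_0$; this cube-wise Lipschitz-type constraint $\delta\lesssim c\,h$ is exactly what will turn the smoothness exponent into the Lipschitz value. Second, a pair in distinct cubes is inverted only when $|f_0(X_i)-f_0(X_j)|\le\delta$, i.e.\ $|X_{i,1}-X_{j,1}|\le\delta/c$, which under the uniform design has probability at most $2\delta/c$; hence the noise budget $\nu$ is met whenever $\delta\le c\nu/2$.

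To convert the packing into a risk bound I would run a Varshamov--Gilbert pruning to obtain $N=2^{\Omega(K)}$ codewords pairwise $\Omega(K)$-separated in Hamming distance, hence $\Omega(\delta^2)$-separated in $L^2(\mathbb P_X)$ (using $\|\phi_Q\|_2^2\asymp\delta^2 h^d$ and $Kh^d\asymp1$), and then apply Fano's inequality. Because $\mathcal U$, the random labeled subset, and $\widehat\pi_0$ all have $f$-independent distributions, the KL divergence between two hypotheses collapses to $\tfrac m2\|f_\omega-f_{\omega'}\|_{L^2(\mathbb P_X)}^2=O(m\delta^2)$ (this reduction uses that $\tfrac1n\sum_i(f_\omega-f_{\omega'})^2(X_i)$ concentrates around its mean, which is fine once $n$ is large enough that the $n^{-2s/d}$ term is not already dominant). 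Fano then needs $m\delta^2\lesssim K\asymp\delta^{-d}$, i.e.\ $\delta\lesssim m^{-1/(d+2)}$; taking $\delta\asymp\min\{c\nu,\,m^{-1/(d+2)}\}$ and $h\asymp\delta/c$ satisfies the monotonicity, Hölder, noise-budget and Fano constraints simultaneously, giving a minimax lower bound of order $\delta^2\asymp\min\{\nu^2,m^{-2/(d+2)}\}$. I expect the main obstacle to be precisely this simultaneous balancing in the moderate-$\nu$ regime: one must preserve global Hölder-$s$ continuity across cube boundaries, control \emph{both} within-cube inversions (via cube-wise monotonicity, which degrades the exponent to $2/(d+2)$) and cross-cube inversions, and still retain enough $L^2$ mass and a large enough packing for Fano --- and verifying these hold at a single choice of $(h,\delta)$, together with stitching the small-$n$ and large-$n$ regimes, is the delicate bookkeeping.
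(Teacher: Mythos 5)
Your overall strategy is sound and does establish the theorem as stated, but it takes a quantitatively different route from the paper's for the middle term. Both proofs share the same skeleton: the $m^{-2/3}$ and $n^{-2s/d}$ terms are inherited from Theorem~\ref{thm:lowerboundMSE} by handing the estimator an uninformative ranking, the hard family for the $\nu$-term is built on an increasing base function proportional to $x_1$ with the adversarial $\widehat{\pi}$ sorting by the first coordinate, and the risk bound comes from Varshamov--Gilbert plus Fano with $\mathrm{KL}\lesssim m\,\mathbb{E}[(f_\omega-f_{\omega'})^2]$. Where you differ is in how the inversion budget is met. You keep the bumps spread over all of $[0,1]^d$ and shrink their \emph{amplitude} to $\delta$, so that an inversion forces $|X_{i,1}-X_{j,1}|\le\delta/c$ and the pairwise inversion probability is $O(\delta)$; this forces $\delta\lesssim\nu$ and yields exactly $\min\{\nu^2,m^{-2/(d+2)}\}$. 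The paper instead keeps $u^d$ bumps with $u\asymp m^{1/(d+2)}$ but confines them to a shrunken cube $[0,t]^d$ (height and width $\asymp th$), so an inversion requires \emph{both} points to land in $[0,t]^d$ \emph{and} in the same $x_1$-slice, giving inversion probability $\asymp t^{2d}/u$ --- quadratic rather than linear in the localization parameter. This buys the strictly stronger intermediate bound $\min\{\nu^{\frac{d+2}{2d}}m^{\frac{1}{2d}},1\}\,m^{-\frac{2}{d+2}}$ (Theorem~\ref{thm:lowerboundMSE_agn_strong}), from which the stated $\min\{\nu^2,m^{-\frac{2}{d+2}}\}$ follows as a corollary; your construction proves only the stated form, which is all the theorem asks for.

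One concrete defect in your construction: a bump $\phi_Q(x)=\delta\,\psi((x_1-a_Q)/h)$ that ``depends only on $x_1$'' cannot vanish on the whole boundary $\partial Q$ of a $d$-dimensional cube --- it would be discontinuous (jump of size up to $\delta$) across the faces of $Q$ perpendicular to $x_2,\ldots,x_d$ whenever adjacent cubes carry different bits, violating membership in $\mathcal{F}_{s,L}$. You must taper the bump in all coordinates (e.g.\ a product or radial kernel supported on $Q$, as the paper does). This breaks your ``cube-wise monotonicity in $x_1$'' claim, since two points in the same cube at nearly the same $x_1$ but different transverse positions see different bump heights; but the repair is harmless because your cross-cube argument already covers all pairs: any inversion still requires $|X_{i,1}-X_{j,1}|\le\delta/c$ regardless of the bump's transverse profile, so the $O(\delta/c)$ noise budget, the $\delta^2h^d$ per-bump $L^2$ mass, and the Fano calculation all survive unchanged.
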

	Comparing this result with our result in Remark 3 following Theorem~\ref{thm:agn_upper}, our upper and lower bounds differ by the gap between $\sqrt{\nu}$ and $\nu^2$, in the case of Lipschitz functions ($s=1$).
	
	\begin{proofarg}{Proof Sketch} We focus on the dependence on $\nu$,
		as the other parts are identical to Theorem \ref{thm:lowerboundMSE}. 
		We construct a packing set of Lipschitz functions, and we subsequently construct a noisy comparison oracle $\widehat{\pi}$ which provides no additional information beyond the labeled samples.
		The construction of our packing set is inspired by the construction of standard lower bounds in nonparametric regression (see Figure~\ref{fig:lb_proof}), but we modify this construction to ensure that $\widehat{\pi}$ is uninformative. In the classical construction we divide $[0,1]^d$ into $u^d$ grid points, with $u=m^{1/(d+2)}$ and add a ``bump'' at a carefully chosen subset of the grid points. Here we instead divide $[0,t]^d$ into a grid with $u^d$ points, and add an increasing function along the first dimension, where $t$ is a parameter we choose in the sequel.
		
		We now describe the ranking oracle which generates the permutation $\widehat{\pi}$: we simply rank 
		sample points according to their first coordinate. 
		This comparison oracle only makes an error when both $x,x'$ lies in $[0,t]^d$, and both $x_1,x_1'$ lie in the same grid segment $[tk/u,t(k+1)/u]$ for some $k\in [u]$. So the Kendall-Tau error of the comparison oracle is $(t^d)^2 \times ((1/u)^2 \times u)=u t^{2d}$. We choose $t$ such that this value is less than $\nu$.
		Once again we complete the proof by lower bounding the cardinality of the packing-set for our stated choice of $t$, upper bounding the Kullback-Leibler divergence between the induced distributions and appealing to Fano's inequality.
	\end{proofarg}

	\subsection{\label{sec:with_comp}Regression with Noisy Pairwise Comparisons}
	In this section we focus on the setting where the ordinal information is obtained in the form of noisy pairwise comparisons, following equation~\eqref{eqn:nc}. We investigate a natural strategy of aggregating the pairwise comparisons to form a consensus ranking $\widehat{\pi}$ and then applying the \RR algorithm with this estimated ranking. We build on results from theoretical computer science, where such aggregation 
	algorithms are studied for their connections to sorting with noisy comparators. In particular, \citet{braverman2009sorting} study noisy sorting algorithms under the noise model described in~\eqref{eqn:nc} and establish the following result:
	\begin{theorem}[\cite{braverman2009sorting}]
		\label{thm:bound_noise_ranking}
		Let $\alpha>0$. There exists a polynomial-time algorithm using noisy pairwise comparisons between $n$ samples, that with probability $1-n^{-\alpha}$, returns a ranking $\widehat{\pi}$ such that for a constant $c(\alpha,\lambda) > 0$ we have that:
		\begin{align*}
		\sum_{i,j\in [n]} \mathbb{I}[(\pi(i)-\pi(j))(\widehat{\pi}(i)-\widehat{\pi}(j)) < 0]\leq c(\alpha,\lambda) n. 
		\end{align*}		
		Furthermore, if allowed a sequential (active) choice of comparisons, the algorithm queries at most $O(n\log n)$ pairs of samples.
	\end{theorem}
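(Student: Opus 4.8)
The plan is to prove this in three stages, following the strategy of~\cite{braverman2009sorting}. For an ordering $\sigma$ write $D(\sigma)$ for the number of ordered pairs on which $\sigma$ disagrees with the observed comparisons $Z$. Stage one shows that the \emph{minimum-disagreement ordering} (the maximum-likelihood estimate under a uniform-noise model) lies within Kendall-Tau distance $O(n)$ of the true ordering $\pi$ with probability at least $1-n^{-\alpha}$. Stage two designs a polynomial-time algorithm that, although it cannot compute the NP-hard exact minimizer, outputs an ordering $\widehat\pi$ with $D(\widehat\pi)\le D(\pi)$; together with stage one this confines $\widehat\pi$ to within Kendall-Tau distance $O(n)$ of $\pi$. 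Stage three notes that only $O(n\log n)$ comparisons are ever consulted if queries may be chosen adaptively.

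For stage one, fix an ordering $\sigma$ at Kendall-Tau distance exactly $k$ from $\pi$. On each of the $k$ pairs where $\sigma$ and $\pi$ disagree, $Z_{ij}$ agrees with $\pi$ -- hence disagrees with $\sigma$ -- with probability at least $\frac12+\lambda$ by~\eqref{eqn:nc}, while on all other pairs $\sigma$ and $\pi$ contribute equally to $D$; so $D(\sigma)-D(\pi)$ is a sum of $k$ independent $\{-1,+1\}$ terms of mean at least $2\lambda$, and Hoeffding gives $\mathbb{P}[D(\sigma)\le D(\pi)]\le\exp(-\lambda^2k/2)$. Since there are at most $\binom{n+k-1}{k}$ orderings at Kendall-Tau distance $k$ from $\pi$ (count Lehmer codes of weight $k$), a union bound yields $\mathbb{P}[\exists\,\sigma:\,d_{\mathrm{KT}}(\sigma,\pi)\ge k_0,\ D(\sigma)\le D(\pi)]\le\sum_{k\ge k_0}\binom{n+k-1}{k}\exp(-\lambda^2k/2)$, which is at most $n^{-\alpha}$ once $k_0=c(\alpha,\lambda)n$ for a large enough constant $c(\alpha,\lambda)$; equivalently, with that probability every ordering with $D(\sigma)\le D(\pi)$ is within Kendall-Tau distance $c(\alpha,\lambda)n$ of $\pi$. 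A local exchange argument in the same spirit shows, in addition, that any such near-optimal ordering displaces each \emph{individual} item by at most $O(\lambda^{-2}\log n)$ positions relative to $\pi$, since a larger displacement could be undone by a local move that lowers $D$ by more than its fluctuation.

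For stage two, first compute a coarse ranking $\sigma_0$ by sorting items according to their number of wins under $Z$; concentration of win-counts places every item within $O(\sqrt n\,\mathrm{polylog}\,n)$ positions of its true rank. Using the item-wise bound from stage one, a recursive refinement -- repeatedly re-sorting each narrow block of candidate items, as in~\cite{braverman2009sorting} -- contracts each item's candidate-position range to width $W=O(\lambda^{-2}\log n)$ after $O(\log\log n)$ rounds (each round roughly square-roots a range's size), while keeping $\pi$ inside the resulting localized class with high probability. A dynamic program then sweeps positions $1,\dots,n$ in the refined order, its state recording which of the $O(\log n)$ items whose ranges straddle the current position have already been placed and its transition cost counting the disagreements with $Z$ created by each placement; this has $2^{O(\log n)}=\mathrm{poly}(n)$ states, runs in polynomial time, and returns the $D$-minimizer over the localized class, so its output $\widehat\pi$ satisfies $D(\widehat\pi)\le D(\pi)$ and stage one finishes the argument. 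For stage three, in the adaptive setting $\sigma_0$ is produced by an insertion-based querying schedule touching $O(n\log n)$ pairs, and all later refinement and the dynamic program consult only comparisons between items within $O(\log n)$ positions of one another; the $O(n\log n)$ total then follows, with the precise schedule given in~\cite{braverman2009sorting}.

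The main obstacle is stage two: converting ``the combinatorial optimum is statistically accurate'' into ``a polynomial-time algorithm attains a comparable value.'' The two delicate ingredients are (a) the exchange argument certifying that near-optimal orderings keep every item within $O(\log n)$ positions of the truth -- this is precisely what makes an $O(\log n)$-wide dynamic-programming window, and hence polynomial rather than merely subexponential running time, sufficient -- and (b) checking that the dynamic program's incremental cost bookkeeping reconstructs $D(\sigma)$ up to an additive constant common to all orderings in the localized class, so that its minimizer genuinely competes against $\pi$.
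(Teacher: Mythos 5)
This theorem is not proved in the paper at all: it is imported verbatim from \cite{braverman2009sorting}, so there is no in-paper argument to compare your attempt against, and the relevant benchmark is the external proof. Your sketch does reproduce the architecture of the Braverman--Mossel argument: (i) concentration of the minimum-disagreement (maximum-likelihood) ordering, via a union bound over permutations counted by inversion number against a Hoeffding tail; (ii) a per-item displacement bound of $O(\lambda^{-2}\log n)$ that licenses a polynomial-size dynamic program over locally constrained orderings; and (iii) an adaptive schedule using $O(n\log n)$ comparisons. Stage one is essentially complete as written --- the Lehmer-code count $\binom{n+k-1}{k}$ does lose to $\exp(-\Omega(\lambda^2 k))$ once $k=c(\alpha,\lambda)n$ with $c$ large enough. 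The genuinely hard steps, however, are the ones you flag but do not carry out: the exchange argument for the per-item $O(\log n)$ displacement needs its own union bound over items and displacement lengths (it is not a one-line ``local move'' observation), and the recursive localization must be shown to retain $\pi$ in the feasible class at every round while the dynamic program's incremental cost agrees with $D(\sigma)$ up to a constant on that class; these two lemmas are where most of the technical work in \cite{braverman2009sorting} lives, so as a self-contained proof your outline is incomplete even though it is faithful as a reconstruction. One further caveat: your Hoeffding steps require the $Z_{ij}$ to be independent across pairs, which is an assumption of the Braverman--Mossel model rather than a consequence of \eqref{eqn:nc} alone, and is worth stating explicitly.
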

	Combining this result with our result on the robustness of \RR 
	we obtain an algorithm for nonparametric regression with access to noisy 
	pairwise comparisons with the following guarantee on its performance:
	\begin{corollary}
		\label{thm:bound_noise_reg} 
		For constants $C_1,C_2 > 0$, \RR with $\widehat{\pi}$ estimated as described above produces an estimator $\widehat{f}$ with MSE
		\begin{align*}
		\mathbb{E}\big(\widehat{f}(X)-f(X)\big)^2 \leq C_1 m^{-2/3}\log^2 n\log m+C_2 \max\{n^{-2s/d}, n^{-1/2} \log^2 n\log m \}.
		\end{align*}	
	\end{corollary}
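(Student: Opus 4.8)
The plan is to obtain the corollary as a direct composition of Theorem~\ref{thm:bound_noise_ranking} (noisy sorting) with Theorem~\ref{thm:agn_upper} (robustness of \RR to Kendall--Tau ranking error), together with a short probabilistic accounting to absorb the failure probability of the sorting step. First I would fix a reliability parameter, say $\alpha=1$, and run the aggregation algorithm of Theorem~\ref{thm:bound_noise_ranking} on the noisy comparisons $\{Z_{ij}\}$ from~\eqref{eqn:nc}. On the event $\mathcal{A}$ of probability at least $1-n^{-1}$, the returned permutation $\widehat{\pi}$ satisfies $\sum_{i,j\in[n]}\mathbb{I}[(\pi(i)-\pi(j))(\widehat{\pi}(i)-\widehat{\pi}(j))<0]\le c(\alpha,\lambda)\,n$, which is exactly the Kendall--Tau condition~\eqref{eqn:kt} with $\nu=c(\alpha,\lambda)/n$. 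Feeding this $\widehat{\pi}$ into \RR and invoking Theorem~\ref{thm:agn_upper} with this value of $\nu$ (using $\sqrt{\nu}=\sqrt{c(\alpha,\lambda)}\,n^{-1/2}$, and folding the constant $\sqrt{c(\alpha,\lambda)}$ into $C_1$) gives
\begin{align*}
\mathbb{E}\big[(\widehat{f}(X)-f(X))^2\,\mathbb{I}_{\mathcal{A}}\big]\le C_1\log^2 n\log m\,\big(m^{-2/3}+n^{-1/2}\big)+C_2\,n^{-2s/d}.
\end{align*}

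Next I would bound the contribution of the failure event $\mathcal{A}^c$. The point is that the \RR estimator is uniformly bounded: the box constraints in the isotonic program~\eqref{eqn:isotonic} force each de-noised value into $[-M,M]$, the imputation step assigns either such a value or $0$, and the final nearest-neighbor rule merely copies one of these, so $\widehat{f}$ takes values in $[-M,M]$ regardless of $\widehat{\pi}$. Since $f$ is also bounded by $M$, we have $(\widehat{f}(X)-f(X))^2\le 4M^2$ pointwise, hence $\mathbb{E}\big[(\widehat{f}(X)-f(X))^2\,\mathbb{I}_{\mathcal{A}^c}\big]\le 4M^2\,\mathbb{P}(\mathcal{A}^c)\le 4M^2 n^{-1}\le 4M^2\,n^{-1/2}\log^2 n\log m$.

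Finally I would add the two contributions, $\mathbb{E}[(\widehat{f}(X)-f(X))^2]=\mathbb{E}[(\widehat{f}(X)-f(X))^2\mathbb{I}_{\mathcal{A}}]+\mathbb{E}[(\widehat{f}(X)-f(X))^2\mathbb{I}_{\mathcal{A}^c}]$, and merge the three terms of order $n^{-1/2}\log^2 n\log m$, $n^{-2s/d}$ and $n^{-1}$ into a single $C_2'\max\{n^{-2s/d},\,n^{-1/2}\log^2 n\log m\}$ after relabeling constants, leaving the $C_1 m^{-2/3}\log^2 n\log m$ term untouched. This is exactly the claimed bound. I would also note in passing that only the Kendall--Tau guarantee of Theorem~\ref{thm:bound_noise_ranking} enters, so the same conclusion holds whether one uses its passive variant (querying all $\binom{n}{2}$ pairs) or its active variant (querying $O(n\log n)$ pairs).

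The one subtlety to handle with care is the interplay of the several independent-looking but not-quite-independent sources of randomness: the sorting oracle's output $\widehat{\pi}$ is a function of the comparison outcomes $Z_{ij}$, which in turn depend on $f(X_1),\dots,f(X_n)$, whereas the expectation in Theorem~\ref{thm:agn_upper} runs over the random labeled subset, the label noise, and the test point. The clean resolution is to observe that the proof of Theorem~\ref{thm:agn_upper}---via the deterministic Lemma~\ref{lemma:rankerr_to_valueerr} and the decomposition leading to~\eqref{eqn:tenam}---actually yields its bound conditionally on any realization of $\mathcal{U}$ and $\widehat{\pi}$ for which~\eqref{eqn:kt} holds, with the expectation taken only over the remaining (conditionally independent) randomness; integrating this conditional bound over the event $\mathcal{A}$ then gives the displayed inequality above. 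This bookkeeping is the only non-mechanical part; everything else is substitution and constant chasing.
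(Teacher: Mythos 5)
Your proposal is correct and follows exactly the route the paper takes: the paper itself states that this result is an immediate corollary of Theorems~\ref{thm:agn_upper} and~\ref{thm:bound_noise_ranking}, obtained by plugging $\nu = c(\alpha,\lambda)/n$ into the noisy-ranking bound and absorbing the $n^{-\alpha}$ failure probability using the boundedness of $\widehat{f}$ and $f$. Your additional care about conditioning on the event $\mathcal{A}$ and about the sources of randomness is a sound (and slightly more explicit) version of the same argument.
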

	{\bf Remarks: } \begin{enumerate}
		\item From a technical standpoint this result is an immediate corollary of Theorems~\ref{thm:agn_upper} and~\ref{thm:bound_noise_ranking}, but the extension is important from a practical standpoint. 
		The ranking error of $O(1/n)$ from the noisy sorting algorithm leads to an additional $\widetilde{O}(1/\sqrt{n})$ term in the MSE.  This error is dominated by the $n^{-2s/d}$ term if $d\geq 4s$, and in this setting the result in Corollary \ref{thm:bound_noise_reg} is also optimal up to log factors (following the lower bound in Section \ref{sec:lower_bound_main}).\\
		\item We also note that the analysis in~\cite{braverman2009sorting} extends in a straightforward way to a setting where only a randomly chosen subset of the pairwise comparisons are obtained. 
	\end{enumerate}
	
	\section{Linear Regression with Comparisons \label{sec:lreg}}
	In this section we investigate another popular setting for regression, that of fitting a linear predictor to data. 
	We show that when we have enough comparisons, it is possible to estimate a linear function even when $\numlabel \ll \dimension$, without making \emph{any} sparsity or structural assumptions. 
	
	For linear regression we follow a different approach when compared to the nonparametric setting we have studied thus far. By exploiting the assumption of linearity, we see that each comparison	
now translates to a constraint on the parameters of the linear model, 
and as a consequence we are able to use comparisons to obtain a good initial estimate. 
However, the unknown linear regression parameters are not fully identified by comparison. For instance, we observe that the two regression vectors $w$ and $2 \times w$ induce the same comparison results for any pairs $(X_1,X_2)$.
This motivates using direct measurements to estimate a global scaling of the unknown regression parameters, i.e the norm of regression vector $w^*$. 
Essentially by using comparisons instead of direct measurements to estimate the weights, we convert the regression problem to a \emph{classification} problem, and therefore can leverage existing algorithms and analyses 
from the passive/active classification literature. 


	We present our assumptions and notation for the linear setup in the next subsection. Then we give the algorithm along with its analysis in Section~\ref{sec:upperbounds_lr}. We also present information-theoretic lower bounds on the minimax rate in Section~\ref{sec:lowerbounds_lr}.
	

	\subsection{Background and Problem Setup\label{sec:background_lr}}
	
	Following some of the previous literature on estimating a linear classifier~\citep{awasthi2014power,awasthi2016learning}, we assume that the distribution $\distrX$ is isotropic and log-concave. In more detail, we assume that coordinates of $\featureRV$ are independent, centered around 0, have covariance $I_{\dimension}$; and that the log of the density function of $\featureRV$ is concave.
This assumption is satisfied by many standard distributions, for instance the uniform and Gaussian distributions~\citep{lovasz2007geometry}.
	We let $\ball{v}{r}$ denote the ball of radius $r$ around vector $v$. 
	Our goal is to estimate a linear function $f(X)=\inprod{\gtweight}{X}$, and for convenience we denote:
	\begin{align*}
	\gtweightnorm=\|\gtweight\|_2~~~\text{and}~~~\gtweightvec=\frac{\gtweight}{\|\gtweight\|_2}.
	\end{align*} Similar to the nonparametric case, we suppose that we have access to two kinds of supervision using labels and comparisons respectively. We represent these using the following two oracles:
	
	\begin{itemize}
		\item \textbf{Label Oracle:} We assume access to a label oracle $\labeloracle$, which takes a sample $X \in \mathbb{R}^\dimension$ and outputs a label $\labelRV = \inprod{\gtweight}{X}+\labelnoise$,
		with $\mathbb{E}[\labelnoise]=0$, $\text{Var}(\labelnoise)=\stdlabelnoise^2$. 
		
		\item \textbf{Comparison Oracle:} We also have access to a (potentially cheaper) comparison oracle $\comporacle$. For each query, the oracle $\comporacle$ receives a pair of samples $(\featureRV,\featureRV')\sim \distrX \times \distrX$, and returns a random variable $\compRV\in \{-1,+1\}$, where $\compRV=1$ indicates that the oracle believes that $f(\featureRV)>f(\featureRV')$, and $\compRV=-1$ otherwise. We assume that the oracle has agnostic noise\footnote{As we discuss in the sequel our model can also be adapted to the bounded noise model case of eqn. (\ref{eqn:nc}) using a different algorithm from active learning; see Section~\ref{sec:discussion} for details.} $\nu$, i.e.:
		\smsp
		\[\mathbb{P}(\compRV\ne \sign(\inprod{\gtweight}{\featureRV - \featureRV'}))\leq \nu. \]
		That is, for a randomly sampled triplet $(\featureRV,\featureRV',\compRV)$ the oracle is wrong with probability at most $\nu$. Note that the error of the oracle for a particular example $(\featureRV,\featureRV')=(\featureV,\featureV')$ can be arbitrary.
	\end{itemize}

	Given a set of unlabeled instances $\mathcal{U}=\{\featureRV_1,\featureRV_2,\ldots\}$ drawn from $\distrX$, we aim to estimate $\gtweight$ by querying $\labeloracle$ and $\comporacle$ with samples in $\mathcal{U}$, using a label and comparison budget of $\numlabel$ and $\numcomp$ respectively. Our algorithm can be either passive, active or semi-supervised; we denote the output of algorithm $\mathcal{A}$ by $\estweight=\mathcal{A}(\mathcal{U},\labeloracle, \comporacle)$. For an algorithm $\mathcal{A}$, we study the minimax risk (\ref{eqn:minimax}), which in this case can be written as

	\begin{align}
	\minimax(\numlabel,\numcomp)=\inf_{\mathcal{A}}\sup_{\gtweight} \mathbb{E}\left[\inprod{\gtweight-\estweight}{X}^2\right].\label{eqn:minimax_lr}
	\end{align}
	
	\newcommand{\clsalgo}{\mathcal{A}^c}
	\newcommand{\erralgo}{\varepsilon}
	
	\noindent Our algorithm relies on a linear classification algorithm $\clsalgo$, which we assume to be a proper classification algorithm (i.e. the output of $\clsalgo$ is a linear classifier which we denote by $\widehat{v}$).
	We let $\clsalgo(\mathcal{U}, \mathcal{O}, \numlabel)$ denote the output (linear) classifier of $\clsalgo$ when it uses the unlabeled data pool $\mathcal{U}$, the label oracle $\mathcal{O}$ and acquires $\numlabel$ labels. $\clsalgo$ can be either passive or active; in the former case the $m$ labels are randomly selected from $\mathcal{U}$, whereas in the latter case $\clsalgo$ decides which labels to query.
	We use $\erralgo_{\clsalgo}(m,\delta)$ to denote (an upper bound on) the $0/1$ error of the algorithm $\clsalgo$ when using $m$ labels, with $1-\delta$ probability, i.e.:
	\[\mathbb{P}[\err(\clsalgo(\mathcal{U},\numlabel))\leq \erralgo_{\clsalgo}(\numlabel,\delta)]\geq 1-\delta. \]
	We note that by leveraging the log-concave assumption on $P_X$, it is straightforward to translate guarantees on the 0/1 error to corresponding guarantees on the $\ell_2$ error $\|\widehat{v} - v^*\|_2$.


	We conclude this section by noting that the classical minimax rate for ordinary least squares (OLS) is of order $O(\dimension/\numlabel)$, where $\numlabel$ is the number of label queries. This rate cannot be improved by active label queries (see for instance \cite{chaudhuri2015convergence}).

	\subsection{Algorithm and Analysis \label{sec:upperbounds_lr}}
	
	Our algorithm, Comparison Linear Regression (CLR), is described in Algorithm \ref{algo:linreg}. 
	We first use the comparisons to construct a classification problem with samples $(\featureRV-\featureRV')$ and oracle $\comporacle$. Here we slightly overload the notation of $\comporacle$ that $\comporacle(X_i-X_j)=\comporacle(X_i,X_j)$. Given these samples we use an active linear classification algorithm to estimate a normalized weight vector $\estweightvec$.
	After classification, we use the estimated $\estweightvec$ along with actual label queries to estimate the norm of the weight vector $\estweightnorm$. Combining these results we obtain our final estimate $\estweight=\estweightnorm\cdot \estweightvec$.

	\begin{algorithm}[htb]
		\caption{Comparison Linear Regression (CLR)}
		\begin{algorithmic}[1]		
			\REQUIRE{comparison budget $\numcomp$, label budget $\numlabel$, unlabeled data pool $\mathcal{U}$, algorithm $\clsalgo$}
			\STATE Construct pairwise pool $\mathcal{U}'=\{X_1-X_2, X_3-X_4, \ldots, X_{n-1}-X_n \}$
			\STATE Run $\clsalgo(\mathcal{U}', \comporacle, \numcomp)$ and obtain a classifier with corresponding weight vector $\estweightvec$
			\STATE Query random samples $\{(\featureRV_i,\labelRV_i)\}_{i=1}^{\numlabel}$
			\STATE Let $\displaystyle \estweightnorm=\frac{\sum_{i=1}^\numlabel \inprod{\estweightvec}{\featureRV_i}\labelRV_i}{\sum_{i=1}^\numlabel \inprod{\estweightvec}{\featureRV_i}^2 }$.
			\ENSURE{$\displaystyle \estweight=\estweightnorm\cdot \estweightvec$.}
		\end{algorithmic}
		\label{algo:linreg}
	\end{algorithm}


	We have the following main result which relates the error of $\widehat{w}$ to the error of the classification algorithm $\clsalgo$.
	\begin{theorem}
		\label{thm:upper_lr}
		There exists some constants $C,M$ such that if $\numlabel>M$, the MSE of Algorithm \ref{algo:linreg} satisfies
		\begin{align*}
		\mathbb{E}[\inprod{\gtweight -\estweight}{\featureRV}^2]
		\leq \Ologlog\left(\frac{1}{\numlabel}+\log^2\numlabel\cdot \varepsilon_{\clsalgo}\left(n,\frac{1}{m}\right)+\nu^2\right).
		\end{align*}
	\end{theorem}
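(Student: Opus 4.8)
The plan is to reduce the prediction error to the squared Euclidean error $\|\gtweight-\estweight\|_2^2$ and then to bound the direction and the scale of $\estweight$ separately, treating the classification subroutine $\clsalgo$ as a black box. Since $\distrX$ is centered and isotropic, conditioning on the (random) output $\estweight$ gives $\mathbb{E}\big[\inprod{\gtweight-\estweight}{\featureRV}^2\mid\estweight\big]=(\gtweight-\estweight)^\top\mathbb{E}[\featureRV\featureRV^\top](\gtweight-\estweight)=\|\gtweight-\estweight\|_2^2$, so it suffices to bound $\mathbb{E}\|\gtweight-\estweight\|_2^2$ over the unlabeled pool and the comparison and label queries. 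Writing $\gtweight=\gtweightnorm\gtweightvec$, $\estweight=\estweightnorm\estweightvec$ with $\estweightvec$ a unit vector, and $\gamma=\inprod{\gtweightvec}{\estweightvec}$, one has the elementary identity
\begin{align*}
\|\gtweight-\estweight\|_2^2=(\gtweightnorm)^2(1-\gamma^2)+(\estweightnorm-\gtweightnorm\gamma)^2\le(\gtweightnorm)^2\|\gtweightvec-\estweightvec\|_2^2+(\estweightnorm-\gtweightnorm\gamma)^2,
\end{align*}
using $1-\gamma^2\le2(1-\gamma)=\|\gtweightvec-\estweightvec\|_2^2$. It therefore suffices to control the \emph{direction error} $\|\gtweightvec-\estweightvec\|_2^2$ and the \emph{residual norm error} $(\estweightnorm-\gtweightnorm\gamma)^2$.

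For the direction error I would use the classification guarantee. Note that the pairwise pool consists of i.i.d.\ copies of $\featureRV-\featureRV'$ with $\featureRV,\featureRV'\sim\distrX$ independent; by Pr\'ekopa's theorem this law is log-concave and symmetric, and it is isotropic up to the harmless scalar factor $2$. On these samples $\comporacle$ provides labels with agnostic noise $\nu$, so $\gtweightvec$ itself has $0/1$ error at most $\nu$, and hence $\clsalgo$ returns $\estweightvec$ with $\err(\estweightvec)\le\varepsilon_{\clsalgo}(\numcomp,\delta)$ with probability $1-\delta$. The triangle inequality for the $0/1$ loss then bounds the $\distrX$-probability that $\estweightvec$ and $\gtweightvec$ disagree by $\varepsilon_{\clsalgo}(\numcomp,\delta)+\nu$, and because $\distrX$ is isotropic and log-concave this disagreement probability is comparable to the angle between the two normals, which in turn is comparable to $\|\gtweightvec-\estweightvec\|_2$; on the complementary event we use the crude bound $\|\gtweightvec-\estweightvec\|_2\le2$. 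Taking $\delta=1/\numlabel$ and absorbing the logarithmic confidence slack yields $\mathbb{E}\|\gtweightvec-\estweightvec\|_2^2=\Ologlog\big(1/\numlabel+\log^2\numlabel\cdot\varepsilon_{\clsalgo}(\numcomp,1/\numlabel)+\nu^2\big)$.

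The residual norm error is the crux. I would condition on $\estweightvec$, which is built only from the pairwise pool and $\comporacle$ and is therefore independent of the fresh labeled pairs $\{(\featureRV_i,\labelV_i)\}_{i=1}^{\numlabel}$ used in Step~4; writing $\labelV_i=\gtweightnorm\inprod{\gtweightvec}{\featureRV_i}+\labelnoise_i$ and decomposing $\gtweightvec=\gamma\estweightvec+(\gtweightvec-\gamma\estweightvec)$ with $\gtweightvec-\gamma\estweightvec\perp\estweightvec$ and $\|\gtweightvec-\gamma\estweightvec\|_2^2=1-\gamma^2$, the definition of $\estweightnorm$ gives
\begin{align*}
\estweightnorm-\gtweightnorm\gamma=\frac{\gtweightnorm\cdot\frac1{\numlabel}\sum_i\inprod{\estweightvec}{\featureRV_i}\inprod{\gtweightvec-\gamma\estweightvec}{\featureRV_i}+\frac1{\numlabel}\sum_i\inprod{\estweightvec}{\featureRV_i}\labelnoise_i}{\frac1{\numlabel}\sum_i\inprod{\estweightvec}{\featureRV_i}^2}.
\end{align*}
Both numerator averages are centered (by isotropy and $\mathbb{E}[\labelnoise]=0$), with conditional variances $O\big((1-\gamma^2)/\numlabel\big)$ and $O\big(\stdlabelnoise^2/\numlabel\big)$ respectively --- the former using that the fourth moment of a linear functional of an isotropic log-concave vector is $O(1)$. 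Since such linear functionals are sub-exponential, standard concentration (including a lower-tail bound for the denominator) shows that the denominator exceeds $1/2$ except with probability $e^{-\Omega(\numlabel)}$; this is exactly where the hypothesis $\numlabel>M$ enters, so that the ratio is well behaved and the a priori unbounded quantity $\estweightnorm$ contributes negligibly on the bad event. Combining these estimates yields $\mathbb{E}\big[(\estweightnorm-\gtweightnorm\gamma)^2\big]=\Ologlog\big(\|\gtweightvec-\estweightvec\|_2^2/\numlabel+\stdlabelnoise^2/\numlabel\big)$.

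Finally, substituting the last two bounds into the identity of the first paragraph and absorbing the constant-order quantities $\gtweightnorm,\stdlabelnoise$ gives $\mathbb{E}\|\gtweight-\estweight\|_2^2=\Ologlog\big(\|\gtweightvec-\estweightvec\|_2^2+1/\numlabel\big)$, and inserting the direction bound produces the claimed $\Ologlog\big(1/\numlabel+\log^2\numlabel\cdot\varepsilon_{\clsalgo}(\numcomp,1/\numlabel)+\nu^2\big)$. I expect the main obstacle to be the norm-estimation step: one has to control a ratio estimator \emph{in expectation} rather than merely with high probability, which forces a careful lower bound on the random denominator together with concentration for both linear and quadratic forms of log-concave vectors, and one must verify that the bias induced by the misalignment $1-\gamma$ --- which couples the norm error back to the direction error --- remains of lower order rather than dominating the bound.
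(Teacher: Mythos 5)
Your proposal is correct and follows essentially the same route as the paper's proof: a black-box classification guarantee combined with the log-concave angle-versus-disagreement property to control $\|\estweightvec-\gtweightvec\|_2$ (picking up the $\nu^2$ term from the agnostic noise floor), followed by an analysis of the ratio estimator for $\estweightnorm$ with tail and Hoeffding-type concentration for the denominator $\sum_i\inprod{\estweightvec}{\featureRV_i}^2$, and a choice of confidence level $\delta\asymp 1/\numlabel$. The one substantive difference is that you center the norm estimator at $\gtweightnorm\gamma$ via the orthogonal decomposition $\gtweightvec=\gamma\estweightvec+(\gtweightvec-\gamma\estweightvec)$, making the cross term mean-zero with variance $O\big(\|\gtweightvec-\estweightvec\|_2^2/\numlabel\big)$, whereas the paper centers at $\gtweightnorm$ and bounds the resulting non-centered cross term by $(\gtweightnorm)^2\errorcomppart^2\log^2(4e\numlabel/\delta)$ using high-probability bounds on $|\inprod{\estweightvec}{\featureRV_i}|$ and $|\inprod{\gtweightvec-\estweightvec}{\featureRV_i}|$ --- both give the stated bound (yours is in fact slightly sharper in that term), and your closing caveat about controlling the ratio on the low-denominator event is exactly the point the paper itself treats only summarily via the additive $C'\delta$.
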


	\noindent We defer a detailed proof to the Appendix and provide a concise proof sketch here. 
	
	\begin{proofsketch}

\noindent		We recall that, by leveraging properties of isotropic log-concave distributions, we can obtain an estimator $\estweightvec$ such that $\|\estweightvec-\gtweightvec\|_2\leq \errorcomppart=	O\left(\erralgo_{\clsalgo}(\numcomp, \delta)\right)$. Now let $\baseval_i=\inprod{\estweightvec}{\featureRV_i}$, and we have
		\begin{align*}
		\estweightnorm= \frac{\sum_{i=1}^\numlabel \baseval_i \labelRV_i}{\sum_{i=1}^\numlabel \baseval_i^2 }
		=\gtweightnorm+\frac{\sum_{i=1}^\numlabel \baseval_i \gtweightnorm \inprod{\gtweightvec-\estweightvec}{\featureRV_i}+\labelnoise_i}{\sum_{i=1}^\numlabel \baseval_i^2 }.
		\end{align*}		
		And thus
		\begin{align*}
		\inprod{\gtweight}{\featureRV}- \inprod{\estweight}{\featureRV}
		=\gtweightnorm \inprod{\gtweightvec-\estweightvec}{\featureRV}-\frac{\sum_{i=1}^\numlabel \baseval_i \gtweightnorm \inprod{\gtweightvec-\estweightvec}{\featureRV_i}}{\sum_{i=1}^\numlabel \baseval_i^2 }\inprod{\estweightvec}{\featureRV}+
		\frac{\sum_{i=1}^\numlabel \baseval_i\labelnoise_i}{\sum_{i=1}^\numlabel \baseval_i^2 }\inprod{\estweightvec}{\featureRV}.
		\end{align*}
		The first term can be bounded using $\|\estweightvec-\gtweightvec\|_2\leq \errorcomppart$; for the latter two terms, using Hoeffding bounds we show that $\sum_{i=1}^\numlabel \baseval_i^2=O(\numlabel)$. Then by decomposing the sums in the latter two terms, we can bound the MSE.
	\end{proofsketch}
	
Leveraging this result we can now use existing results to derive concrete corollaries for particular instantiations of the classification algorithm $\clsalgo$. For example, when $\nu=0$ and we use passive learning, standard VC theory shows that the empirical risk minimizer has error $\erralgo_{\text{ERM}}=O(d\log(1/\delta)/n)$. This leads to the following corollary:
	\begin{corollary}
		\label{corol:upper_erm}
		Suppose that $\nu=0$, and that we use the passive ERM classifier as $\clsalgo$ in Algorithm \ref{algo:linreg}, then the output $\widehat{w}$ has MSE bounded as:
		\begin{align*}
		\mathbb{E}[\inprod{\gtweight -\estweight}{\featureRV}^2]
		\leq \Ologlog\left(\frac{1}{\numlabel}+\frac{d\log^3\numlabel}{\numcomp}\right).
		\end{align*}
	\end{corollary}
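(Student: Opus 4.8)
The plan is to instantiate Theorem~\ref{thm:upper_lr} with the passive empirical risk minimizer playing the role of the classification subroutine $\clsalgo$, and then simply track the resulting rate; the argument is essentially bookkeeping.

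First I would record the $0/1$ error guarantee for the subroutine. When $\nu = 0$ the comparison oracle $\comporacle$ is noiseless, so the classification problem that Algorithm~\ref{algo:linreg} constructs over the pairwise pool $\mathcal{U}'=\{X_1-X_2,X_3-X_4,\ldots\}$ is realizable by the homogeneous linear separator $\gtweightvec$. The examples $X_{2i-1}-X_{2i}$ are i.i.d.\ (their common law is a convolution of two log-concave densities, hence log-concave and symmetric about the origin), and the class of homogeneous halfspaces in $\mathbb{R}^\dimension$ has VC dimension $\dimension$, so in the realizable case standard VC theory gives that the ERM classifier achieves $\err(\estweightvec)\leq \varepsilon_{\text{ERM}}(\numcomp,\delta)=O(\dimension\log(1/\delta)/\numcomp)$ with probability $1-\delta$, exactly as noted in the text preceding the statement.

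Next I would substitute this bound into Theorem~\ref{thm:upper_lr}. Setting $\delta = 1/\numlabel$ as in that theorem yields $\varepsilon_{\clsalgo}(\numcomp, 1/\numlabel)=O(\dimension\log \numlabel/\numcomp)$, hence $\log^2\numlabel\cdot\varepsilon_{\clsalgo}(\numcomp,1/\numlabel)=O(\dimension\log^3\numlabel/\numcomp)$. Since the $\nu^2$ term in Theorem~\ref{thm:upper_lr} vanishes when $\nu = 0$, we obtain
\[
\mathbb{E}[\inprod{\gtweight-\estweight}{\featureRV}^2]\leq \Ologlog\left(\frac{1}{\numlabel}+\frac{\dimension\log^3\numlabel}{\numcomp}\right),
\]
which is the claimed bound.

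There is no real obstacle here; the only point deserving a sentence of care is that Theorem~\ref{thm:upper_lr} internally converts a $0/1$ error bound on $\estweightvec$ into an $\ell_2$ bound $\|\estweightvec-\gtweightvec\|_2\leq O(\varepsilon_{\clsalgo})$ using log-concavity, and one should confirm this step is applied to the law of the pairwise differences rather than to $\distrX$ itself. This causes no difficulty: that law is log-concave and symmetric about the origin (isotropic up to a constant rescaling by $\sqrt{2}$), and neither the rescaling nor the symmetrization affects the sign pattern that defines the classification labels or the angle between $\estweightvec$ and $\gtweightvec$, so the translation goes through unchanged, and indeed it is already subsumed in the proof of Theorem~\ref{thm:upper_lr}.
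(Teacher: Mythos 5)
Your proposal is correct and follows essentially the same route as the paper: the corollary is obtained by plugging the realizable-case VC bound $\varepsilon_{\text{ERM}}(\numcomp,\delta)=O(\dimension\log(1/\delta)/\numcomp)$ into Theorem~\ref{thm:upper_lr} with $\delta=1/\numlabel$ and dropping the $\nu^2$ term. Your closing remark about applying the log-concavity-based translation to the law of the pairwise differences is also the paper's view of this step (it notes that $X-X'$ is isotropic log-concave whenever $X$ is, citing \cite{lovasz2007geometry}).
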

	When $\nu>0$, we can use other existing algorithms for either the passive/active case. We give a summary of existing results in Table \ref{tab:cls_results}, and note that (as above) each of these results can be translated in a straightforward way to a guarantee on the MSE when combining direct and comparison queries. 
	We also note that previous results using isotropic log-concave distribution can be directly exploited in our setting since $X-X'$ follows isotropic log-concave distribution if $X$ does \citep{lovasz2007geometry}. Each of these results provide upper bounds on minimax risk \eqref{eqn:minimax_lr} under certain restrictions.
	
	\begin{table}[htb]
		\centering
		\caption{Summary of existing results for passive/active classification for isotropic log-concave $X$ distributions. $C$ denotes some fixed constant; $\varepsilon =\erralgo_{\clsalgo}(\numlabel,\delta)$. The work of \cite{yan2017revisiting} additionally requires that $X-X'$ follow a uniform distribution.}
		\label{tab:cls_results}
		\begin{tabular}{c|c|c|c|c}
			\hline
			Algorithm & Oracle & Requirement & Rate of $\varepsilon$ & Efficient? \\
			\hline
			\multirow{2}*{ERM \citep{hanneke2009theoretical}} & \multirow{2}*{Passive} & None & $\Olog\left(d\sqrt{\frac{1}{\numlabel}}\right)$ & \multirow{2}*{No} \\
			&  & $\nu=O(\varepsilon)$ & $\Olog\left(\frac{d}{m}\right)$ & \\\hline
			\cite{awasthi2016learning} & Passive &$\nu=O(\varepsilon)$& $\Olog\left(\left(\frac{d}{\numlabel}\right)^{1/3}\right)$& Yes\\\hline
			\cite{awasthi2014power} & Active &$\nu=O(\varepsilon)$& $\exp\left(-\frac{C\numlabel}{d+\log(\numlabel/\delta)}\right) $& Yes\\\hline
			\multirow{2}*{\cite{yan2017revisiting}} & Passive &\multirow{2}*{$\nu=O\left(\frac{\varepsilon}{\log d+\log\log\frac{1}{\varepsilon}}\right)$}& $\Olog\left(\frac{d}{m}\right)$& \multirow{2}*{Yes}\\
			& Active && $\exp\left(-\frac{C\numlabel}{d+\log(\numlabel/\delta)}\right)$& \\\hline
		\end{tabular}
		
	\end{table}

	\subsection{Lower Bounds\label{sec:lowerbounds_lr}}
	Now we turn to information-theoretic lower bounds on the minimax risk (\ref{eqn:minimax}). We consider \emph{any} active estimator $\estweight$ with access to the two oracles $\comporacle,\labeloracle$, using $\numcomp$ comparisons and $\numlabel$ labels. In this section, we show that the $1/m$ rate in Theorem \ref{thm:upper_lr} is optimal; we also show a lower bound in the active setting on the total number of queries in the appendix.

	\smsp
	\begin{theorem}
		\label{thm:lowern}
		Suppose that $\featureRV$ is uniform on $[-1,1]^{\dimension}$, and $\labelnoise\sim \mathcal{N}(0,1)$. Then, for any (active) estimator $\estweight$ with access to both label and comparison oracles, there is a universal constant $c > 0$ such that,
		\[\inf_{\estweight}\sup_{\gtweight} \mathbb{E}\left[\inprod{\gtweight-\estweight}{\featureRV}^2\right]\geq \frac{c}{\numlabel}. \]
	\end{theorem}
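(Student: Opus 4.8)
The plan is a Le Cam two-point argument that exploits the scale-blindness of comparisons. Fix $\delta = 1/\sqrt{m}$ and take the two regression vectors $\gtweight_0 = e_1$ and $\gtweight_1 = (1+\delta)e_1$, both along the first coordinate axis. I would hand the estimator the \emph{noiseless} comparison oracle $\comporacle(\featureRV,\featureRV') = \sign(\inprod{\gtweight}{\featureRV-\featureRV'})$, which trivially meets the agnostic-noise condition for every $\nu\ge 0$. Since $1$ and $1+\delta$ are both positive, under either hypothesis this oracle returns exactly $\sign(\featureRV_1-\featureRV'_1)$; hence the comparison answers have \emph{identical} law under $\gtweight_0$ and $\gtweight_1$, so the comparison budget $\numcomp$ is useless for telling the hypotheses apart. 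The unlabeled pool $\mathcal{U}$ likewise has a law that does not depend on $\gtweight$.

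First I would bound the information carried by the label queries. A label at a point whose first coordinate is $\featureRV_1\in[-1,1]$ is distributed as $\mathcal{N}((1+j\delta)\featureRV_1,1)$ under $\gtweight_j$, and so contributes $\mathrm{KL}=\tfrac12\delta^2\featureRV_1^2\le\tfrac12\delta^2$ toward distinguishing the hypotheses. Even when the estimator chooses its $m$ labeled points (and its comparison pairs) adaptively from the pool, the chain rule for KL divergence, applied along the observed sequence, shows that only the $m$ label factors contribute: the pool factor and the comparison-answer factors are identical functions of the pool under both hypotheses, and the (adaptive) query-selection factors are the same fixed rule. Hence the divergence between the two laws over everything the estimator sees is at most $m\cdot\tfrac12\delta^2 = \tfrac12$, and by Pinsker's inequality the corresponding total variation distance is at most $1/2$.

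Second I would turn indistinguishability into a risk bound. Writing $\rho(u)^2 := \mathbb{E}[\inprod{u}{\featureRV}^2] = u^\top\Sigma u$ with $\Sigma = \mathbb{E}[\featureRV\featureRV^\top] = \tfrac13 I_{\dimension}$ for the uniform law on $[-1,1]^{\dimension}$, $\rho$ is a seminorm and the two hypotheses are separated by $\rho(\gtweight_0-\gtweight_1)^2 = \delta^2\,\mathbb{E}[\featureRV_1^2] = \tfrac13\delta^2 = \tfrac{1}{3m}$. The standard two-point lower bound --- triangle inequality for $\rho$ together with the testing inequality $\max_{j}\mathbb{P}_j[\text{error}]\ge \tfrac12(1-\mathrm{TV})$ --- then gives $\inf_{\estweight}\max_{j\in\{0,1\}}\mathbb{E}_j\big[\inprod{\gtweight_j-\estweight}{\featureRV}^2\big]\ge \tfrac{c}{m}$ for a universal constant $c>0$, and since $\{\gtweight_0,\gtweight_1\}\subseteq\mathbb{R}^{\dimension}$ this lower-bounds the minimax risk as claimed.

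The conceptual crux is the scale-invariance of comparisons, which makes the comparison oracle --- however it is queried --- provide no help in estimating the \emph{norm} of $\gtweight$, so that the problem reduces to estimating a scalar from $m$ noisy observations; this is why $\numcomp$ does not appear in the bound, matching the $1/\numlabel$ term of Theorem~\ref{thm:upper_lr}. The only point needing genuine technical care is the adaptivity of the label and comparison queries, which is handled by the chain-rule decomposition above; the remainder is a routine Le Cam computation.
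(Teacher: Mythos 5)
Your proposal is correct and follows essentially the same route as the paper's proof: a Le Cam two-point argument with two weight vectors along the first coordinate axis whose positive, collinear directions make every comparison answer identical under both hypotheses, followed by a chain-rule KL bound over the $\numlabel$ adaptively chosen label queries. The only differences are cosmetic (you use $e_1$ and $(1+\delta)e_1$ where the paper uses $\smallconst e_1$ and $\numlabel^{-1/2}e_1$, and you invoke Pinsker rather than the $\exp(-\alpha)$ form of Le Cam's bound), so no further comparison is needed.
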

	\medsp
	Theorem \ref{thm:lowern} shows that the $O(1/\numlabel)$ term in Theorem \ref{thm:upper_lr} is necessary. The proof uses classical information-theoretic techniques (Le Cam's method) applied to two increasing functions with $\dimension=1$, and is included in Appendix~\ref{sec:proof_lower_n_lr}.
	
	\newcommand{\existweight}{\widetilde{w}}
	\smsp
	
	We note that we cannot establish lower bounds that depend solely on number of comparisons $\numcomp$, since we can of course achieve $O(\dimension/\numlabel)$ MSE without using any comparisons. Consequently, we show a lower bound on the \emph{total} number of queries in Appendix~\ref{sec:proof_lowerall_lr}. This bound shows that when using the algorithm in the paper of \cite{awasthi2014power} as $\clsalgo$ in CLR, the total number of queries is optimal up to log factors.
	
	\section{Experiments}
	\label{sec:experiments}
	To verify our theoretical results and test our algorithms in practice, we perform 
	three sets of experiments. 
	First, we use simulated data, where the noise in the labels and ranking can be controlled separately. 
	Second, we consider an application of predicting people's ages from photographs, where we synthesize comparisons from data on people's apparent ages (as opposed to their true biological ages). Finally, we crowdsource data using Amazon's Mechanical Turk to obtain comparisons and direct measurements for the task of estimating rental property asking prices. We then evaluate various algorithms for predicting rental property prices, both in terms of their accuracy, as well as in terms of their time cost.
	
\noindent 	\textbf{Baselines.} In the nonparametric setting, we compare \RR with $k$-NN algorithms in all experiments. We choose
	$k$-NN methods because they are near-optimal theoretically for Lipschitz functions, and are widely used in practice. Also, the \RR method is a nonparametric method built on a nearest neighbor regressor. It may be possible to use the ranking-regression method in conjunction with other nonparametric regressors but we leave this for future work. 
	We choose from a range of different constant values of $k$ 
	in our experiments. 
	
	In the linear regression setting, for our CLR algorithm, we consider both a passive and an active setting for comparison queries. For the passive comparison query setting, we simply use a Support Vector Machine (SVM) as $\mathcal{A}$ in Algorithm \ref{algo:linreg}. For the active comparison query setting, we use an algorithm minimizing hinge loss as described in \cite{awasthi2014power}. We compare CLR to the 
	LASSO and to Linear Support Vector Regression (LinearSVR), where the relevant hyperparameters are chosen based on validation set performance. We choose LASSO and LinearSVR as they are the most prevalent linear regression methods.
	Unless otherwise noted, we repeat each experiment 20 times and report the average MSE\footnote{Our plots are best viewed in color.}.

\noindent 	\textbf{Cost Ratio.} Our algorithms aim at reducing the overall cost of estimating a regression function when comparisons can be more easily available 
	than direct labels. In practice, the cost of obtaining comparisons can vary greatly depending on the task, and we consider two practical setups:
	\begin{enumerate}
		\item In many applications, both direct labels and comparisons can be obtained, but labels cost more than comparisons. Our price estimation task corresponds to this case.  The cost, in this case, depends on the ratio between the cost of comparisons and labels. We suppose that comparisons cost 1, and that labels cost $c$ for some constant $c>1$ and that we have a total budget of $C$. We call $c$ the \emph{cost ratio}. 
Minimizing the risk of our algorithms requires minimizing $\minimax(m,n; \eta)$ as defined in (\ref{eqn:minimax}) subject to $cm+n\leq C$; 
for most cases, we need a small $m$ and large $n$. In experiments with a finite cost ratio, we fix the number of direct measurements to be a small constant $m$ and vary the number of comparisons that we use. 
		\item Direct labels might be substantially harder to acquire because of privacy issues or because of inherent restrictions in the data collection process, whereas comparisons are easier to obtain. 
		Our age prediction task corresponds to this case, where it is conceivable that only some of the biological ages are available due to privacy issues. In this setting, the cost is dominated by the cost of the direct labels and we measure the cost of estimation by the number of labeled samples used. 
	\end{enumerate}  


	\subsection{\label{sec:mod_algo} Modifications to Our Algorithms}

	While \RR and CLR are near optimal from a theoretical standpoint, we adopt the following techniques to improve their empirical performance:
	
\noindent	\textbf{\RR with $k$-NN.} Our analysis considers the case when we use 1-NN after isotonic regression. However, we empirically find that using more than 1 nearest neighbor can also improve the performance. So in our experiments, we use $k$-NN in the final step of \RR, where $k$ is a small fixed constant. We note in passing that 
our theoretical results remain valid in this slightly more general setting. 
	
\noindent	\textbf{\RR with comparisons.} When \RR uses passively collected comparisons, we would need $O(n^2)$ pairs to have a ranking with $O(1/n)$ error in the Kendall-Tau metric if we use the algorithm from \cite{braverman2009sorting}. We instead choose to take advantage of the feature space structure when we use \RR with comparisons. 
	Specifically, we build a nonparametric rankSVM \citep{joachims2002optimizing} 
	to score each sample using pairwise comparisons. We then rank samples according to their scores given by the rankSVM. 
	 We discuss another potential method, which uses nearest neighbors based on Borda counts, in Appendix \ref{sec:app_expr}.
	
\noindent	\textbf{CLR with feature augmentation.} Using the directly labeled data only to estimate the norm of the weights corresponds to 
using linear regression with the direct labels with a \emph{single} feature $\inprod{\estweightvec}{x}$ from Algorithm \ref{algo:linreg}. Empirically, we find that using all the features together with
the estimated $\inprod{\estweightvec}{x}$ results in better performance. Concretely, we use a linear SVR with input features $(x;\inprod{\estweightvec}{x})$, and use the output as our prediction.


	
	\subsection{Simulated Data\label{sec:expr_sim}}
We generate different synthetic datasets for nonparametric and linear regression settings in order to verify our theory. 
	\subsubsection{Simulated Data for \RR \label{sec:expr_sim_rr}}
	\begin{figure}[htb!]
		\centering
		\begin{subfigure}[b]{0.45\textwidth}
			\centering
			\includegraphics[width=\textwidth]{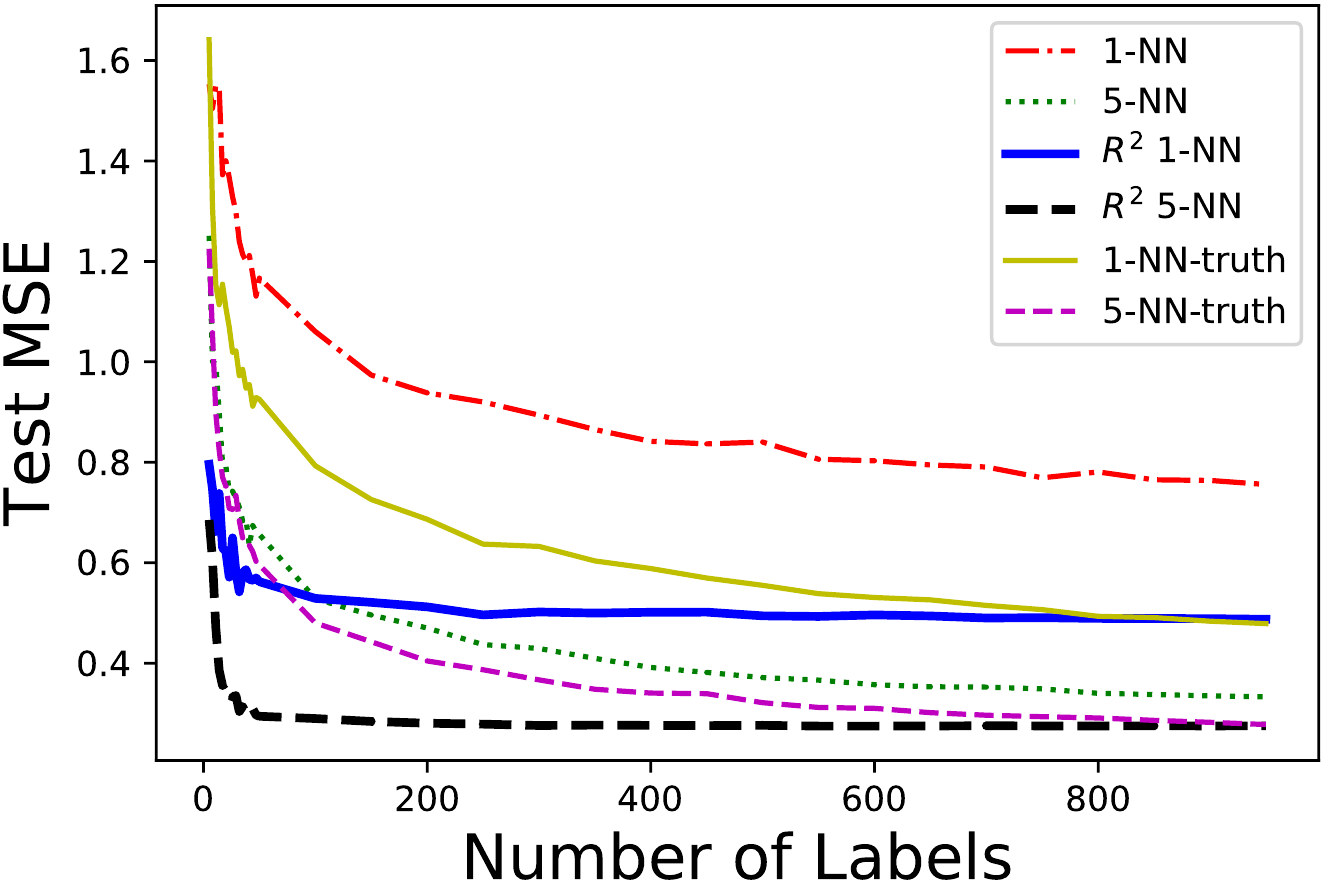}
			\caption{Perfect ranking}
			\label{fig:res_sim_perf_rank}
		\end{subfigure}%
	\;		
		\begin{subfigure}[b]{0.45\textwidth}
			\centering
			\includegraphics[width=\textwidth]{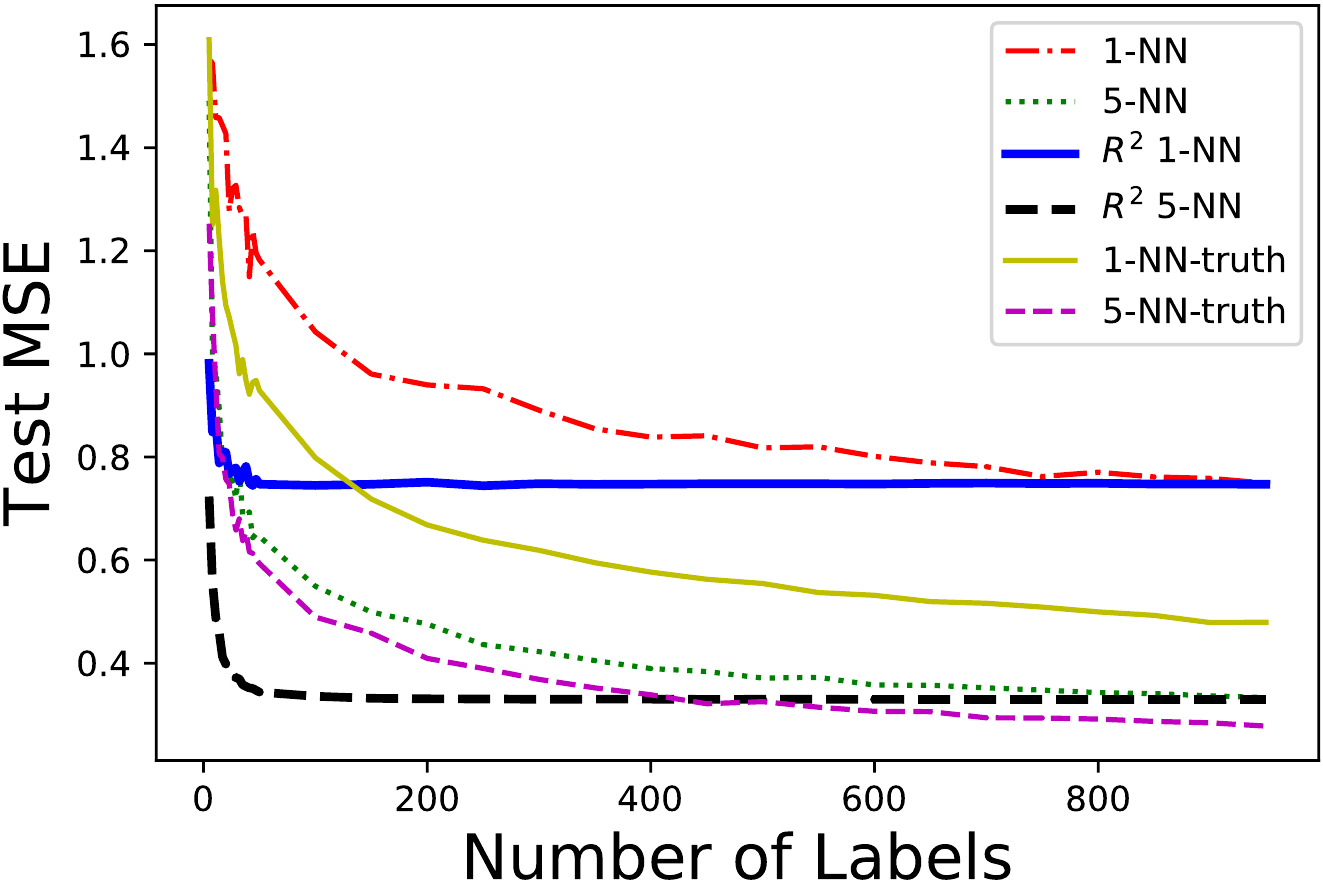}
			\caption{Noisy ranking}
			\label{fig:res_sim_noisy_rank}
		\end{subfigure}\\
	\vspace{4mm}
		\begin{subfigure}[b]{0.45\textwidth}
			\centering
			\includegraphics[width=\textwidth]{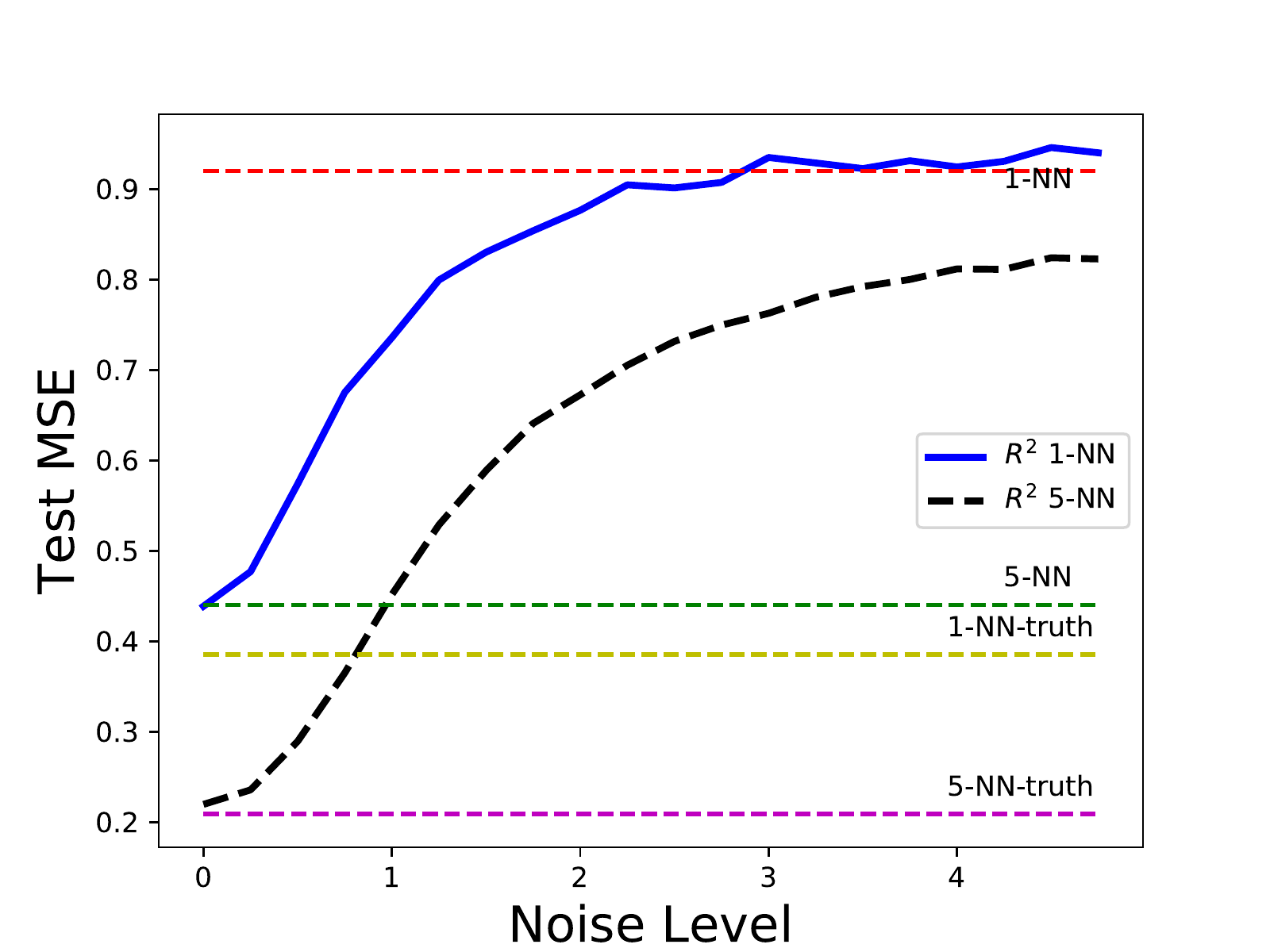}
			\caption{Effect of ranking noise, $m=100$}
			\label{fig:res_sim_rank_noise}
		\end{subfigure}		
			\begin{subfigure}[b]{0.45\textwidth}
		\centering
		\includegraphics[width=\textwidth]{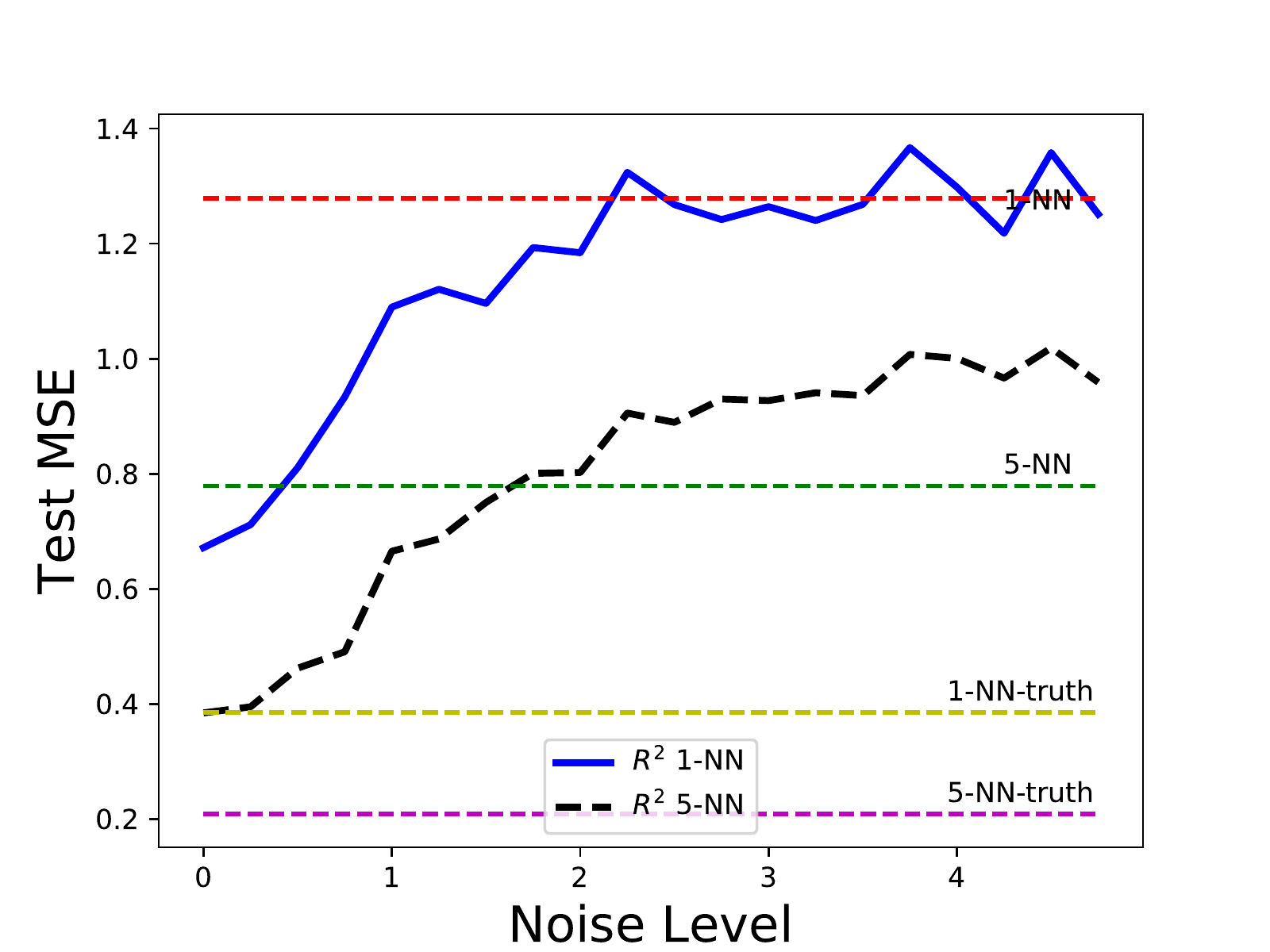}
		\caption{Effect of ranking noise, $m=10$}
		\label{fig:res_sim_rank_noise_m10}
	\end{subfigure}
		\caption{Experiments on simulated data for \RR. \textsf{1-NN} and \textsf{5-NN} represent algorithms using noisy label data only; \textsf{\RR 1-NN} and \textsf{\RR 5-NN} use noisy labels as well as rankings; \textsf{1-NN-truth} and \textsf{5-NN-truth} use perfect label data only.}
		\label{fig:expr_simulated}
		
	\end{figure}
	\textbf{Data Generation.} We generate simulated data following~\citet{hardle2012nonparametric}.
	We take $d=8$, and sample $X$ uniformly random from $[0,1]^d$. Our target function is $f(x)=\sum_{i=1}^d f^{(i \text{ mod } 4)}(x_i)$, where $x_i$ is $x$'s $i$-th dimension, and
	\begin{align*}
	f^{(1)}(x)=px-1/2, ~~~~&f^{(2)}(x)=px^3-1/3,\\
	f^{(3)}(x)=-2\sin(-px), ~~~~~&f^{(4)}(x)=e^{-px}+e^{-1}-1
	\end{align*}
	with $p$ sampled uniformly at random in $[0,10]$. We generate a training and a test set each of size $n=1000$ samples respectively. We rescale $f$ so that it has 0 mean and unit variance over the training set. This makes it easy to control the noise that we add relative to the function value. For training data, we generate the labels as $y_i=f(X_i)+\varepsilon$ where $\varepsilon\sim \mathcal{N}(0,0.5^2)$, for the training data $\{X_1,\ldots,X_n\}$. 
	At test time, we compute the MSE $\frac{1}{n}\sum_{i=1}^{n} (f(X^{\text{test}}_i)-\widehat{f}(X^{\text{test}}_i))^2$ for the test data $\{X^{\text{test}}_1,\ldots,X^{\text{test}}_{n}\}$.
	
	\vspace{0.1cm}
	
	
\noindent	\textbf{Setup and Baselines.} We test \RR with either 1-NN or 5-NN for our simulated data, denoted as \RR 1-NN and \RR 5-NN respectively. We compare them with the following baselines: i) 1-NN and 5-NN using noisy labels $(X,y)$ only. Since \RR uses ordinal data in addition to labels, it should have lower MSE than 1-NN and 5-NN. ii) 1-NN and 5-NN using perfect labels $(X,f(X))$. Since these algorithms use perfect labels, when all sample points are labeled they serve as a benchmark for our algorithms.

	\vspace{0.1cm}

\noindent	\textbf{\RR with rankings.} Our first set of experiments suppose that \RR has access to the ranking over all 1000 training samples, while the $k$-NN baseline algorithms only have access to the labeled samples. We measure the cost as the number of labels in this case. Figure \ref{fig:res_sim_perf_rank} compares \RR with baselines when the ranking is perfect. R$^2$ 1-NN and R$^2$ 5-NN exhibited better performance than their counterparts using only labels, whether using noisy or perfect labels; in fact, we find that R$^2$ 1-NN and R$^2$ 5-NN perform nearly as well as 1-NN or 5-NN using all 1000 perfect labels, while only requiring around 50 labeled samples.
	
	In Figure \ref{fig:res_sim_noisy_rank}, \RR uses an input ranking obtained from noisy labels $\{y_1,...,y_n\}$. In this case, the noise in the ranking is dependent on the label noise, since the ranking is directly derived from the noisy labels. This makes the obtained labels consistent with the ranking, and thus eliminates the need for isotonic regression in Algorithm \ref{algo:compreg_iso}. Nevertheless, we find that the ranking still provides useful information for the unlabeled samples.
	In this setting, \RR outperformed its 1-NN and 5-NN counterparts using noisy labels. However, \RR was outperformed by algorithms using perfect labels when $n=m$. As expected, \RR and $k$-NN with noisy labels achieved identical MSE when $n=m$, as ranking noise is derived from noise in labels. 
	

	We consider the effect of \emph{independent} ranking noise in Figure \ref{fig:res_sim_rank_noise}. We fixed the number of labeled/ranked samples to 100/1000 and varied the noise level of ranking. For a noise level of $\sigma$, the ranking is generated from
	\begin{align}
	y'=f(X)+\varepsilon' \label{eqn:comp_noisy}
	\end{align}
	where $\varepsilon'\sim \mathcal{N}(0,\sigma^2)$. We also plot the performance of 1-NN and 5-NN using 100 noisy labels and 1,000 perfect labels for comparison. 
	We varied $\sigma$ from 0 to 5 and plotted the MSE. We repeat these experiments 50 times.
	
	For both \RR 1-NN and 5-NN -- despite the fact that they use noisy labels -- their performance is close to the NN methods using noiseless labels.
As $\sigma'$ increases, both methods start to deteriorate, with \RR 5-NN hitting the naive 5-NN method at around $\sigma'=1$ and \RR 1-NN at around $\sigma'=2.5$. 
This shows that \RR is robust to ranking noise of comparable magnitude to the label noise. We show in Figure \ref{fig:res_sim_rank_noise_m10} the curve when we use 10 labels and 1000 ranked samples, where a larger amount of ranking noise can be tolerated.

\vspace{0.1cm}

\noindent	\textbf{\RR with comparisons.} We also investigate the performance of \RR when we have pairwise comparisons instead of a total ranking. We train a rankSVM with an RBF kernel with 
a bandwidth of 1, i.e. $k(x,x')=\exp(-\|x-x'\|_2^2)$, as described in Section \ref{sec:mod_algo}. 
	Our rankSVM has a ranking error of $\nu=11.8\%$ on the training set and $\nu=13.8\%$ on the validation set. For simplicity, we only compare with 5-NN here since it gives best performance amongst the label-only algorithms.
	The results are depicted in Figure \ref{fig:expr_comp_ranksvm}. When comparisons are perfect, we first investigate the effect of the cost ratio in Figure \ref{fig:comp_nonoise_ranksvm_cost}. We fixed the budget to equal to $C=500c$ (i.e., we would have 500 labels available if we only used labels), and each curve corresponds to a value of $m\in \{50,100,200\}$ and a varied $n$ such that the total cost is $C=500c$. We can see for almost all choices of $m$ and cost ratio, \RR provides a performance boost. In Figure \ref{fig:comp_nonoise_ranksvm} we fix $c=5$, and vary the total budget $C$ from 500 to 4,000. We find that \RR outperforms the label-only algorithms in most setups.
	
	In Figures \ref{fig:comp_withnoise_ranksvm_cost} and \ref{fig:comp_withnoise_ranksvm}, we consider the same setup of experiments, but with comparisons generated from (\ref{eqn:comp_noisy}), where $\varepsilon'\sim \mathcal{N}(0,0.5^2)$. Note that here the noise $\varepsilon'$ is of the same magnitude but independent from the label noise. Although \RR gave a less significant performance boost in this case, it still outperformed label-only algorithms when $c\geq 2$.
	\begin{figure}[htb!]
	\centering
	\begin{subfigure}[b]{0.45\textwidth}
	\centering
	\includegraphics[width=\textwidth]{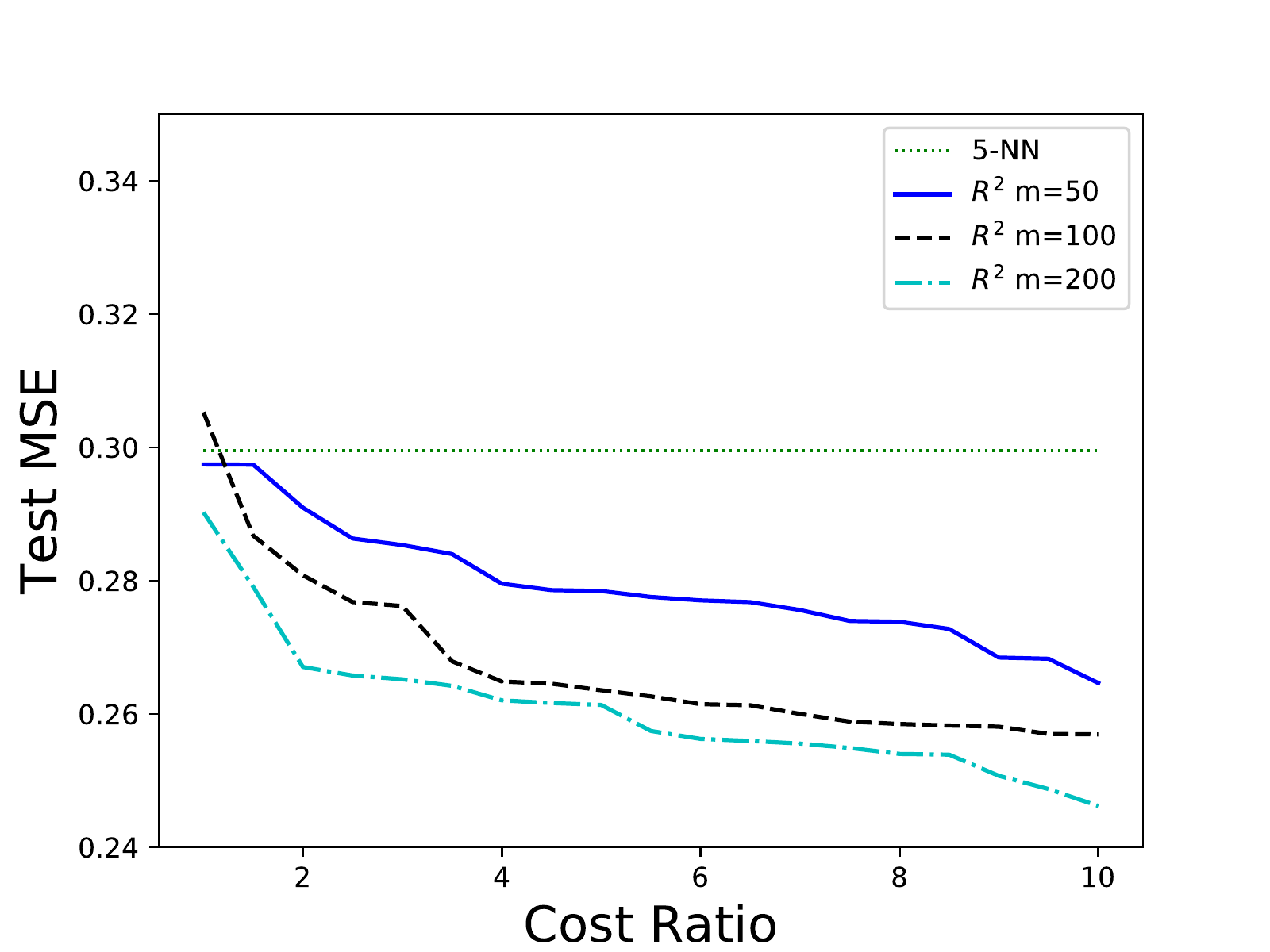}
	\caption{\label{fig:comp_nonoise_ranksvm_cost}Perfect comparisons, $C=500c$ }
\end{subfigure}
	\begin{subfigure}[b]{0.45\textwidth}
		\centering
		\includegraphics[width=\textwidth]{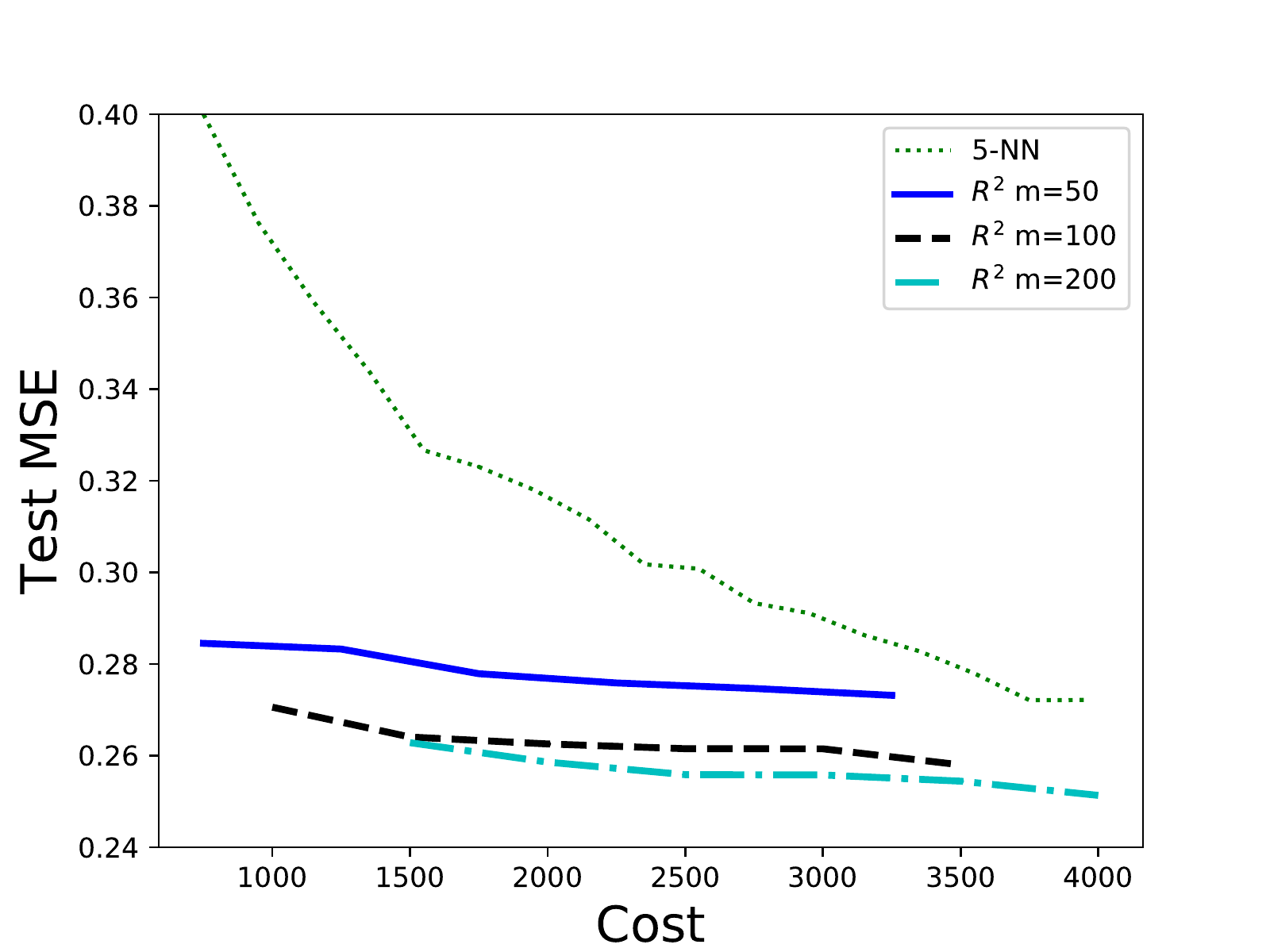}
		\caption{\label{fig:comp_nonoise_ranksvm}Perfect comparisons, $c=5$}
	\end{subfigure}\\		
	\begin{subfigure}[b]{0.45\textwidth}
	\centering
	\includegraphics[width=\textwidth]{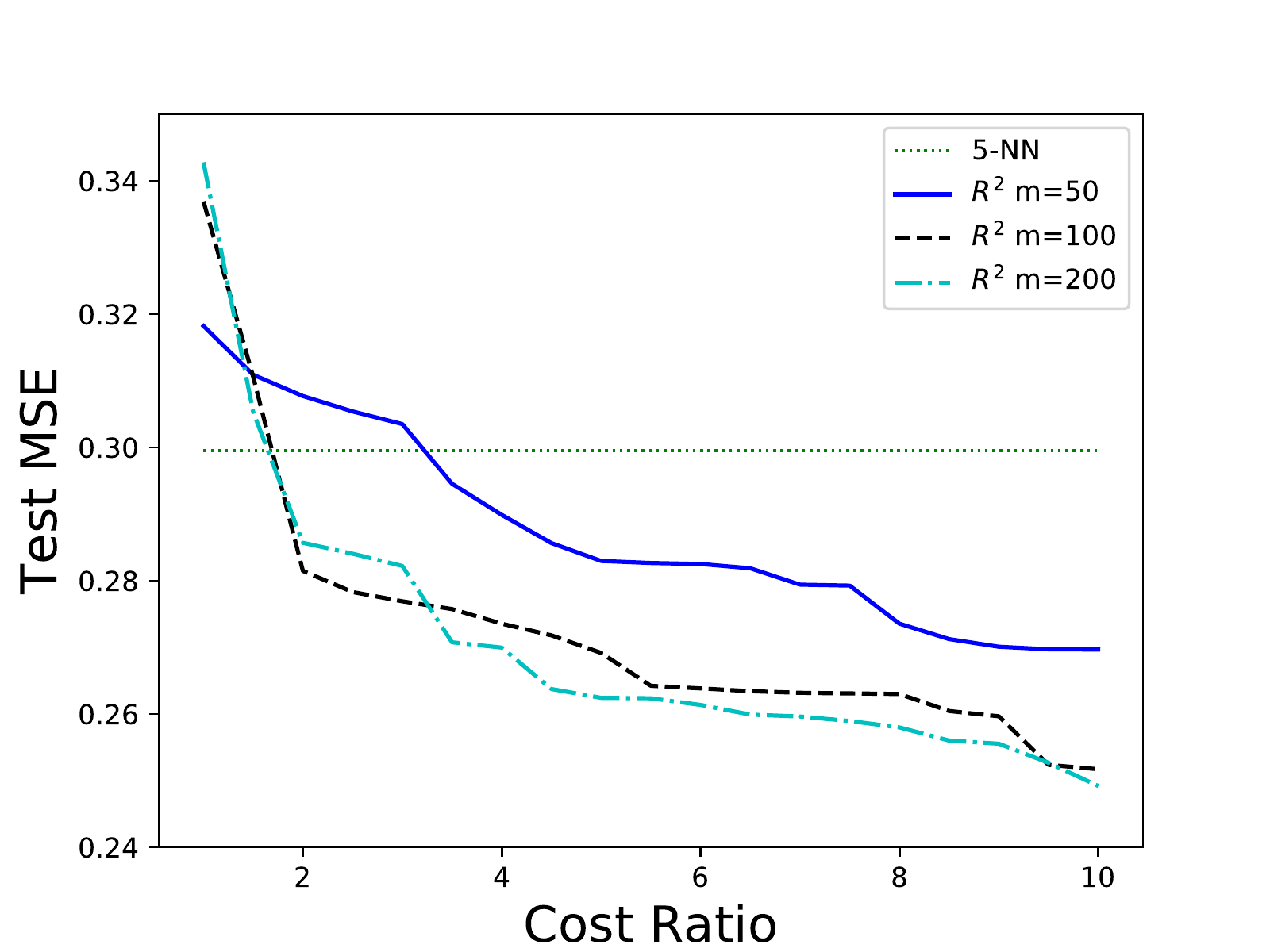}
	\caption{Noisy comparisons, $C=500c$}
	\label{fig:comp_withnoise_ranksvm_cost}
\end{subfigure}
	\begin{subfigure}[b]{0.45\textwidth}
		\centering
		\includegraphics[width=\textwidth]{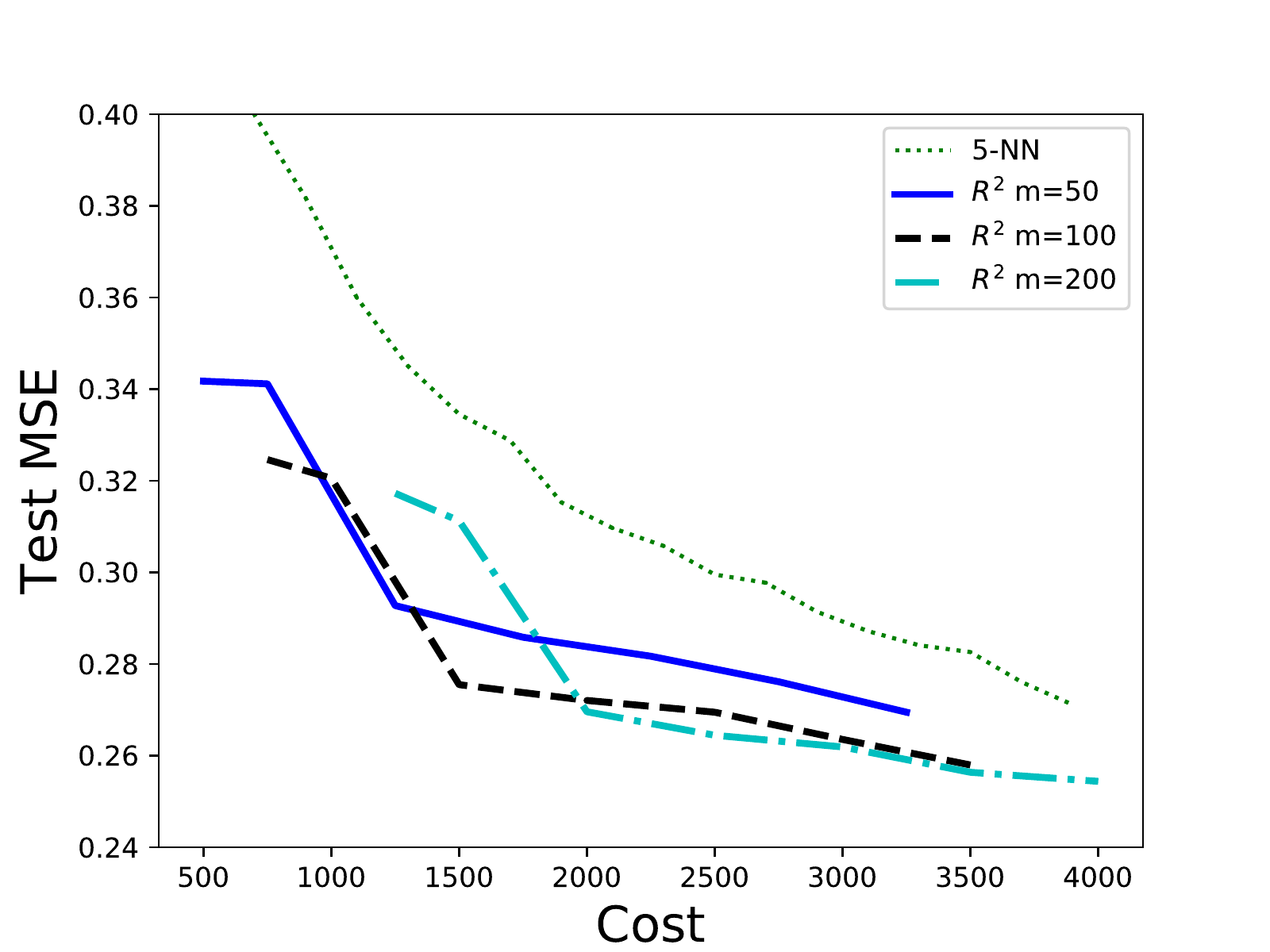}
		\caption{Noisy comparisons, $c=5$}
		\label{fig:comp_withnoise_ranksvm}
	\end{subfigure}	
	\caption{Experimental results on synthetic dataset for \RR with comparisons. Each curve corresponds to a fixed $m$, and we vary $n$ as the cost ratios or total budget change. Note that curves start at different locations because of different $m$ values. }
	\label{fig:expr_comp_ranksvm}
	
\end{figure}

	\subsubsection{Simulated Data for CLR}
	\begin{figure}[htb!]
	\centering
	\begin{subfigure}[b]{0.45\textwidth}
		\centering
		\includegraphics[width=\textwidth]{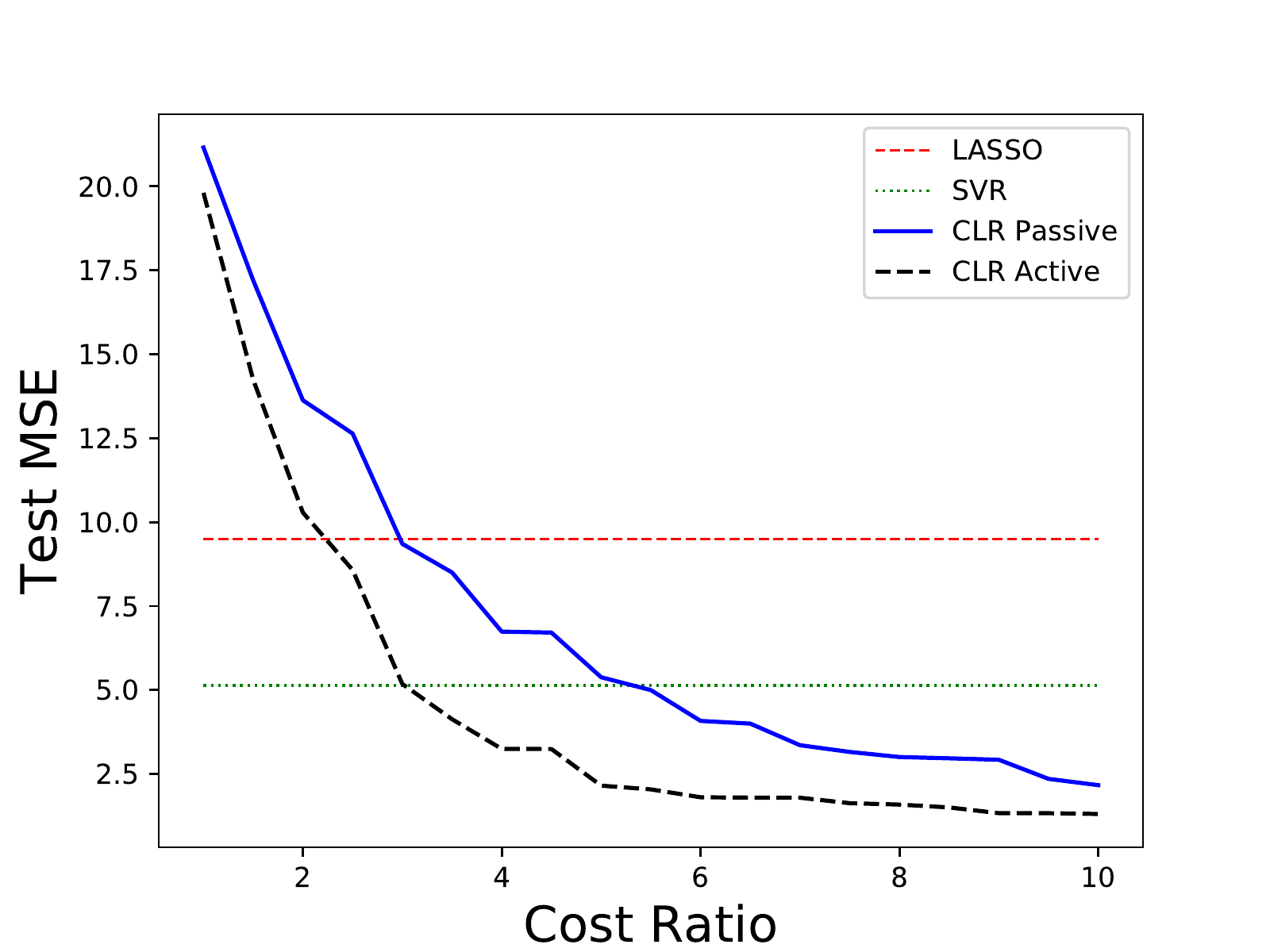}
		\caption{\label{fig:comp_linear_cost}Varying cost ratio $c$ with $C=50c$ }
	\end{subfigure}
	\begin{subfigure}[b]{0.45\textwidth}
		\centering
		\includegraphics[width=\textwidth]{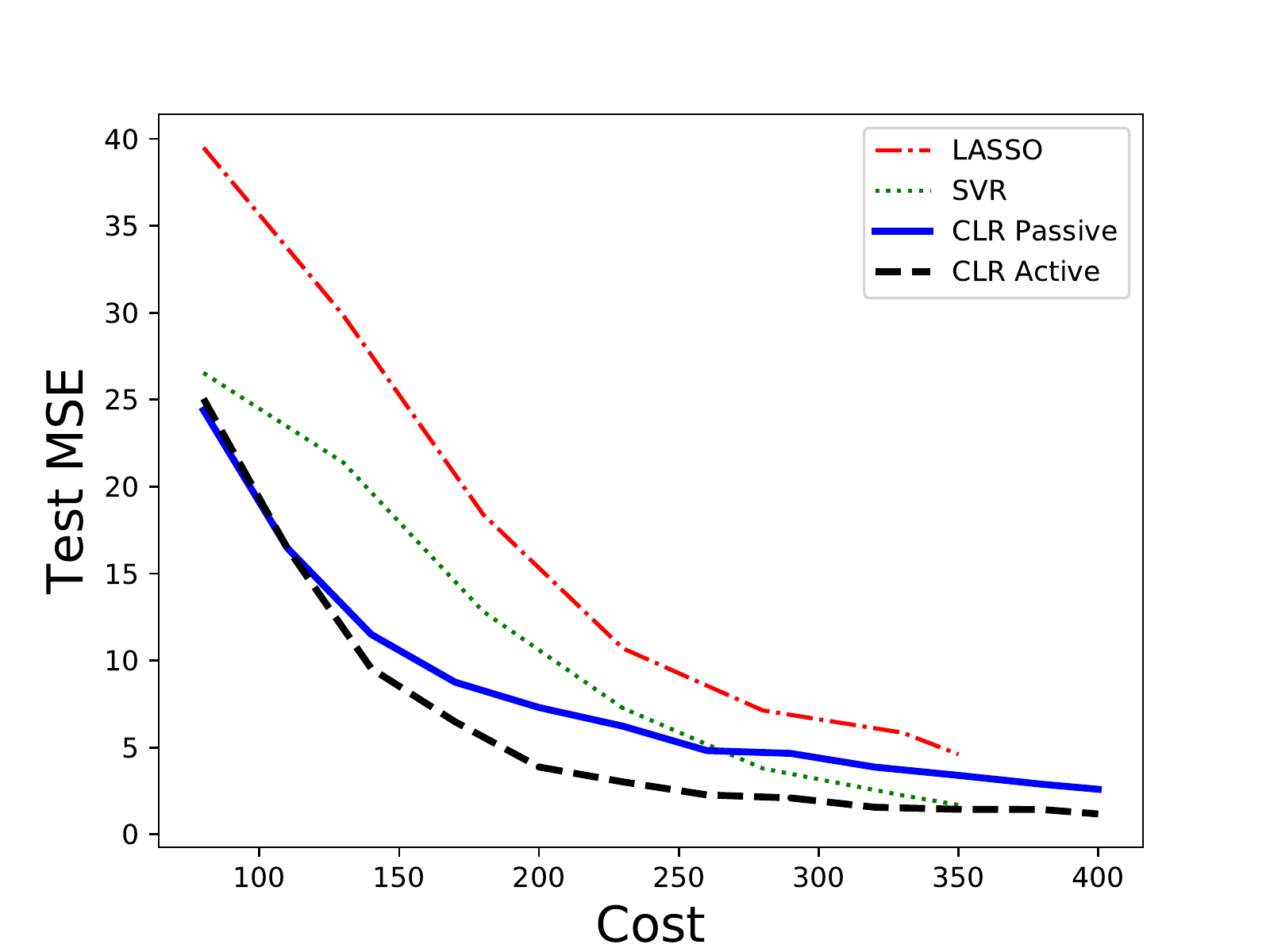}
		\caption{\label{fig:comp_linear_budget}Varying budget $C$ with $c=5$}
	\end{subfigure}\\		
	\caption{Experimental results on synthetic dataset for CLR.}
	\label{fig:expr_linear_synthetic}
	
\end{figure}
	\newcommand{\trainset}{\text{train}}
	\newcommand{\testset}{\text{test}}
	For CLR we only consider the case with a cost ratio, because we find that  a small number of comparisons already suffice to obtain a low error.
	We set $\dimension=50$ and generate both $\featureRV$ and $\gtweight$ from the standard normal distribution $\mathcal{N}(0,I_\dimension)$. We generate noisy labels with distribution $\labelnoise\sim \mathcal{N}(0,0.5^2)$. The comparison oracle generates response using the same noise model: $\compRV=\sign((\inprod{\gtweight}{\featureV_1}+\labelnoise_1)-(\inprod{\gtweight}{\featureV_2}-\labelnoise_2))$ for input $(\featureV_1,\featureV_2)$, with $\labelnoise_1,\labelnoise_2\sim \mathcal{N}(0,0.5^2)$ independent of the label noise.
	
	Model performances are compared in Figure \ref{fig:expr_linear_synthetic}. We first investigate the effect of cost ratio in Figure \ref{fig:comp_linear_cost}, where we fixed the budget to $C=50c$, 
i.e. if we only used labels we would have a budget of 50 labels. The passive comparison query version of CLR requires  roughly $c>5$ to outperform baselines, and the active comparison query version requires $c>3$. We also experiment with a fixed cost ratio $c = 5$ and varying budget $C$ in Figure \ref{fig:comp_linear_budget}. The active version outperformed all baselines in most scenarios, whereas the passive version gave a performance boost when the budget was less than 250 (i.e. number of labels is restricted to less than 250 in label only setting). We note that the active and passive versions of CLR only differ in their collection of comparisons; both algorithms (along with the baselines) are given a random set of labeled samples, making it a fair competition.
	
	\subsection{Predicting Ages from Photographs}
	
	\begin{figure}
		\centering
		\begin{subfigure}[b]{0.5\textwidth}
			\centering
			\includegraphics[width=\textwidth]{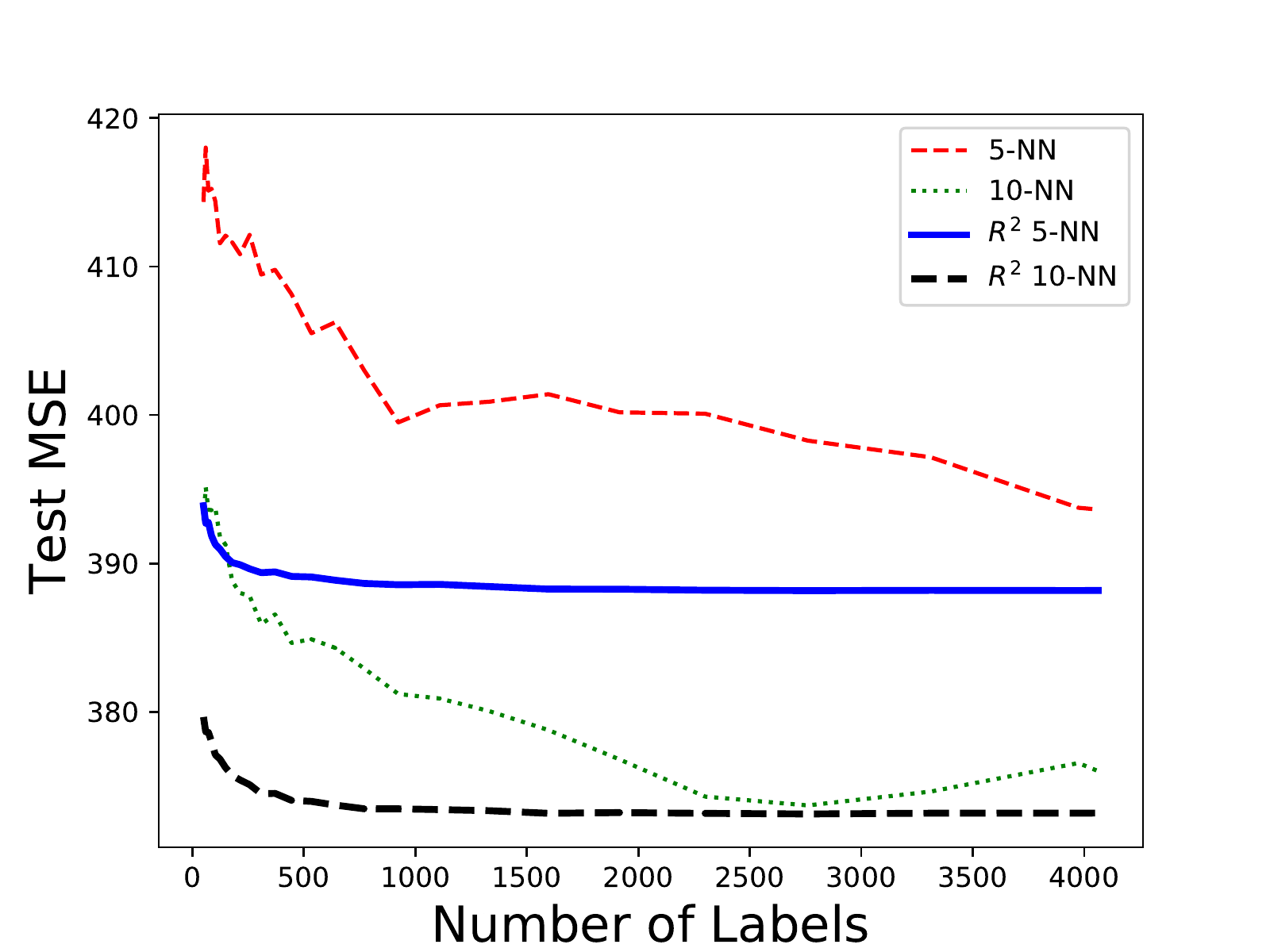}
			\caption{\RR}
			\label{fig:res_age_task1}
		\end{subfigure}%
		\hfill
		\begin{subfigure}[b]{0.5\textwidth}
			\centering
			\includegraphics[width=\textwidth]{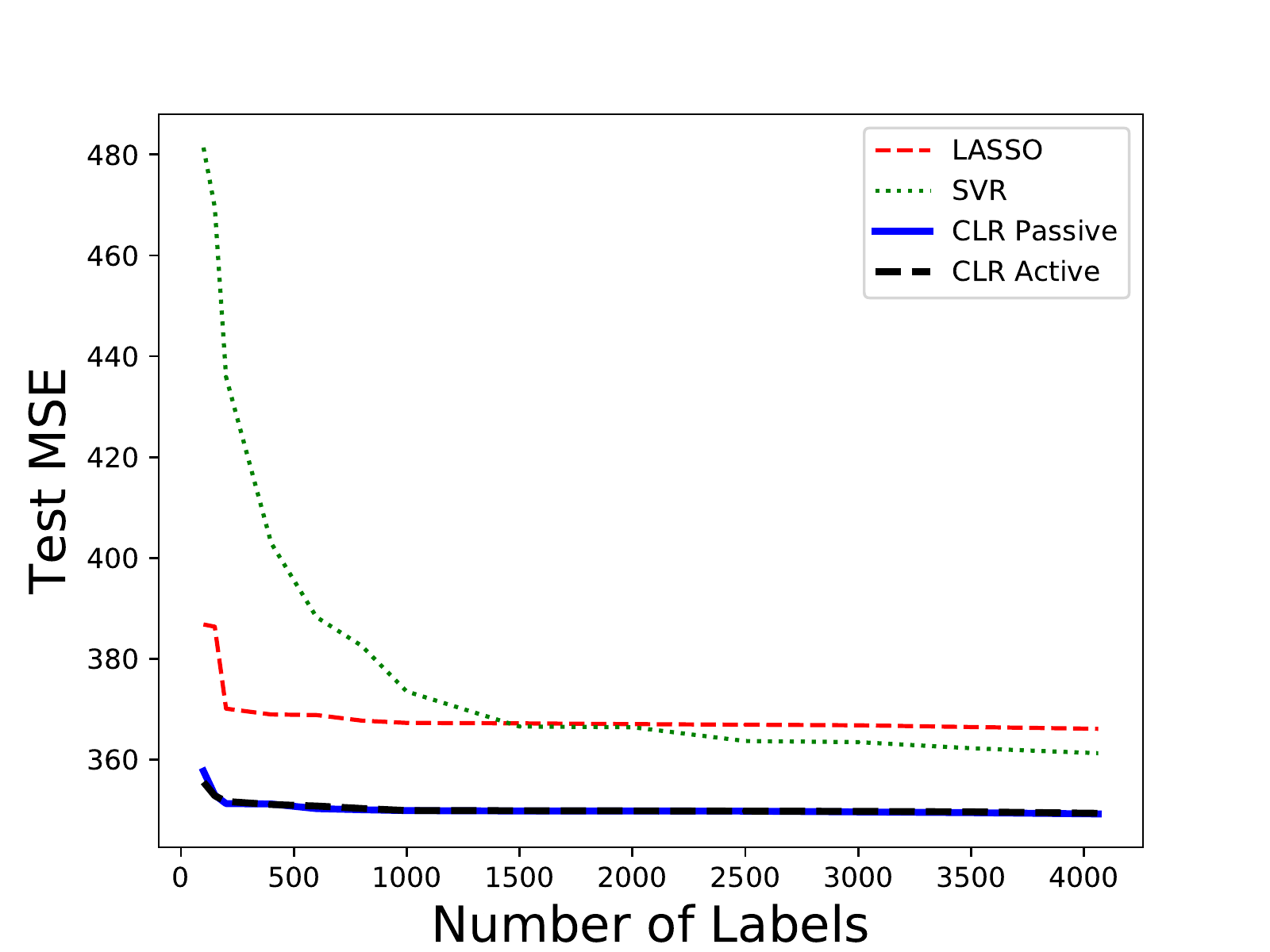}
			\caption{Linear Regression}
			\label{fig:res_linear_age}	
		\end{subfigure}
		
		\caption{\label{fig:res_age}Experimental results on age prediction.
		}
		
	\end{figure}

	To further validate \RR in practice, we consider the task of estimating people's ages from photographs. We use the APPA-REAL dataset \citep{agustsson2017appareal}, which contains 7,591 images, and each image is associated with a biological age and an apparent age. The biological age is the person's actual age, whereas the apparent ages are collected by asking crowdsourced workers to estimate their apparent ages. Estimates from (on average 38 different) labelers are averaged to obtain the apparent age.
	APPA-REAL also provides the standard deviation of the apparent age estimates. The images are divided into 4,063 train, 1,488 validation and 1,962 test samples, and we choose the best hyperparameters using the validation samples.

	
\vspace{0.1cm}

\noindent	\textbf{Task.} We consider the task of predicting biological age. Direct labels come from biological age, whereas the ranking is based on apparent ages. This is motivated by the collection process of many modern datasets:
	for example, we may have the truthful biological age only for a fraction of samples, but wish to collect more through crowdsourcing. In crowdsourcing, people give comparisons based on apparent age instead of biological age. As a consequence, in our experiments we assume additional access to a ranking that comes from the apparent ages. Since collecting crowdsourced data can be much easier than collecting the real biological ages, as discussed earlier, we define the cost as the number of direct labels used in this experiment.
	
\vspace{0.1cm}

\noindent	\textbf{Features and Models.} We extract a 128-dimensional feature vector for each image using the last layer of FaceNet~\citep{schroff2015facenet}. We rescale the features so that every $X\in [0,1]^d$ for \RR, or we centralize the feature to have zero mean and unit variance for CLR. We use 5-NN and 10-NN to compare with \RR in this experiment. Utilizing extra ordinal information, \RR has additional access to the ranking of apparent ages; since CLR does not use a ranking directly we provide it access to 4,063 comparisons (the same size as the training set) based on apparent ages.	
	
	Our results are depicted in Figure \ref{fig:res_age}. The 10-NN version of \RR gave the best overall performance amongst nonparametric methods. \RR 5-NN and \RR 10-NN both outperformed other algorithms when the number of labeled samples was less than 500. 
	Interestingly, we observe that there is a gap between \RR and its nearest neighbor counterparts even when $n=m$, i.e. the ordinal information continues to be useful even when all samples are labeled; this might be because the biological ages are ``noisy'' in the sense that they are also determined by factors not present in image (e.g., lifestyle).
	Similarly, for linear regression methods, our method also gets the lowest overall error for any budget of direct labels. In this case, we notice that active comparisons only have a small advantage over passive comparisons, since the comparison classifiers both converge with a sufficient number of comparisons.
	
	\subsection{Estimating AirBnB Listing Prices \label{sec:expr_AirBnB} }
	In our third set of experiments, we consider the cost of both comparisons and direct labels.
	We use data of AirBnB listings and ask Amazon Mechanical Turk (AMT) workers to estimate their price. To measure the cost, 
	we collect not only the labels and comparisons but also the time taken to answer the question. We use the time as an estimate of the cost. 
	
	\vspace{0.1cm}
	
	\noindent\textbf{Data Collection.} Our data comes from Kaggle\footnote{\url{https://www.kaggle.com/AirBnB/seattle/home}} and contains information about AirBnB postings in Seattle. We use 357 listings as the training set and 93 as the test set. We additionally pick 50 listings from the training set as validation data to select hyperparameters.
	We use the following features for our experiments:
	
	\vspace{0.2cm}
	\hspace{-1cm}
	\begin{tabular}{lll}
		(1) Host response rate, & (2) Host acceptance rate, & (3) Host listings count,\\
		(4) Number of reviews per month,& (5) Number of bathrooms,& (6) Number of bedrooms, \\
		(7) Number of beds, & (8) Number of reviews, & (9) Review scores rating, \\
		\multicolumn{3}{c}{(10) Number of people the house accommodates.}
	\end{tabular}
	\vspace{.2cm}
	
	Each worker from AMT is asked to do either of the following two tasks: i) given the description of an AirBnB listing, estimate the per-night price on a regular day in 2018; ii) given the description of two AirBnB listings, select the listing that  has a higher price. We collect 5 direct labels for each data point in the training set and 9 labels for each data point in the test set. For comparisons, we randomly draw 1,841 pairs from the training set and ask 2 workers to compare their prices.
	
	\vspace{0.1cm}
	
	\noindent\textbf{Tasks.} We consider two tasks, motivated by real-world applications.
	\begin{enumerate}[wide, labelwidth=!, labelindent=0pt]
		\item In the first task, the goal is to predict the real listing price. This is motivated by the case where collecting the real price might be difficult to obtain or involves privacy issues. We assume that our training data, including both labels and comparisons, comes from AMT workers.
		\item In the second task, the goal is to predict the user-estimated price. This can be of particular interest to AirBnB company and house-owners, for deciding the best price of the listing. We do not use the real prices in this case; we use the average of 10 worker estimated prices for each listing in the test set as the ground truth label, and the training data also comes from our AMT results.
	\end{enumerate}


	\vspace{0.1cm}
	
\noindent\textbf{Raw Data Analysis.} Before we proceed to the regression task, we analyze the workers' performance for both tasks based on raw data in Table \ref{tab:comp-acc}. 
Our first goal is to compare a pairwise comparison to an induced comparison, where the induced comparison is obtained by 
making two consecutive direct label queries and subtracting them. Similar to \cite{shah2015estimation}, we observe that comparisons are more accurate than the induced comparisons.

We first convert labels into pairwise comparisons by comparing individual direct labels: namely, for each obtained labeled sample pair $(x_1,y_1),(x_2,y_2)$ where $y_1,y_2$ are the \emph{raw} labels from workers, we create a pairwise comparison that corresponds to comparing $(x_1,x_2)$ with label being sign$(y_1-y_2)$. We then compute the error rate of raw and label-induced comparisons for both Task 1 and 2. For Task 1, we directly compute the error rate w.r.t. the true listing price. For Task 2, we do not have the ground truth user prices; we instead follow the approach of \cite{shah2015estimation} to compute the fraction of disagreement between comparisons. Namely, in either the raw or label-induced setting, for every pair of samples $(x_i,x_j)$ we compute the majority of labels $z_{ij}$ based on all comparisons on $(x_i,x_j)$. The disagreement on $(x_i,x_j)$ is computed as the fraction of comparisons that disagrees with $z_{ij}$, and we compute the overall disagreement by averaging over all possible $(x_i,x_j)$ pairs.

If an ordinal query is equivalent to two consecutive direct queries and subtracting the labels, we would expect a similar accuracy/disagreement for the two kinds of comparisons. However our results in Table \ref{tab:comp-acc} show that this is not the case: direct comparison queries have better accuracy for Task 1, as well as a lower disagreement within collected labels. This shows that a comparison query cannot be replaced by two consecutive direct queries. We do not observe a large difference in the average time to complete a query in Table \ref{tab:comp-acc}; however the utility of comparisons in predicting price can be higher since they yield information about two labels. 
Further analysis of the raw data is given in Appendix \ref{sec:app_expr}.

\begin{table}[htb!]
	\begin{center}
		\begin{tabular}{l | c | c}
			\hline  Performance &  Comparisons & Labels  \\ \hline
			Task 1 Error & \bf 31.3\% & 41.3\%\\
			Task 2 Disagreement & \bf 16.4\% & 29.5\%\\
			Average Time & 64s & \bf 63s\\
			\hline
		\end{tabular}
	\end{center}
	\caption{\label{tab:comp-acc} Performance of comparisons versus labels for both tasks. 
	}
\end{table}



	\noindent\textbf{Results.} We plot the experimental results in Figure \ref{fig:res_AirBnB}. For nonparametric regression, \RR had a significant performance boost over the best nearest neighbor regressor under the same total worker time, especially for Task 1. For Task 2, we observe a smaller improvement, but \RR is still better than pure NN methods for all total time costs.
	
	For linear regression, we find that the performance of CLR varies greatly with $m$ (number of labels), whereas its performance does not vary as significantly with the number of comparisons. In fact, the errors of both CLR passive and active already plateau 
with a mere 50 comparisons, since the dimension of data is small ($d=10$). So deviating from our previous experiments, in this setting, we vary the number of labels in Figure \ref{fig:res_linear_actual} and \ref{fig:res_linear_useravg}.
	 As in the nonparametric case, CLR also outperforms the baselines in both tasks. For Task 1, the active and passive versions of CLR perform similarly, whereas active queries lead to a moderate performance boost on Task 2. This is probably because the error on Task 2 is much lower than that on Task 1 (see Table \ref{tab:comp-acc}), and active learning typically has an advantage over passive learning when the noise is not too high.
	
	\begin{figure}[ht!]
		\centering
		\begin{subfigure}[b]{0.5\textwidth}
			\centering
			\includegraphics[width=\textwidth]{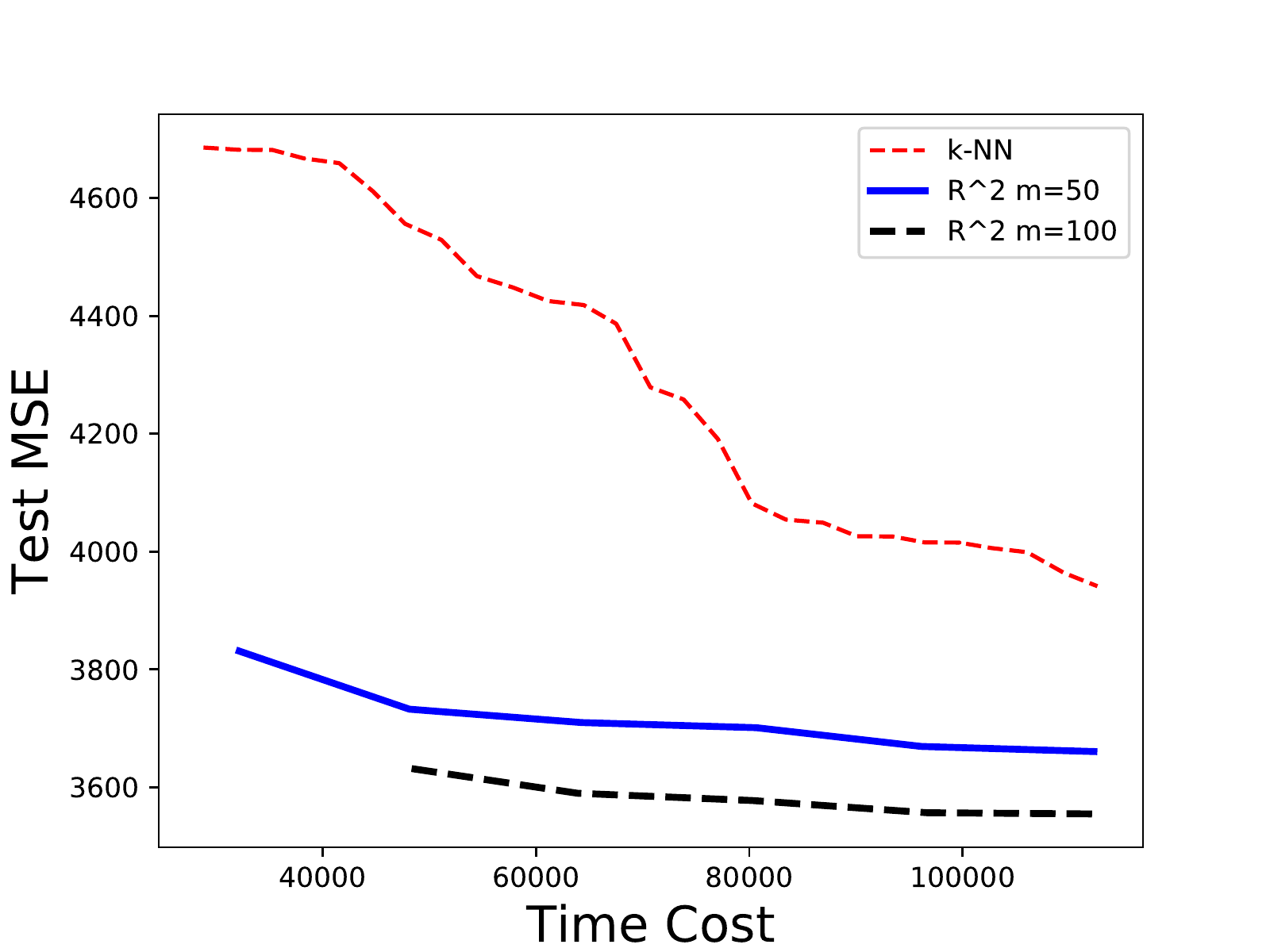}
			\caption{Nonparametric, Task 1}
			\label{fig:res_NN_actual}
		\end{subfigure}%
		\hfill
		\begin{subfigure}[b]{0.5\textwidth}
			\centering
			\includegraphics[width=\textwidth]{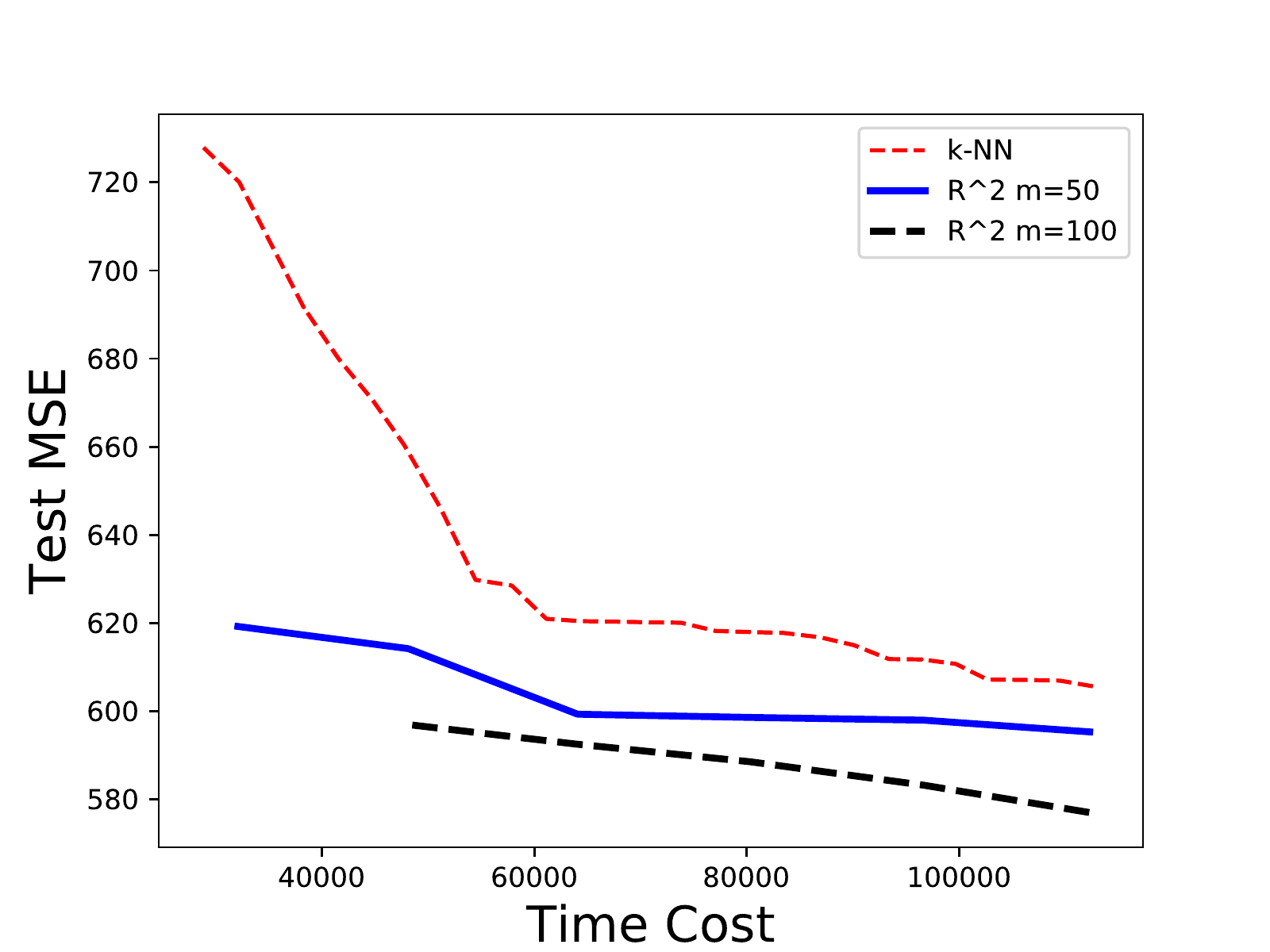}
			\caption{Nonparametric, Task 2}
			\label{fig:res_NN_useravg}	
		\end{subfigure}
		\begin{subfigure}[b]{0.5\textwidth}
	\centering
	\includegraphics[width=\textwidth]{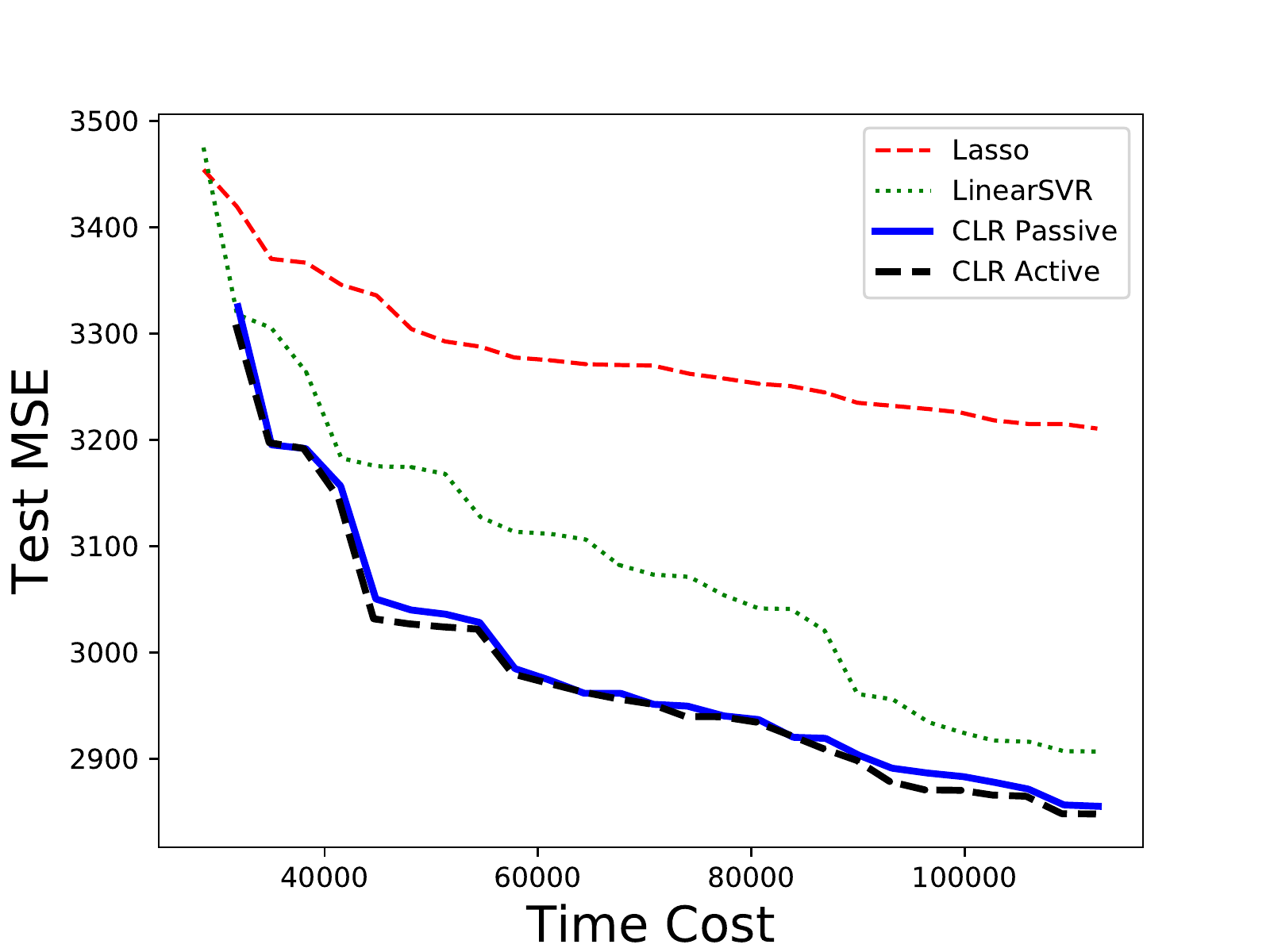}
	\caption{Linear, Task 1}
	\label{fig:res_linear_actual}
\end{subfigure}%
\hfill
\begin{subfigure}[b]{0.5\textwidth}
	\centering
	\includegraphics[width=\textwidth]{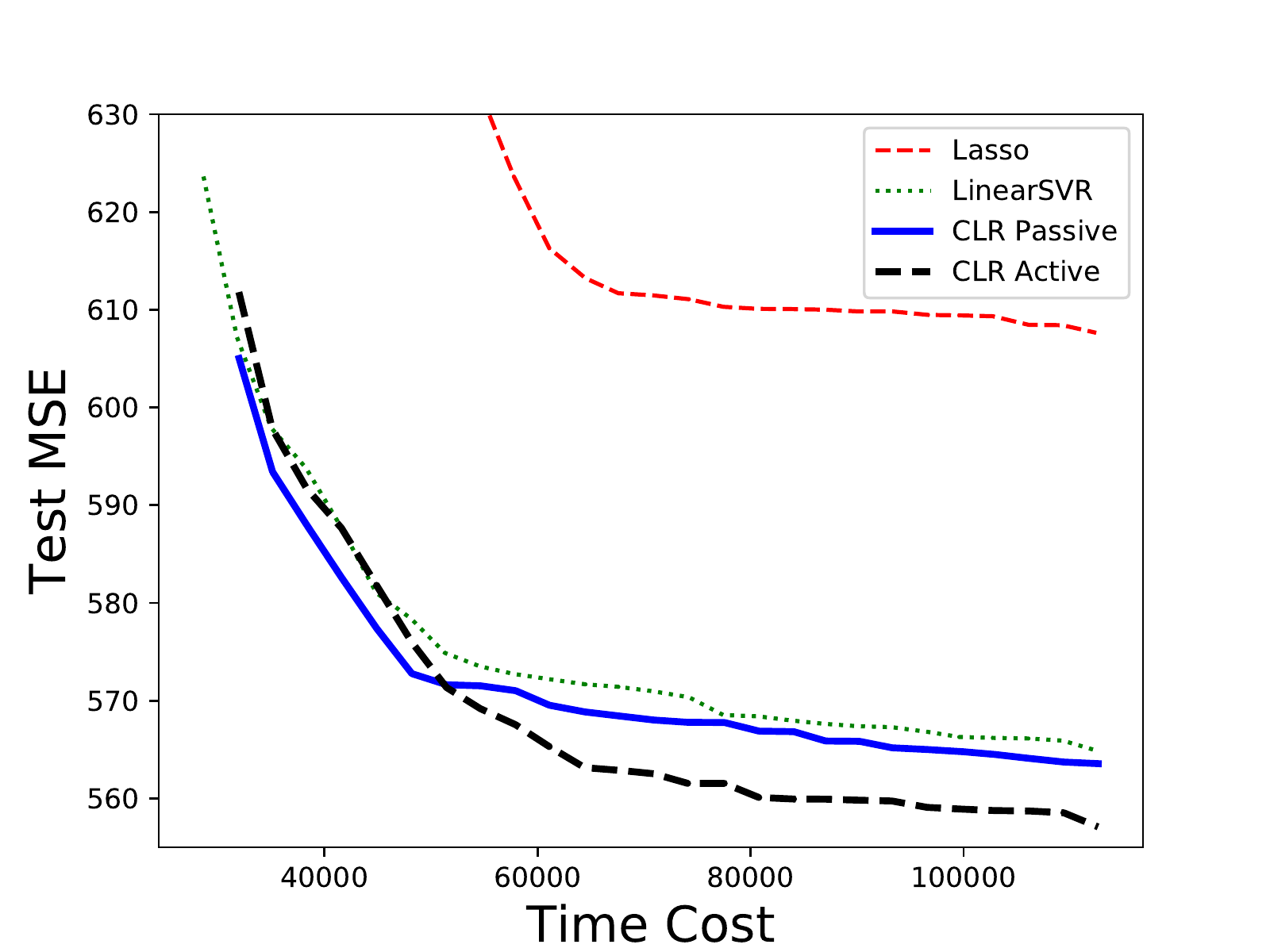}
	\caption{Linear, Task 2}
	\label{fig:res_linear_useravg}	
\end{subfigure}		
		\caption{\label{fig:res_AirBnB}Results for AirBnB price estimation. In (a)(b), each curve has a fixed $m$ with varied $n$; for (c)(d), each curve uses only 50 comparisons with a varied number of labels. For Figure (d), LASSO performs much worse than a LinearSVR and we only show part of the curve.
		}
		
	\end{figure}


	\section{\label{sec:discussion}Discussion} 
	We design (near) minimax-optimal algorithms for nonparametric and linear regression using additional ordinal information. 
	In settings where large amounts of ordinal information are available, we find that limited direct supervision suffices to obtain accurate estimates. We provide complementary minimax lower bounds, and illustrate our proposed algorithms on real and simulated datasets. 
	Since ordinal information is typically easier to obtain than direct labels, one might expect in these favorable settings the \RR algorithm to have lower effective cost than an algorithm based purely on direct supervision.
	
	Several directions exist for future work. On nonparametric regression side, it remains to extend our results to the case where the H\"{o}lder exponent $s>1$. In this setting the optimal rate $O\left(\numlabel^{-\frac{-2s}{2s+d}} \right)$ can be faster than the convergence rate of isotonic regression, which can make our algorithm sub-optimal. It is also important to address the setting where both direct and ordinal supervision are actively acquired. For linear regression, an open problem is to consider the bounded noise model for comparisons. Our results can be easily extended to the bounded noise case using the algorithm in \cite{hanneke2009theoretical}, however that algorithm is computationally inefficient. The best efficient active learning algorithm in this bounded noise setting \citep{awasthi2016learning} requires $m\geq O\left(d^{O(\frac{1}{(1-2\lambda)^4})}\right)$ comparisons, and a large gap remains between what can be achieved in the presence and absence of computational constraints.
	
	Motivated by practical applications in crowdsourcing, we list some further extensions to our results:
	
\noindent	\textbf{Partial orders:} In this paper, we focus on ordinal information either in the form of a total ranking or pairwise comparisons. In practice, ordinal information might come in the form of partial orders, where we have several subsets of unlabeled data ranked, but the relation between these subsets is unknown. A straightforward extension of our results in the nonparametric case leads to the following result: if we have $k$ (partial) orderings, each with $n_1, \ldots, n_k$ samples, and $\numlabel_1,\ldots,\numlabel_k$ samples in each ordering are labeled, we can show an upper bound on the MSE of $\numlabel_1^{-2/3}+\cdots+\numlabel_k^{-2/3}+(n_1+\cdots +n_k)^{-2s/d}$. It would be interesting to study the optimal rate, as well as to consider other smaller partial orders.

\noindent	\textbf{Other models for ordinal information:} Beyond the bounded noise model for comparison, we can consider other pairwise comparison models, like Plackett-Luce \citep{plackett1975analysis,luce2005individual} and Thurstone \citep{thurstone1927law}. These parametric models can be quite restrictive and can lead to unnatural results that we can recover the function values even \emph{without} querying any direct labels (see for example \cite{shah2015estimation}).
One might also consider pairwise comparisons with Tsybakov-like noise~\cite{tsybakov2004optimal} which have been studied in the classification setting~\cite{xu2017noise}; the main obstacle here is the lack of computationally-efficient algorithms that aggregate pairwise comparisons into a complete ranking under this noise model.

\noindent \textbf{Other classes of functions:} Several recent papers \citep{chatterjee2015,bellec2015sharp,bellec2018sharp,han2017isotonic} demonstrate the adaptivity (to ``complexity'' of the unknown parameter) of the MLE in shape-constrained problems. Understanding precise assumptions on the underlying smooth function which induces a low-complexity isotonic regression problem is interesting future work. 

	
\section*{Acknowledgements}
We thank Hariank Muthakana for his help on the age prediction experiments. This work has been partially supported by
		the Air Force Research Laboratory (8750-17-2-0212), the National Science Foundation (CIF-1763734 and DMS-1713003), Defense Advanced Research
		Projects Agency (FA8750-17-2-0130),
		and the Multidisciplinary Research Program of the Department
		of Defense (MURI N00014-00-1-0637).
	
	\vskip 0.2in
	\bibliography{yichongref}
	
	\appendix
	
	\section{Additional Experimental Results }
	\phantomsection
	\label{sec:app_expr}
	\textbf{Relation between true and user estimated price.} Figure \ref{fig:scatter_prices} shows a scatter plot of the true listing prices of AirBnB data with respect to the user estimated prices. Although the true prices is linearly correlated with the user prices (with $p$-value $6e-20$), the user price is still very different from true price even if we take the average of 5 labelers. The average of all listings' true price is higher than the average of all user prices by 25 dollars, partially explaining the much higher error when we use user prices to estimate true prices.
	
	\begin{figure}[htb!]
	\centering
	\includegraphics[width=0.7\linewidth]{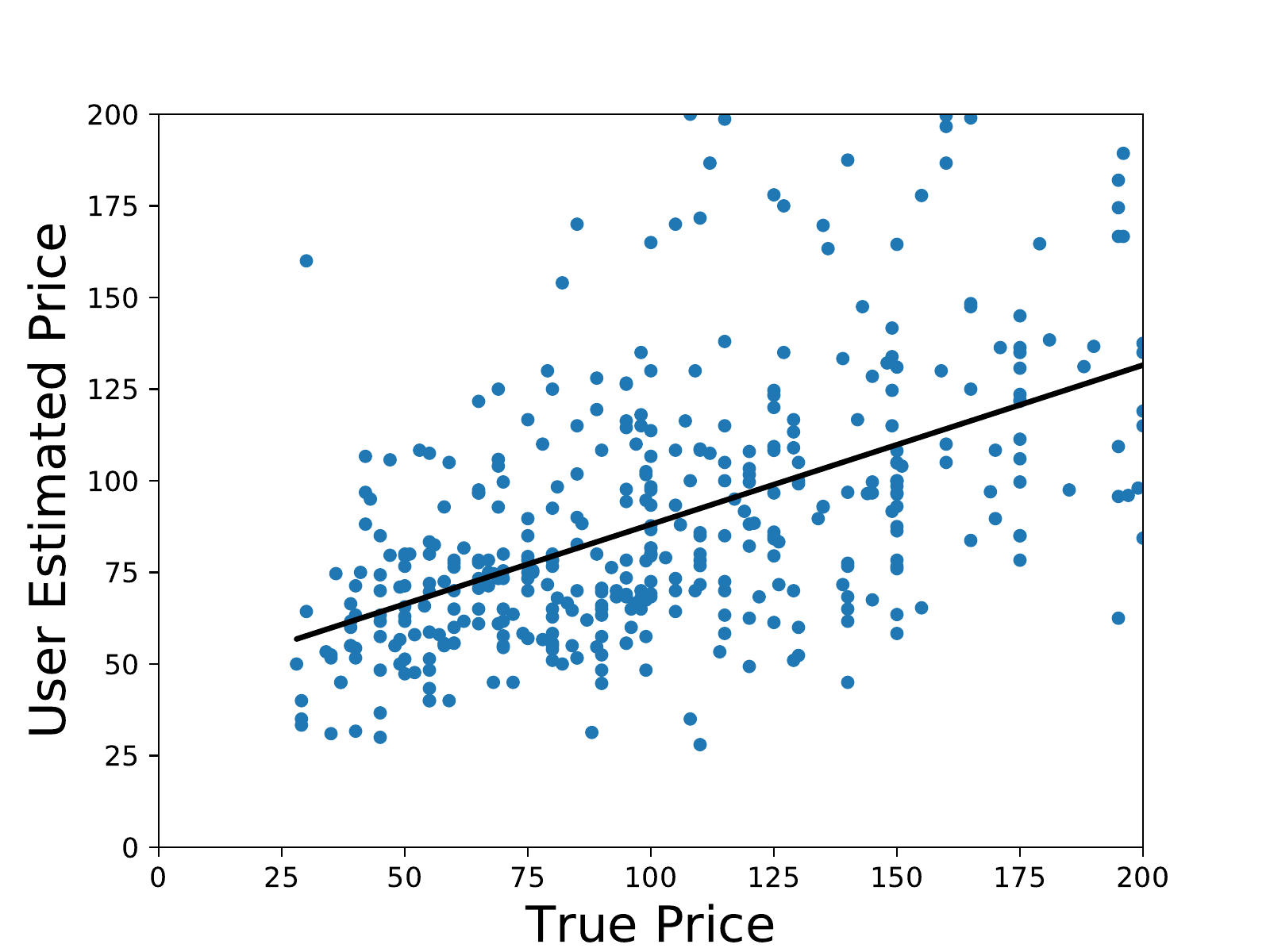}
	\caption{Scatter plot of true prices w.r.t average user estimated prices, along with the ordinary least square result. We only show prices smaller than 200 to make the relation clearer.}
	\label{fig:scatter_prices}
\end{figure}	

\noindent \textbf{An alternative to using RankSVM.} As an alternative to training RankSVM, we can also use nearest neighbors on Borda counts to take into account the structure of feature space: for each sample $x$, we use the score $s(x)=\frac{1}{k} \sum_{x'\in k\text{-NN}(x)} \text{Borda}(x')$, where $k$-NN$(x)$ is the $k$-th nearest neighbor of $x$ in the feature space, including $x$ itself. The scores are then used to produce a ranking. When $c$ is large, this method does provide an improvement over the label-only baselines, but generally does not perform as well as our rankSVM method. The results when cost ratio $c=10$ and comparisons are perfect are depicted in Figure \ref{fig:comp_nonoise_nnborda}.	We use $k=25$ for deciding the nearest neighbors for Borda counts, and 5-NN as the final prediction step. While using \RR with Borda counts do provide a gain over label-only methods, the improvement is less prominent than using rankSVM.
\begin{figure}[htb!]
	\centering
	\includegraphics[width=0.5\textwidth]{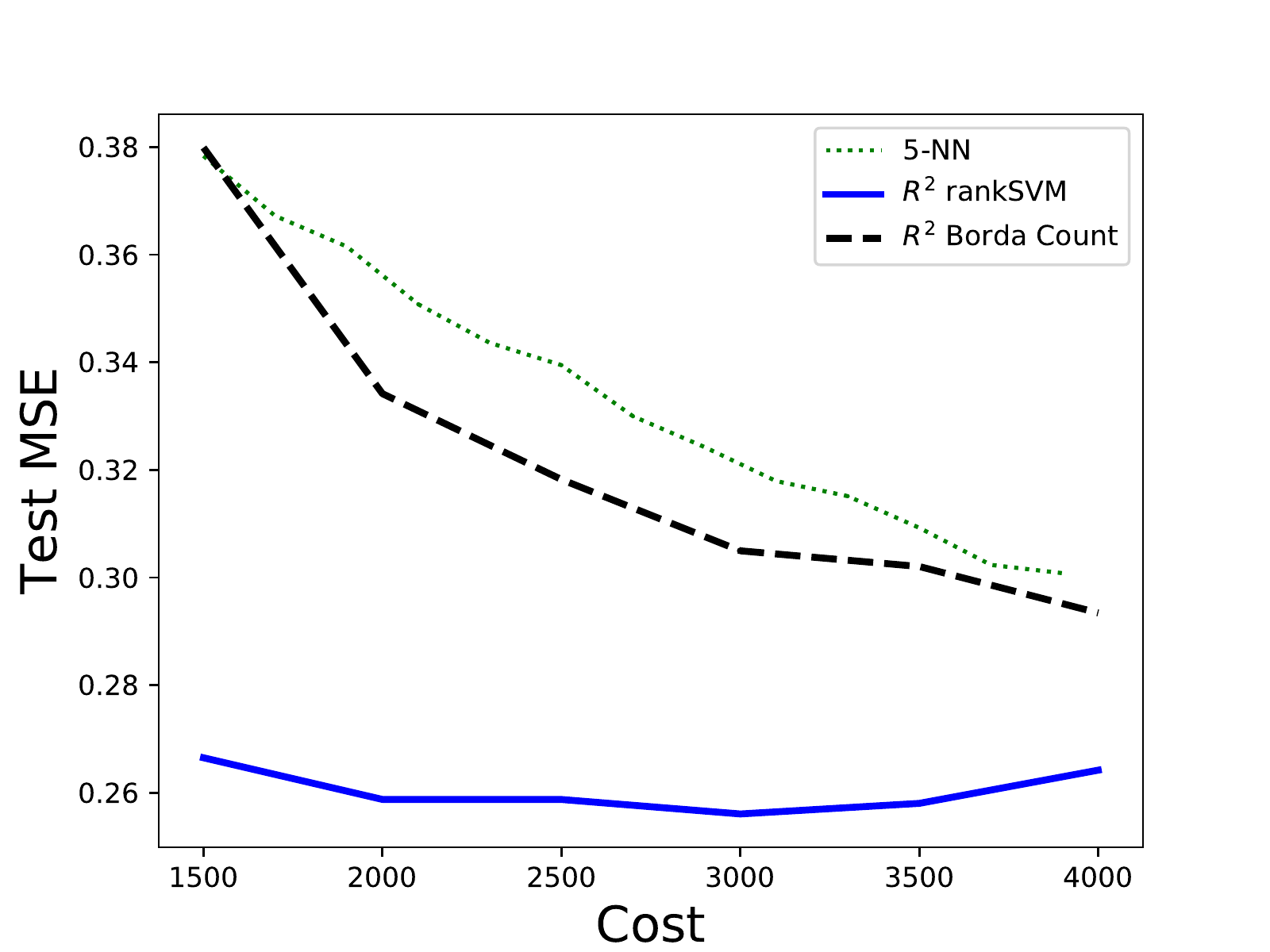}
	
	\caption{\label{fig:comp_nonoise_nnborda} Experimental results on synthetic dataset for \RR with comparisons, using nearest neighbor with Borda counts. Both \RR algorithms uses 5-NN as the final prediction step.
	}
	
\end{figure}

For estimating AirBnB price the results are shown in Figure \ref{fig:res_NN_AirBnB_nnborda}. For task 1, NN of Borda counts introduces an improvement similar to (or less than) RankSVM, but for task 2 it is worse than nearest neighbors. We note that for task 1, the best number of nearest neighbors of Borda counts is 50, whereas for task 2 it is 5 (close to raw Borda counts). We suspect this is due to the larger noise in estimating true price, however a close examination for this observation remains as future work.
\begin{figure}[htb!]
	\centering
	\begin{subfigure}[b]{0.5\textwidth}
		\centering
		\includegraphics[width=\textwidth]{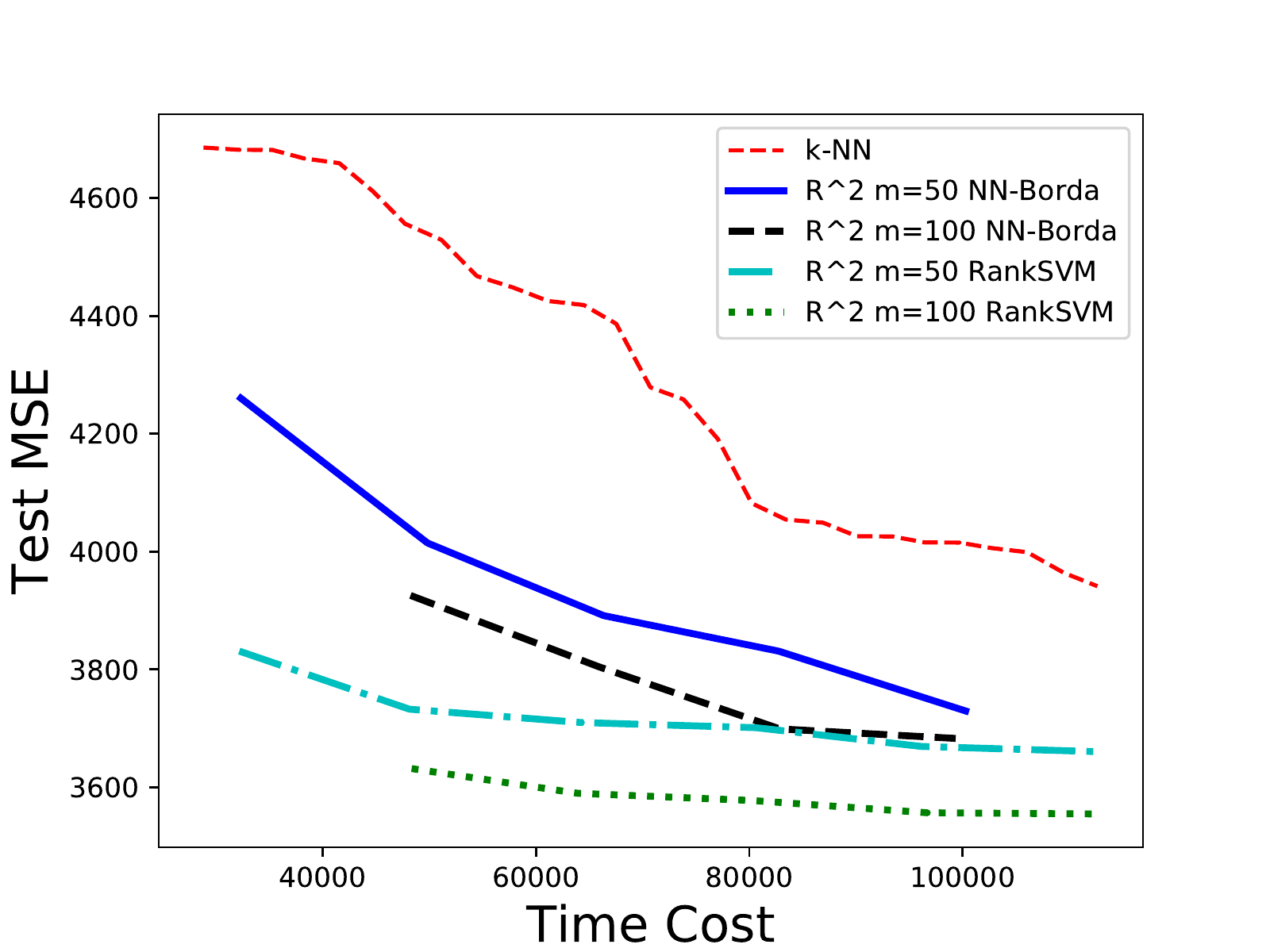}
		\caption{Task 1}
		\label{fig:res_NN_actual_nnborda}
	\end{subfigure}%
	\hfill
	\begin{subfigure}[b]{0.5\textwidth}
		\centering
		\includegraphics[width=\textwidth]{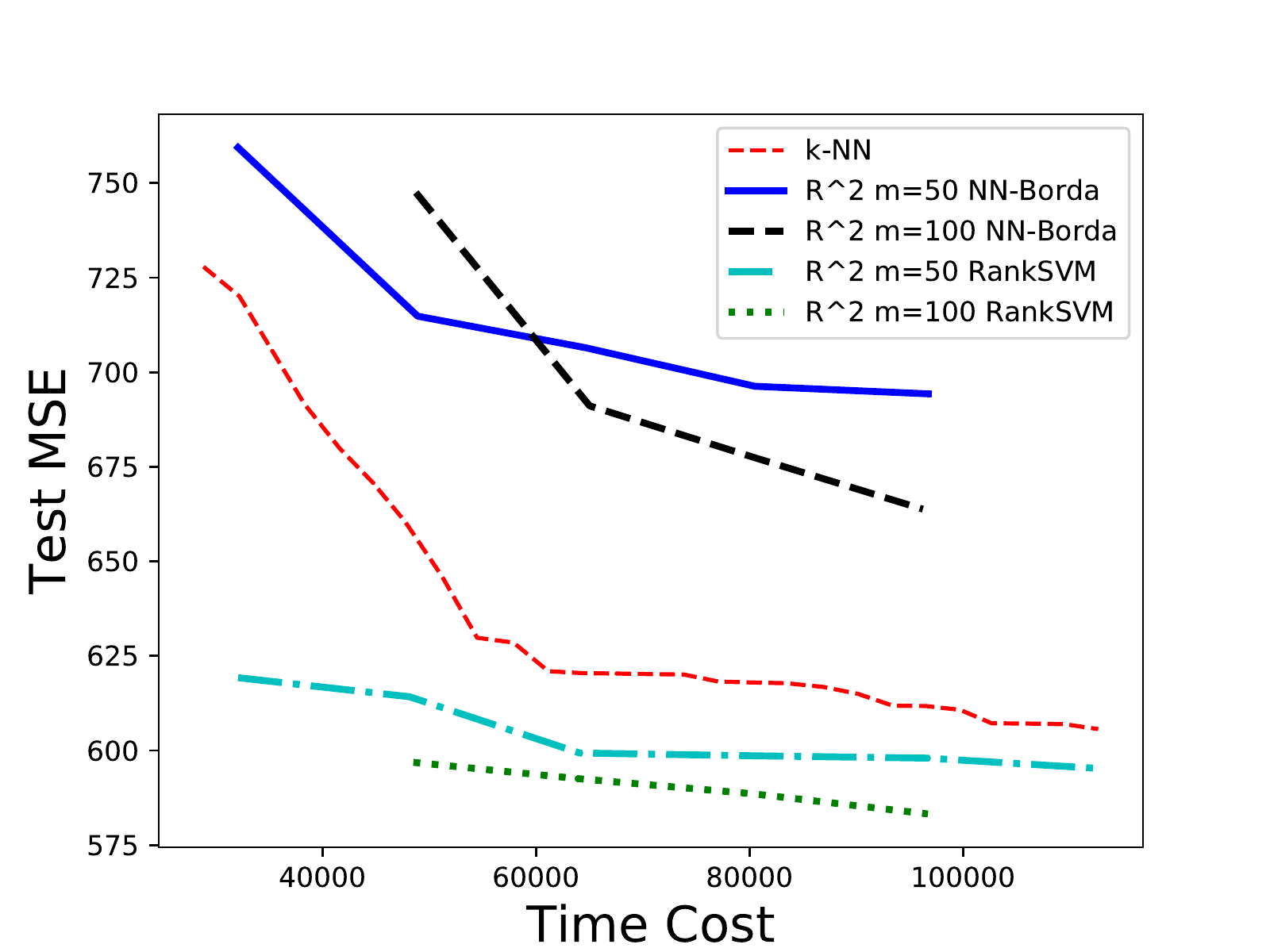}
		\caption{Task 2}
		\label{fig:res_NN_useravg_nnborda}	
	\end{subfigure}
	
	\caption{\label{fig:res_NN_AirBnB_nnborda}Experiments on AirBnB price estimation for nonparametric methods, using nearest neighbors of Borda count. Figure (a) uses 50-NN for averaging Borda counts, while Figure (b) uses 5-NN.
	}
	
\end{figure}

	\section{Detailed Proofs}
	\subsection{Proof of Theorem \ref{thm:mainupper}}
	
	Without loss of generality we assume throughout the proof that we re-arrange the samples so that the true ranking of the samples $\pi$ is the identity permutation, i.e. that $f(X_1) \leq f(X_2) \leq \ldots \leq f(X_n)$.
	We let $C, c, C_1, c_1, \ldots$ denote universal positive constants. As is standard in nonparametric regression
	these constants may depend on the dimension $d$ but we suppress this dependence.

	For a random point $X\in \mathcal{X}$, let $X_\alpha$ be the nearest neighbor of $X$ in the labeled set $\mathcal{L}$.
	We decompose the MSE as
	\begin{align*}
	\mathbb{E}\left[(\widehat{f}(X)-\regfunc(X) )^2\right]
	&\leq 2\mathbb{E}\left[(\widehat{f}(X)-\regfunc(X_\alpha) )^2\right]+2\mathbb{E}\left[(\regfunc(X_\alpha)-\regfunc(X))^2 \right].\\
	\end{align*}
	Under the assumptions of the theorem we have the following two results which provide bounds on the two terms
	in the above decomposition.
	\begin{lemma}
		\label{lem:aa}
		For a constant $C > 0$ we have that,
		\begin{align*}
		\mathbb{E}\left[(\regfunc(X_\alpha)-\regfunc(X))^2 \right] \leq C n^{-2s/d}.
		\end{align*}
	\end{lemma}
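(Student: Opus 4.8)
The plan is to reduce the claim to a bound on the expected nearest-neighbor distance and then invoke a standard sample-spacing estimate. First, since $\regfunc \in \mathcal{F}_{s,L}$, pointwise we have $(\regfunc(X_\alpha) - \regfunc(X))^2 \le L^2 \|X_\alpha - X\|_2^{2s}$, so it suffices to show $\mathbb{E}\big[\|X_\alpha - X\|_2^{2s}\big] \le C' n^{-2s/d}$, where $X_\alpha$ is the nearest neighbor of the independent test point $X$ among the $n$ i.i.d.\ points of $\mathcal{U} = \{X_1,\dots,X_n\}$ drawn from $\mathbb{P}_X$.

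Next, I would control the upper tail of $\|X_\alpha - X\|_2$. Conditioning on $X = x$, the event $\{\|X_\alpha - x\|_2 > t\}$ coincides with the event that none of $X_1,\dots,X_n$ falls in $\ball{x}{t}$, which has probability $(1 - \mathbb{P}_X(\ball{x}{t}))^n \le \exp\!\big(-n\,\mathbb{P}_X(\ball{x}{t})\big)$. Because $p \ge p_{\min} > 0$ on $\mathcal{X}$ and $\mathcal{X} \subseteq [0,1]^d$, a standard argument (see \cite{gyorfi2006distribution}) gives constants $c_1 > 0$ and $t_0 \in (0,\sqrt{d}]$, depending only on $d$ and $p_{\min}$, such that $\mathbb{P}_X(\ball{x}{t}) \ge c_1 t^d$ for every $x \in \mathcal{X}$ and every $t \le t_0$; moreover $\|X_\alpha - X\|_2 \le \operatorname{diam}(\mathcal{X}) \le \sqrt{d}$ deterministically. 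Hence $\mathbb{P}\big(\|X_\alpha - X\|_2 > t\big) \le \exp(-c_1 n t^d)$ for $t \le t_0$, and this probability is $0$ for $t > \sqrt{d}$.

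Finally, I would integrate the tail: writing $\mathbb{E}\big[\|X_\alpha - X\|_2^{2s}\big] = \int_0^{\sqrt{d}} 2s\, t^{2s-1}\,\mathbb{P}\big(\|X_\alpha - X\|_2 > t\big)\,dt$ and splitting the integral at $t_0$. On $[0,t_0]$ the substitution $u = c_1 n t^d$ turns the integral into $\tfrac{2s}{d}(c_1 n)^{-2s/d}\int_0^{c_1 n t_0^d} u^{2s/d - 1} e^{-u}\,du \le \tfrac{2s}{d}\Gamma(2s/d)(c_1 n)^{-2s/d}$, which is $O(n^{-2s/d})$. On $[t_0,\sqrt{d}]$ the tail is at most $e^{-c_1 n t_0^d}$, contributing at most $(\sqrt{d})^{2s} e^{-c_1 n t_0^d}$, which decays exponentially in $n$ and is therefore dominated by $n^{-2s/d}$. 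Combining the two pieces and multiplying by $L^2$ yields the stated bound with a suitable constant $C$.

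The only step that is not entirely routine is the uniform lower bound $\mathbb{P}_X(\ball{x}{t}) \ge c_1 t^d$: for $x$ in the interior of $\mathcal{X}$ the ball contributes essentially its full volume, but for $x$ near the boundary (e.g.\ a corner of $[0,1]^d$) only a fixed fraction of the ball lies inside, so one needs the mild regularity of $\mathcal{X}$ — here, that it is contained in the cube $[0,1]^d$ (or, more generally, satisfies the usual strong-density condition) — to keep $c_1$ a positive, dimension-dependent constant. Everything downstream is a standard Gamma-integral computation.
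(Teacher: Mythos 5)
Your proposal is correct, and the reduction is the same as the paper's: apply the H\"{o}lder condition to replace $(\regfunc(X_\alpha)-\regfunc(X))^2$ by $L^2\|X_\alpha-X\|_2^{2s}$ and then bound a moment of the nearest-neighbor distance. Where you diverge is in how that moment is controlled. The paper cites Lemma~6.4 and Exercise~6.7 of \cite{gyorfi2006distribution} to get $\mathbb{E}[\|X_\alpha-X\|_2^2]\leq c n^{-2/d}$ and then applies Jensen's inequality (concavity of $t\mapsto t^s$ for $0<s\leq 1$) to convert this into a bound on the $2s$-th moment; you instead prove the $2s$-th moment bound from scratch via the exponential tail $\mathbb{P}(\|X_\alpha-X\|_2>t)\leq \exp(-c_1 n t^d)$ and a Gamma-integral computation. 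Both are valid. Your route is self-contained and gives the moment of the right order directly (it would also work for $s>1$, where Jensen goes the wrong way), at the cost of needing the uniform small-ball estimate $\mathbb{P}_X(\ball{x}{t})\geq c_1 t^d$, which you correctly flag as requiring a standard-support/strong-density regularity condition on $\mathcal{X}$; the paper's citation quietly carries an equivalent assumption (the version of the Gy\"{o}rfi et al.\ lemma used here relies on the two-sided density bounds on $\mathcal{X}$ that the paper imposes), so this is not a gap relative to the paper. One small point worth noting: the appendix of the paper momentarily describes $X_\alpha$ as the nearest neighbor in the \emph{labeled} set $\mathcal{L}$, but the $n^{-2s/d}$ rate (and the algorithm's final nearest-neighbor step) require it to be the nearest neighbor among all $n$ points of $\mathcal{U}$, which is the convention you adopt and the one consistent with the statement being proved.
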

	\begin{lemma}
		\label{lem:siva_tech}
		For any $0 < \delta \leq 1/2$ we have that there is a constant $C >0$ such that:
		\begin{align*}
		&\mathbb{E}\left[(\widehat{f}(X)-\regfunc(X_\alpha) )^2\right] \leq 4 \delta M^2+\\
		& \frac{C  \log(m/\delta) \log n \log(1/\delta)}{m} \left[\sum_{k=1}^m \left(\mathbb{E}\left[(\widehat{y}_{t_k}-\regfunc(X_{t_k}) )^2\right]\right)+ \sum_{k=0}^m\left(\mathbb{E}\left[(\regfunc(X_{t_{k+1}})-\regfunc(X_{t_k}) )^2\right]\right)\right].
		\end{align*}

	\end{lemma}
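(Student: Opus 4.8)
The plan is to first prove a deterministic, per-``block'' bound on $(\widehat{f}(X)-\regfunc(X_\alpha))^2$, then take expectations and control the probability with which each block is used. Assume without loss of generality $\regfunc(X_1)\le \cdots\le \regfunc(X_n)$, sort the labeled indices as $t_1<\cdots<t_m$, and introduce virtual endpoints $t_0:=0$, $t_{m+1}:=n+1$ with $\regfunc(X_{t_0}):=-M$, $\regfunc(X_{t_{m+1}}):=M$ and $\widehat{y}_{t_0}:=0$. Partition $\{1,\dots,n\}$ into blocks $\mathcal{B}_k=\{j:\ t_k\le j<t_{k+1}\}$, $k=0,\dots,m$. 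By the imputation rule of Algorithm~\ref{algo:compreg_iso} and the fact that the ranking is perfect, if the nearest neighbor $X_\alpha$ of a test point $X$ in $\mathcal{U}$ satisfies $\alpha\in\mathcal{B}_k$, then $\widehat{f}(X)=\widehat{y}_{t_k}$ and $\regfunc(X_{t_k})\le \regfunc(X_\alpha)\le \regfunc(X_{t_{k+1}})$; combining $(a+b)^2\le 2a^2+2b^2$ with this sandwiching gives, deterministically,
\[
(\widehat{f}(X)-\regfunc(X_\alpha))^2\le 2\big(\widehat{y}_{t_k}-\regfunc(X_{t_k})\big)^2+2\big(\regfunc(X_{t_{k+1}})-\regfunc(X_{t_k})\big)^2 .
\]

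Next I would condition on $\mathcal{U}$, $\mathcal{L}$ and the labels, and write $p_k:=\mathbb{P}_X(\alpha\in\mathcal{B}_k)=\sum_{j\in\mathcal{B}_k}\mathbb{P}_X(V_j)$, where $V_j$ is the Voronoi cell of $X_j$ with respect to $\mathcal{U}$. Averaging the pointwise bound over the test point gives
\[
\mathbb{E}_X\!\left[(\widehat{f}(X)-\regfunc(X_\alpha))^2\ \middle|\ \mathcal{U},\mathcal{L},y\right]\ \le\ \sum_{k=0}^m p_k\Big(2\big(\widehat{y}_{t_k}-\regfunc(X_{t_k})\big)^2+2\big(\regfunc(X_{t_{k+1}})-\regfunc(X_{t_k})\big)^2\Big).
\]
Thus everything reduces to showing that, outside an event of probability at most $\delta$ over the draw of $\mathcal{U}$ and $\mathcal{L}$, we have $\max_{0\le k\le m}p_k\le C\log(m/\delta)\log n\log(1/\delta)/m$.

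This uniform bound on the block masses is the crux, and I would obtain it from two estimates using different sources of randomness. First, a covering argument for Voronoi cells: tile $[0,1]^d$ by roughly $n/\log(n/\delta)$ congruent subcubes, each of $\mathbb{P}_X$-mass of order $\log(n/\delta)/n$; since $p\ge p_{\min}$, each tile has expected sample count of order $\log(n/\delta)$, so a Chernoff bound together with a union bound over tiles shows that with probability $\ge 1-\delta/2$ every tile is nonempty, and on that event every $V_j$ is contained in a ball of radius $O(\text{subcube diameter})$, which forces $\max_j\mathbb{P}_X(V_j)\le C_1\log(n/\delta)/n$. Second, a max-spacing bound: because $\mathcal{L}$ is a uniformly random size-$m$ subset of $\{1,\dots,n\}$, $\mathbb{P}(|\mathcal{B}_k|>s)\le (1-s/n)^m\le e^{-sm/n}$, and a union bound over the $m$ blocks gives $\max_k|\mathcal{B}_k|\le C_2 n\log(m/\delta)/m$ with probability $\ge 1-\delta/2$. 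On the intersection, $p_k\le |\mathcal{B}_k|\cdot\max_j\mathbb{P}_X(V_j)\le C\log(m/\delta)\log(n/\delta)/m$, and bounding $\log(n/\delta)\le \log n\cdot\log(1/\delta)$ up to constants yields the stated rate.

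Finally I would assemble the pieces. On the good event, substituting the bound on $\max_k p_k$ into the conditional expectation gives the two sums over $k$ multiplied by $C\log(m/\delta)\log n\log(1/\delta)/m$; the two boundary blocks $k\in\{0,m\}$ (where the algorithm outputs $0$ or the predecessor is undefined) only produce terms already present in the stated sums plus a lower-order $O((\mathrm{polylog}/m)M^2)$ that is absorbed. On the complementary event (probability $\le\delta$) I bound $(\widehat{f}(X)-\regfunc(X_\alpha))^2\le (2M)^2$ crudely, using the box constraint $\widehat{y}_{t_k}\in[-M,M]$ from \eqref{eqn:isotonic} and $\regfunc\in[-M,M]$; this contributes the $4\delta M^2$. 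Taking the total expectation and using that the indicator of the good event is at most $1$ to drop it in front of the (nonnegative) main sum yields the claimed inequality. I expect the main obstacle to be the third step --- specifically getting the Voronoi-cell net argument to deliver the correct polylogarithmic factor and matching it against the max-spacing bound --- together with the somewhat fussy bookkeeping of the two boundary blocks.
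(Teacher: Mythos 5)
Your proposal is correct and follows essentially the same route as the paper: condition on a probability-$(1-\delta)$ event combining a uniform bound on the nearest-neighbor (Voronoi) cell masses of order $\log n\log(1/\delta)/n$ with a max-spacing bound of order $n\log(m/\delta)/m$ for the labeled indices, apply the sandwich $\regfunc(X_{t_k})\leq\regfunc(X_\alpha)\leq\regfunc(X_{t_{k+1}})$ with $(a+b)^2\leq 2a^2+2b^2$ blockwise, and pay $4\delta M^2$ on the complement. The only difference is cosmetic: you establish the Voronoi-mass bound by an explicit tiling-plus-Chernoff argument, whereas the paper invokes a Vapnik--Chervonenkis ball lemma of Chaudhuri and Dasgupta; both yield the same polylogarithmic factor.
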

	Taking these results as given we can now complete the proof of the theorem. We first note that the first term in the upper bound in Lemma~\ref{lem:siva_tech} is simply the MSE in an isotonic regression problem, and using standard risk bounds for isotonic regression (see for instance, Theorem~2.2 in~\citet{zhang2002risk}) we obtain that for a constant $C > 0$:
	\begin{align*}
	\sum_{k=1}^m \mathbb{E}\left[(\widehat{y}_{t_k}-\regfunc(X_{t_k}) )^2\right] \leq C m^{2/3}.
	\end{align*}
	Furthermore, since $f(X_{\labsam{m+1}})-f(X_{\labsam{0}}) \leq 2M$, and the function values are increasing we obtain that:
	\begin{align*}
	\sum_{k=0}^m\left(\mathbb{E}\left[(\regfunc(X_{t_{k+1}})-\regfunc(X_{t_k}) )^2\right]\right) \leq 4M^2.
	\end{align*}
	Now, choosing $\delta=\max\{n^{-2s/d},1/m\}$ we obtain:
	\begin{align*}
	\mathbb{E}\left[(\widehat{f}(X)-\regfunc(X) )^2\right]
	&\leq C_1 m^{-2/3}\log^2 n\log m+ C_2 n^{-2s/d},
	\end{align*}
	as desired. We now prove the two technical lemmas to complete the proof.

	\subsubsection{Proof of Lemma~\ref{lem:aa}}
	The proof of this result is an almost immediate consequence of the following result from~\cite{gyorfi2006distribution}.
	\begin{lemma}[\cite{gyorfi2006distribution}, Lemma 6.4 and Exercise 6.7]
		\label{lemma:cite_nn} Suppose that there exist positive
		constants $p_{\min}$ and $p_{max}$ such that
		$p_{\min} \leq p(x) \leq p_{\max}$. Then, there is a constant $c > 0$, such that
		\begin{align*}
		\mathbb{E}[\|X_\alpha-X\|_2^2]\leq cn^{-2/d}.
		\end{align*}
	\end{lemma}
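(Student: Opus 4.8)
\begin{proofarg}{Proof Sketch}
The plan is to obtain a bound on $\mathbb{E}[\|X_\alpha - X\|_2^2]$ that holds \emph{uniformly} over the location of the query point, and then average over $X$. Fix $X = x \in \mathcal{X}$ and regard the samples $X_1, \ldots, X_n$ as the only source of randomness (they are independent of $x$). Writing $\mu(\ball{x}{r}) = \int_{\ball{x}{r} \cap \mathcal{X}} p(u)\, du$ for the probability mass of the ball of radius $r$ around $x$, the event $\{\|X_\alpha - x\|_2 > r\}$ is exactly the event that none of the $n$ samples lands in $\ball{x}{r}$, so
\begin{align*}
\mathbb{P}(\|X_\alpha - x\|_2 > r) = \left(1 - \mu(\ball{x}{r})\right)^n \leq \exp\left(-n\, \mu(\ball{x}{r})\right).
\end{align*}
Everything then reduces to showing that small balls carry enough mass.

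The key estimate is a lower bound $\mu(\ball{x}{r}) \geq p_{\min}\, c_d\, r^d$, valid for all $x \in \mathcal{X}$ and all $r \leq r_0$, where $r_0$ and $c_d > 0$ are constants. This follows from the density lower bound $p \geq p_{\min}$ together with a volume bound $\text{Vol}(\ball{x}{r} \cap \mathcal{X}) \geq c_d r^d$. For $\mathcal{X} = [0,1]^d$ the latter is elementary: the worst case is a corner, for which at least a $2^{-d}$ fraction of the ball lies inside the cube whenever $r \leq 1$ (the orthant portion of $\ball{x}{r}$ has all coordinates in $[0,r] \subseteq [0,1]$), giving $c_d = 2^{-d}\omega_d$ with $\omega_d$ the unit-ball volume and $r_0 = 1$. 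Substituting into the tail bound above yields the sub-exponential estimate $\mathbb{P}(\|X_\alpha - x\|_2 > r) \leq \exp(-n p_{\min} c_d r^d)$ for $r \leq r_0$.

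To convert the tail bound into a second-moment bound I would use the layer-cake identity $\mathbb{E}[\|X_\alpha - x\|_2^2] = \int_0^{\infty} \mathbb{P}(\|X_\alpha - x\|_2 > \sqrt{t})\, dt$ and split the integral at $t = r_0^2$. On the far range $t \geq r_0^2$ the integrand is bounded by the constant $\exp(-n p_{\min} c_d r_0^d)$ (the probability is monotone in $r$), and since $\|X_\alpha - x\|_2^2 \leq d$ on the cube this contributes at most $d\exp(-n p_{\min} c_d r_0^d)$, exponentially small in $n$ and hence negligible against $n^{-2/d}$. On the near range $t \leq r_0^2$ I would plug in the sub-exponential tail and bound $\int_0^{r_0^2} \exp(-n p_{\min} c_d\, t^{d/2})\, dt$ by the substitution $u = n p_{\min} c_d\, t^{d/2}$, which rescales it to $\frac{2}{d}(n p_{\min} c_d)^{-2/d}\int_0^{\infty} e^{-u} u^{2/d - 1}\, du = \frac{2}{d}\Gamma(2/d)\,(n p_{\min} c_d)^{-2/d}$. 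This is $C n^{-2/d}$ for a constant $C = C(d, p_{\min})$, and the bound holds for \emph{every} $x \in \mathcal{X}$, so taking expectation over $X$ preserves it.

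The only genuinely delicate point is the uniform volume lower bound $\text{Vol}(\ball{x}{r} \cap \mathcal{X}) \geq c_d r^d$ over all $x \in \mathcal{X}$: a query point on a sharp protrusion or cusp of $\mathcal{X}$ could see an atypically small fraction of its ball inside the support, inflating its nearest-neighbor distance. This is why some regularity of the domain (here, that $\mathcal{X}$ is the cube $[0,1]^d$, or more generally a set satisfying a uniform interior-cone condition) is needed; for the cube the corner estimate disposes of it cleanly, so this obstacle does not actually bite.
\end{proofarg}
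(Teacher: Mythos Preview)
Your proof sketch is correct and is essentially the standard argument one finds in the cited reference. Note, however, that the paper does not supply its own proof of this lemma: it is quoted directly from \cite{gyorfi2006distribution} (Lemma~6.4 and Exercise~6.7) and used as a black box in the proof of Lemma~\ref{lem:aa}. So there is no paper-specific proof to compare against; your argument simply fills in what the paper outsources to the textbook, and it does so faithfully---the tail bound via $(1-\mu(\ball{x}{r}))^n$, the small-ball lower bound from the density and corner-of-cube volume estimate, and the layer-cake integration with the Gamma-function substitution are exactly the ingredients of the textbook proof.
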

	Using this result  and the H\"{o}lder condition we have
	\begin{align*}
	\mathbb{E}\left[(\regfunc(X_\alpha)-\regfunc(X))^2 \right]&\leq L\mathbb{E}\left[\|X_\alpha-X\|_2^{2s} \right]\\
	&\stackrel{\text{(i)}}{\leq} L\left(\mathbb{E}\left[\|X_\alpha-X\|_2^{2} \right]\right)^s\\
	&\leq c n^{-2s/d},
	\end{align*}
	where (i) uses Jensen's inequality. We now turn our attention to the remaining technical lemma.
	
	\subsubsection{Proof of Lemma~\ref{lem:siva_tech}}
	We condition on a certain favorable configuration of the samples that holds with high-probability.
	For a sample $\{X_1,\ldots,X_n\}$ let us denote by
	\begin{align*}
	q_i :=\mathbb{P}(X_\alpha=X_i),
	\end{align*}where $X_\alpha$ is the nearest neighbor of $X$. Furthermore, for each
	$k$ we recall that since we have re-arranged the samples so that $\pi$ is the identity permutation we can measure the distance between adjacent labeled samples in the ranking by $t_k - t_{k+1}.$
	The following result shows that the labeled samples are roughly uniformly spaced (up to a logarithmic factor) in the ranked sequence, and that each point $X_i$ is roughly equally likely (up to a logarithmic factor) to be the nearest neighbor of a randomly chosen point.
	\begin{lemma}
		\label{lem:cond_event}
		There is a constant $C > 0$ such that with probability at least $1 - \delta$ we have that the following two
		results hold:
		\begin{enumerate}
			\item We have that, 
			\begin{align}
			\label{eqn:tryone}
			\max_{1\leq j\leq n} q_i \leq \frac{C d\log(1/\delta)\log n}{n}.
			\end{align}
			\item Let us take $\labsam{m+1} :=n+1$, then
			\begin{align}
			\label{eqn:trytwo}
			\max_{k\in [m+1]} \labsam{k}-\labsam{k-1}\leq \frac{C n\log (m/\delta)}{m}.
			\end{align}
		\end{enumerate}
	\end{lemma}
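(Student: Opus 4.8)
\begin{proofarg}{Proof Sketch}
The two assertions are unrelated and I would establish them by separate arguments. Claim~\eqref{eqn:trytwo} is an elementary statement about uniformly random subsets, whereas Claim~\eqref{eqn:tryone} --- a uniform bound on the $\mathbb{P}_X$-mass of every nearest-neighbor (Voronoi) cell --- is the substantive one and rests on a cone-covering argument.

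\textbf{The gap bound~\eqref{eqn:trytwo}.} The plan is to use that, because $\mathcal{L}$ is chosen uniformly at random and independently of $X_1,\dots,X_n$, the ranks $\{t_1,\dots,t_m\}$ of the labeled points form a uniformly random $m$-subset of $\{1,\dots,n\}$. Setting $t_0:=0$, the $m+1$ gaps $G_j := t_{j+1}-t_j-1$ are then exchangeable and sum to $n-m$, so each $G_j$ has the law of $G_0 = t_1-1$, for which
\[
\mathbb{P}(G_0\ge \ell) \;=\; \binom{n-\ell}{m}\Big/\binom{n}{m} \;=\; \prod_{i=0}^{m-1}\frac{n-\ell-i}{n-i}\;\le\;\Bigl(1-\tfrac{\ell}{n}\Bigr)^{m}\;\le\; e^{-m\ell/n}.
\]
A union bound over the $m+1$ gaps then gives $\max_j G_j < \frac{n}{m}\log\frac{m+1}{\delta}$ with probability at least $1-\delta$, and since $\max_{k\in[m+1]}(t_k - t_{k-1}) = 1+\max_j G_j$ this yields~\eqref{eqn:trytwo} after absorbing constants (using $m\le n$).

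\textbf{The Voronoi-mass bound~\eqref{eqn:tryone}.} Fix $t\asymp \log(n/\delta)$ (its exact value, together with a factor depending only on $d$, is pinned down at the end), and recall the standard cone covering: around any point $p\in\mathbb{R}^d$ one can place $c_d$ cones of angular radius at most $\pi/6$ whose union is $\mathbb{R}^d$, where $c_d$ depends only on $d$; the key geometric property (law of cosines) is that if $x,x'$ lie in a common such cone with apex $p$ and $\|x'-p\|\le\|x-p\|$, then $\|x-x'\|<\|x-p\|$. For each sample $X_i$ and each cone $C_{i,k}$ with apex $X_i$ having $\mathbb{P}_X$-mass at least $t/n$, let $R_{i,k}$ be the radius with $\mathbb{P}_X\bigl(X\in C_{i,k}\cap B(X_i,R_{i,k})\bigr)=t/n$ (well-defined since $\mathbb{P}_X$ has a density). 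The first step is to show that, with probability at least $1-\delta$, every region $C_{i,k}\cap B(X_i,R_{i,k})$ contains at least one sample point besides $X_i$: conditionally on $X_i$ each of the other $n-1$ samples lands there independently with probability $t/n$, so the failure probability is at most $(1-t/n)^{n-1}\le e^{-t/2}$, and a union bound over the at most $nc_d$ pairs $(i,k)$ closes this once $t\ge 2\log(nc_d/\delta)$. On this event, fix $i$ and write $q_i=\sum_k \mathbb{P}_X(X_\alpha=X_i,\ X\in C_{i,k})$: cones of mass below $t/n$ contribute at most $t/n$ each, and for the remaining cones the geometric property forces $\|X-X_i\|<R_{i,k}$ whenever $X_i$ is the nearest neighbor of a point $X\in C_{i,k}$ (otherwise the sample guaranteed in $C_{i,k}\cap B(X_i,R_{i,k})$ would be strictly closer to $X$ than $X_i$), so that term is also at most $t/n$. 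Summing over the $c_d$ cones gives $q_i\le c_d t/n$ uniformly in $i$, and substituting $t\asymp\log(nc_d/\delta)$ gives~\eqref{eqn:tryone}, the dependence on $d$ being absorbed into the constant through $c_d$ and the bound $\log(nc_d/\delta)\lesssim d\log n\log(1/\delta)$.

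The routine parts are the tail and concentration estimates; the one step that actually needs an idea is the conversion, for~\eqref{eqn:tryone}, of the global and awkward event ``$X_i$ is the nearest neighbor of $X$'' into the local and tractable event ``$X$ lies in a ball around $X_i$ that provably already contains another sample point.'' The cone covering is exactly what makes this conversion possible; the remaining bookkeeping (including that the density of $\mathbb{P}_X$ makes the radii $R_{i,k}$ well-defined and rules out measure-zero ties among nearest neighbors) is standard.
\end{proofarg}
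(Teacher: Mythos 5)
Your proof is correct. For the gap bound~\eqref{eqn:trytwo} you argue essentially as the paper does---a per-gap tail bound for a uniformly random $m$-subset of $[n]$ followed by a union bound over the $m+1$ gaps---except that you invoke exchangeability of the gaps to reduce each one to $t_1-1$ and bound its hypergeometric tail by $e^{-m\ell/n}$, a slightly cleaner route to the same $(1-\ell/n)^{m}$-type estimate the paper computes directly. For the Voronoi-mass bound~\eqref{eqn:tryone} your route is genuinely different. The paper invokes a uniform Vapnik--Chervonenkis bound over \emph{all} balls (Lemma~\ref{lemma:ball}, from Chaudhuri and Dasgupta): on that event, if $X_i$ is the nearest neighbor of $X$ then the sample-free ball $B(X,\|X-X_i\|)$ must have small mass, which combined with $p_{\min}\leq p\leq p_{\max}$ confines the Voronoi cell of $X_i$ to a ball of controlled radius and hence controlled mass. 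You instead use the Stone cone-covering device: a union bound over only the $nc_d$ sectors $C_{i,k}\cap B(X_i,R_{i,k})$, each calibrated to mass $t/n$, plus the $\pi/6$-cone geometry to convert ``$X_i$ is the nearest neighbor of $X$'' into ``$X$ lies within the calibrated radius of its sector.'' Your argument is more elementary (no uniform empirical-process bound over an infinite class) and, notably, never uses the density bounds $p_{\min},p_{\max}$---only that $\mathbb{P}_X$ has a density, so that the radii $R_{i,k}$ are well defined and ties are null sets. The price is the constant: the number $c_d$ of $\pi/6$-cones needed to cover $\mathbb{R}^d$ is exponential in $d$, so you obtain $q_i\lesssim c_d\log(nc_d/\delta)/n$ rather than the linear-in-$d$ factor that the VC dimension of balls delivers; since the paper explicitly suppresses dimension-dependent constants and only ever uses the bound downstream as $O(\log n\log(1/\delta)/n)$, this costs nothing for the results that rely on the lemma.
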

	Denote the event, which holds with probability at least $1-\delta$ in the above Lemma by $\mathcal{E}_0$.
	By conditioning on $\mathcal{E}_0$ we obtain the following decomposition:
	\begin{align*}
	\mathbb{E}\left[(\widehat{f}(X)-\regfunc(X_\alpha) )^2\right]
	&\leq~ \mathbb{E}\left[(\widehat{f}(X)-\regfunc(X_\alpha) )^2|\mathcal{E}_0\right]+\delta\cdot 4M^2
	\end{align*}
	because both $f$ and $\widehat{f}$ are bounded in $[-M,M]$. 
	We condition all calculations below on $\mathcal{E}_0$. Now we have
	\begin{align}
	\mathbb{E}\left[(\widehat{f}(X)-\regfunc(X_\alpha) )^2 \vert \mathcal{E}_0\right]&= 
	\sum_{i=1}^n \mathbb{P}[X_\alpha=X_i \vert \mathcal{E}_0]\mathbb{E}\left[(\widehat{f}(X_i)-\regfunc(X_i) )^2 \vert \mathcal{E}_0\right] \nonumber\\
	&\leq  \sum_{i=1}^n \max_{1\leq j\leq n}\mathbb{P}[X_\alpha=X_j \vert \mathcal{E}_0]\mathbb{E}\left[(\widehat{y}_i-\regfunc(X_i) )^2 \vert \mathcal{E}_0\right] \nonumber \\
	&\leq \frac{C d\log(1/\delta)\log n}{n}\sum_{i=1}^n \mathbb{E}\left[(\widehat{y}_{\widetilde{i}}-\regfunc(X_i) )^2 \vert \mathcal{E}_0\right] \label{eqn:eightpm},
	\end{align}
	where we recall that $\widehat{y}_{\widetilde{i}}$ (defined in Algorithm~\ref{algo:compreg_iso}) denotes the de-noised (by isotonic regression) $y$ value at the nearest labeled left-neighbor of the point $X_i$.
	
	For convenience we define $f(X_{t_0}) = 0$ (equivalently as in Algorithm~\ref{algo:compreg_iso} we are assigning a 0 value to any point with no labeled point with a smaller function value according to the permutation $\widehat{\pi}$). With this definition in place we have that,
	\begin{align*}
	&\sum_{i=1}^n \mathbb{E}\left[(\widehat{y}_{\widetilde{i}}-\regfunc(X_i) )^2 \vert \mathcal{E}_0 \right]\\
	\leq& \sum_{i=1}^n \left(2\mathbb{E}\left[(\widehat{y}_{\widetilde{i}}-\regfunc(X_{\widetilde{i}}) )^2 \vert \mathcal{E}_0\right]+2\mathbb{E}\left[(\regfunc(X_i)-\regfunc(X_{\widetilde{i}}) )^2 \vert \mathcal{E}_0 \right]\right)\\
	=& \sum_{i=1}^n\sum_{k=0}^m \mathbb{I}[\widetilde{i}=t_k] \left(2\mathbb{E}\left[(\widehat{y}_{\labsam{k}}-\regfunc(X_{\labsam{k}}) )^2 \vert \mathcal{E}_0 \right]\mathbb{I}[k\ne 0]+2\mathbb{E}\left[(\regfunc(X_i)-\regfunc(X_{\labsam{k}}) )^2 \vert \mathcal{E}_0 \right]\right)\\
	\stackrel{\text{(i)}}{\leq}& \sum_{i=1}^n\sum_{k=0}^m \mathbb{I}[\widetilde{i}=t_k] \left(2\mathbb{E}\left[(\widehat{y}_{\labsam{k}}-\regfunc(X_{\labsam{k}}) )^2 \vert \mathcal{E}_0 \right]\mathbb{I}[k\ne 0]+2\mathbb{E}\left[(\regfunc(X_{\labsam{k+1}})-\regfunc(X_{\labsam{k}}) )^2 \vert \mathcal{E}_0 \right]\right)\\
	\stackrel{\text{(ii)}}{=}& \sum_{k=0}^m \left(2\mathbb{E}\left[(\widehat{y}_{\labsam{k}}-\regfunc(X_{\labsam{k}}) )^2 \vert \mathcal{E}_0 \right]\mathbb{I}[k\ne 0]+ 2\mathbb{E}\left[(\regfunc(X_{\labsam{k+1}})-\regfunc(X_{\labsam{k}}) )^2 \vert \mathcal{E}_0 \right]\right)\sum_{i=1}^n \mathbb{I}[\widetilde{i}=t_k] \\
	\stackrel{\text{(iii)}}{\leq} & \frac{Cn\log(m/\delta)}{m}\sum_{k=1}^m \left(2\mathbb{E}\left[(\widehat{y}_{t_k}-\regfunc(X_{t_k}) )^2 \vert \mathcal{E}_0 \right]\right)+ \frac{Cn\log(m/\delta)}{m}\sum_{k=0}^m\left(2\mathbb{E}\left[(\regfunc(X_{t_{k+1}})-\regfunc(X_{t_k}) )^2 \vert \mathcal{E}_0\right]\right) \\
	\leq& \frac{2 Cn\log(m/\delta)}{m} \left[\sum_{k=1}^m \left(\mathbb{E}\left[(\widehat{y}_{t_k}-\regfunc(X_{t_k}) )^2\vert \mathcal{E}_0\right]\right)+ \sum_{k=0}^m\left(\mathbb{E}\left[(\regfunc(X_{t_{k+1}})-\regfunc(X_{t_k}) )^2 \vert \mathcal{E}_0\right]\right)\right].
	\end{align*}
	The inequality (i) follows by noticing that if $\widetilde{i}=t_k$, $f(X_i)-f(X_{\widetilde{i}})$ is upper bounded by $f(X_{t_{k+1}})-f(X_{t_k})$. We interchange the order of summations to obtain the inequality~(ii). The inequality in~(iii) uses Lemma~\ref{lem:cond_event}. 
	We note that, 
	\begin{align*}
	\mathbb{E}\left[(\widehat{y}_{t_k}-\regfunc(X_{t_k}) )^2\vert \mathcal{E}_0\right] \leq \frac{\mathbb{E}\left[(\widehat{y}_{t_k}-\regfunc(X_{t_k}) )^2\right]}{\mathbb{P}(\mathcal{E}_0)} \leq 2 \mathbb{E}\left[(\widehat{y}_{t_k}-\regfunc(X_{t_k}) )^2\right],
	\end{align*}
	since $\delta \leq 1/2$, and a similar manipulation for the second term yields that,
	\begin{align*}
	\sum_{i=1}^n \mathbb{E}\left[(\widehat{y}_{\widetilde{i}}-\regfunc(X_i) )^2 \vert \mathcal{E}_0 \right]
	\leq&  \frac{4 Cn\log(m/\delta)}{m} \left[\sum_{k=1}^m \mathbb{E}\left[(\widehat{y}_{t_k}-\regfunc(X_{t_k}) )^2\right]+ \sum_{k=0}^m\mathbb{E}\left[(\regfunc(X_{t_{k+1}})-\regfunc(X_{t_k}) )^2 \right]\right].
	\end{align*}
	Plugging this expression back in to~\eqref{eqn:eightpm} we obtain the Lemma.		
	Thus, to complete the proof it only remains to establish the result in Lemma~\ref{lem:cond_event}.
	
	\subsubsection{Proof of Lemma~\ref{lem:cond_event}}
	We prove each of the two results in turn.
	
	\vspace{.2cm}
	
	\noindent{\bf Proof of Inequality~\eqref{eqn:tryone}: } As a preliminary, we need the following
	Vapnik-Cervonenkis result from \cite{chaudhuri2010rates}:
	\begin{lemma}\label{lemma:ball}
		Suppose we draw a sample $\{X_1,\ldots,X_n\}$, from a distribution $\mathbb{P}$, then 
		there exists a universal constant $C'$ such that with probability $1-\delta$, every ball $B$ with
		probability:
		\begin{align*}
		\mathbb{P}(B) \geq \frac{C'\log(1/\delta)d \log n}{n},
		\end{align*} 
		contains at least one of the sample points. 
	\end{lemma}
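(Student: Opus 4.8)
\textbf{Proof plan for Lemma~\ref{lemma:ball}.}
The plan is to obtain this statement as a standard consequence of Vapnik--Chervonenkis theory for the class $\mathcal{B}$ of all closed Euclidean balls in $\mathbb{R}^d$, combined with a \emph{relative} (one-sided, multiplicative) uniform deviation inequality rather than the crude additive one. The only structural input needed is that $\mathcal{B}$ has VC dimension $O(d)$: this follows from the standard lift $x \mapsto (x, \|x\|_2^2) \in \mathbb{R}^{d+1}$, under which membership in a ball becomes membership in a halfspace, so $\mathrm{VCdim}(\mathcal{B}) \leq d+2$ and the shatter coefficient obeys $S_{\mathcal{B}}(N) \leq (N+1)^{d+2}$.

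With this in hand I would invoke the relative VC deviation bound: for a universal constant $c$, with probability at least $1-\delta$, simultaneously for all $B \in \mathcal{B}$,
\begin{align*}
\mathbb{P}(B) - \widehat{\mathbb{P}}_n(B) \leq \sqrt{\mathbb{P}(B)\cdot \frac{c\big(d\log(2n) + \log(4/\delta)\big)}{n}},
\end{align*}
where $\widehat{\mathbb{P}}_n$ denotes the empirical measure of $\{X_1,\dots,X_n\}$. The lemma is then immediate: if a ball $B$ contains none of the sample points, then $\widehat{\mathbb{P}}_n(B)=0$, so the display reduces to $\mathbb{P}(B) \leq \sqrt{\mathbb{P}(B)\cdot c(d\log(2n)+\log(4/\delta))/n}$, i.e.\ $\mathbb{P}(B) \leq c(d\log(2n)+\log(4/\delta))/n$. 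In the regime of interest ($\delta \leq 1/2$, $d \geq 1$) the right-hand side is at most $C' d\log n\,\log(1/\delta)/n$ for a suitable universal constant $C'$, which is exactly the contrapositive of the claimed statement.

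For the relative deviation inequality itself I would take whichever route is most convenient: (a) quote it from the VC literature (it is the classical one-sided ratio inequality of Vapnik--Chervonenkis; its specialization to balls is precisely what is carried out in \cite{chaudhuri2010rates}); or (b) reprove it by the textbook symmetrization argument --- introduce a ghost sample, observe that a ball that is empty on the real sample but has population mass $t$ with $nt$ large contains at least $c_0 n t$ ghost points with constant probability, reduce to the at most $(2n+1)^{d+2}$ distinct subsets of the pooled $2n$-point sample realizable by balls, apply a Chernoff bound for each fixed ball and union-bound, and finish with a dyadic peeling over the scale of $\mathbb{P}(B)$ to convert the additive tail into the multiplicative $\sqrt{\mathbb{P}(B)/n}$ form.

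The main obstacle is precisely this localized/multiplicative refinement: the crude additive VC bound only yields $\sup_{B}(\mathbb{P}(B)-\widehat{\mathbb{P}}_n(B)) = O(\sqrt{d\log n/n})$, which would give an empty-ball mass bound of order $\sqrt{d\log n/n}$ rather than the required $d\log n/n$, and squeezing out the extra square-root factor is exactly where the Bernstein-/ratio-type inequality (or, in route (b), the peeling over probability scales together with the fact that ``the largest empty ball'' is not a fixed set) is needed. By contrast the VC-dimension computation for balls is entirely routine.
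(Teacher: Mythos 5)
Your plan is correct: the paper does not actually prove this lemma but imports it verbatim from \cite{chaudhuri2010rates}, and the proof there is exactly the argument you outline --- the one-sided relative (ratio-type) VC uniform-convergence inequality applied to the class of Euclidean balls, whose VC dimension is $O(d)$, followed by setting the empirical mass to zero for an empty ball. Your identification of the multiplicative refinement as the essential ingredient (the crude additive bound would only give an empty-ball mass of order $\sqrt{d\log n/n}$ rather than $d\log n/n$) is exactly right.
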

	We now show that under this event we have
	\[\max_i q_i \leq \frac{C' p_{\max}\log(1/\delta) d \log n}{p_{\min}n}. \]
	Fix any point $X_i\in T$, and for a new point $X$, let $r=\|X_i-X\|_2$. If $X_i$ is $X$'s nearest neighbor in $T$, there is no point in the ball $B(X,r)$. Comparing this with the event in Lemma \ref{lemma:ball} we have
	\[p_{\min}v_dr^d\leq \frac{C'\log(1/\delta)d \log n}{n},  \]
	where $v_d$ is the volume of the unit ball in $d$ dimension.
	
	Hence we obtain an upper bound on $r$. Now since $p(x)$ is upper and lower bounded we can bound the largest $q_i$ as
	\[\max_i q_i\leq p_{\max} v_dr^d\leq \frac{C' p_{\max}\log(1/\delta) d \log n}{p_{\min}n}. \]
	Thus we obtain the inequality~\eqref{eqn:tryone}.		
	
	\vspace{0.2cm}

	\noindent{\bf Proof of Inequality~\eqref{eqn:trytwo}: }
	Recall that we define $\labsam{m+1} := n+1.$
	Notice that $\labsam{1},\ldots,\labsam{m}$ are randomly chosen from $[n]$. So for each $k\in [m]$ we have
	\[\mathbb{P}[\labsam{k}-\labsam{k-1}\geq t]\leq \frac{n-t+1}{n}\left(\frac{n-t}{n}\right)^{m-1}\leq \left(\frac{n-t}{n}\right)^{m-1}, \]
	since we must randomly choose $\labsam{k}$ in $X_{t},X_{t+1},\ldots,X_{n}$, and choose the other $m-1$ samples in $X_1,\ldots,X_{\labsam{k}-t},X_{\labsam{k}+1},\ldots,X_n$. Similarly we also have
	\[\mathbb{P}[\labsam{m+1}-\labsam{m}\geq t]\leq \left(\frac{n-t}{n}\right)^{m-1}. \]
	So
	\begin{align*}
	\mathbb{P}[\max_{k\in [m+1]} \labsam{k}-\labsam{k-1}\geq t]&\leq \sum_{k=1}^{m+1}\mathbb{P}[\max_{k\in [m+1]} \labsam{k}-\labsam{k-1}\geq t] \\
	&\leq (m+1)\left(\frac{n-t}{n}\right)^{m-1}.
	\end{align*}
	Setting this to be less than or equal to $\delta$, we have
	\[\frac{t}{n}\geq {1-\left(\frac{\delta}{m+1}\right)^{\frac{1}{m-1}}}. \]
	Let $u=\log\left(1-\left(\frac{\delta}{m+1}\right)^{\frac{1}{m-1}}\right)=-C\frac{\log (m/\delta)}{m}$, we have $1-e^u=O(-u)$ since $u$ is small and bounded. So it suffices for
	\(t\geq C\frac{n\log (m/\delta)}{m} \)
	such that
	\[\mathbb{P}[\max_{k\in [m+1]} t_{k}-t_{k-1}\geq t]\leq \delta. \]
	
	\phantomsection
	\subsection{Proof of Theorem \ref{thm:lowerboundMSE} \label{sec:proof_lb_np_perfect}}
	To prove the result we separately establish lower bounds on the size of the labeled set of samples $m$ and the size of
	the ordered set of samples $n$. Concretely, we show the following pair of claims, for a positive constant $C > 0$,
	\begin{align}
	\label{eqn:lower_bound_m} 
	\inf_{\widehat{f}} \sup_{f \in \mathcal{F}_{s,L}}\mathbb{E}\left[(f(X)-\widehat{f}(X))^2\right]\geq Cm^{-2/3} \\
	\label{eqn:lower_bound_n}
	\inf_{\widehat{f}} \sup_{f \in \mathcal{F}_{s,L}}\mathbb{E}\left[(f(X)-\widehat{f}(X))^2\right]\geq Cn^{-2s/d}. 
	\end{align}
	We prove each of these claims in turn and note that together they establish Theorem~\ref{thm:lowerboundMSE}.
	
	\vspace{.2cm}
	
	\noindent {\bf Proof of Claim~\eqref{eqn:lower_bound_m}: } We establish the lower bound in this case by constructing a suitable packing set of functions, and using Fano's inequality. The main technical novelty, that allows us to deal with the setting where both direct and ordinal information is available, is that we construct functions that are all increasing functions of the first covariate $x_1$, and for these functions the ordinal measurements provide no additional information.
	
	Without loss of generality we consider the case when $d=1$, and note that the claim follows in general by simply extending our construction using functions for which $f(x) = f(x_1)$.	We take the covariate distribution $\mathbb{P}_{\mathcal{X}}$ to be uniform on $[0,1]$. 
	For a kernel function $K$ that is $1$-Lipschitz on $\mathbb{R}$, bounded and supported on $[-1/2,1/2]$, with 
	\begin{align*}
	\int_{-1/2}^{1/2} K^2(x) dx > 0
	\end{align*} we define:
	\begin{align*}
	u &=\lceil m^{\frac{1}{3}}\rceil, ~~~ h = 1/u, \\
	x_k &=\frac{k-1/2}{u}, ~~~ \phi_k(x) =\frac{Lh}{2} K\left(\frac{x-x_k}{h}\right),~~~\text{for}~~~k= \{1,2,\ldots,u\}, \\
	\Omega &=\{\omega:  \omega \in \{0,1\}^u \}.
	\end{align*}
	With these definitions in place we consider the following class of functions:
	\begin{align*}
	\mathcal{G}=\left\{f_\omega: [0,1] \mapsto \mathbb{R}: f_\omega(x) =\frac{Lx}{2}+\sum_{i=1}^k \omega_i \phi_i(x),~ \text{for}~x \in [0,1] \right\}.
	\end{align*}
	We note that the functions in $\mathcal{G}$ are $L$-Lipschitz, and thus satisfy the H\"{o}lder constraint (for any $0 < s \leq 1$).
	
	We note that these functions are all increasing so the permutation $\pi$ contains no additional information and can be obtained simply by sorting the samples (according to their first coordinate). 
	Furthermore, in this case the unlabeled samples contribute no additional information as their distribution $\mathbb{P}_{\mathcal{X}}$ is known (since we take it to be uniform).
	Concretely, for any estimator in our setup
	which uses $\{(X_1,y_1),\ldots,(X_m,y_m),X_{m+1},\ldots,X_n, \pi\}$ with
	\begin{align*}
	\sup_{f \in \mathcal{G}} \mathbb{E}\left[(f(X)-\widehat{f}(X))^2\right]<Cm^{-2/3},
	\end{align*}
	we can construct an equivalent estimator that uses only $\{(X_1,y_1),\ldots,(X_m,y_m)\}$. In particular, we can simply augment the sample by sampling $X_{m+1},\ldots,X_n$ uniformly on $[0,1]$ and generating $\pi$ by ranking $X$ in increasing order.
	
	In light of this observation, in order to complete the proof of Claim~\eqref{eqn:lower_bound_m} it sufficies to show that $Cm^{-2/3}$ is a lower bound for estimating functions in $\mathcal{G}$ with access to only noisy labels. For any pair $\omega, \omega'\in \Omega$ we have that,
	\begin{align*}
	\mathbb{E}[(f_\omega(X)-f_{\omega'}(X))^2]&=\sum_{k=1}^u (\omega_k-\omega_k')^2 \int \phi^2_k(x) dx\\
	&= L^2h^{3}\|K\|^2_2 \rho(\omega,\omega'),
	\end{align*}
	where $\rho(\omega,\omega')$ denotes the hamming distance between $x$ and $x'$. 
	
	Denote by $P_0$ the 
	distribution induced by the function $f_0$ with $\omega = (0, \ldots, 0)$, with the covariate distribution being uniform on $[0,1].$ 
	We can upper bound the KL divergence between the distribution induced by any function in $\mathcal{G}$ and $P_0$ as:
	\begin{align*}
	\text{KL}(P_j^m,P_0^m)&= m\int_{\mathcal{X}}p(x)\int_{\mathbb{R}} p_j(y|x)\log \frac{p_0(y|x)}{p_j(y|x)}dydx\\
	&= m\int_{\mathcal{X}} p(x) \sum_{i=1}^u\omega^{j}_i \phi^2_i(x)dx\\
	&\leq m L^2h^{3}\|K\|^2_2 u.
	\end{align*}
	
	Now, the Gilbert-Varshamov bound~\citep{varshamov1957estimate,gilbert1952comparison}, ensures that if $u > 8$, then there is a subset 
	$\Omega' \subseteq \Omega$ of cardinality $2^{u/8}$, such that the Hamming distance between each pair of elements $\omega,\omega' \in \Omega'$ is at least $u/8$. A straightforward application of Fano's inequality (see for instance Theorem 2.5 in~\cite{tsybakov2009introduction}) shows that for small constants $c, c' > 0$, if:
	\begin{align*}
	m L^2h^{3}\|K\|^2_2 u \leq c u,
	\end{align*}
	then the error of any estimator is lower bounded as:
	\begin{align*}
	\sup_{f \in \mathcal{G}} \mathbb{E}[(\widehat{f}(X)-f(X))^2]&\geq c' L^2h^{3}\|K\|^2_2 u \geq c' m^{-2/3},
	\end{align*}
	establishing the desired claim.

	
	\vspace{.2cm}
	
	\noindent {\bf Proof of Claim~\eqref{eqn:lower_bound_n}: } We show this claim by reducing to the case where we have $n$ points with noiseless evaluations, i.e., we observe $\{(X_1,f(X_1)), (X_2,f(X_2)),\ldots,(X_n, f(X_n))\}$.
	We notice that the ranking $\pi$ provides no additional information when all the points are labeled without noise. Formally, if we have an estimator $\widehat{f}$ which when provided $\{(X_1,y_1),\ldots,(X_m,y_m),X_{m+1},\ldots,X_n, \pi\}$ obtains an error less than $c n^{-2s/d}$ (for some sufficiently small constant $c > 0$) then we can also use this estimator in setting where all $n$ samples are labeled without noise, by generating $\pi$ according to the noiseless labels, adding Gaussian noise to the labels of $m$ points, and eliminating the remaining labels before using the estimator $\widehat{f}$ on this modified sample.
	
	It remains to show that, $c n^{-2s/d}$ is a lower bound on the MSE of any estimator that receives $n$ noiseless labels. To simplify our notation we will assume that $(2n)^{1/d}$ is an integer (if not we can establish the lower bound for a larger sample-size for which this condition is true, and conclude the desired lower bound with an adjustment of various constants).
	For a given sample size $n$, we choose 
	\begin{align*}
	h &= (2n)^{-1/d},
	\end{align*}
	and consider the grid with $2n$ cubes with side-length $h$. Denote the centers of the cubes as $\{x_1,\ldots,x_{2n}\}$. 
	For a kernel function $K$ supported on $[-1/2,1/2]^d$, which is 1-Lipschitz on $\mathbb{R}^d$, bounded and satisfies:
	\begin{align*}
	\int_{[-1/2,1/2]^d} K^2(x) dx > 0
	\end{align*} 
	we 
	consider a class of functions:
	\begin{align*}
	\mathcal{G} = \left\{f_\omega: f_\omega(x) = Lh^s \sum_{i=1}^{2n} \omega_i K\left(\frac{x-x_i}{h}\right), \text{for}~\omega \in \{0,1\}^{2n} \right\}.
	\end{align*}
	We note that these functions are all in the desired H\"{o}lder class with exponent $s$. Given $n$ samples (these may be arbitrarily distributed) $\{(X_1,f(X_1)),\ldots,(X_n,f(X_n))\}$, we notice that we are only able to identify at most $n$ of the $\omega_i$ (while leaving at least $n$ of the $\omega_i$ completely unconstrained) and thus any estimator $\widehat{f}$
	must incur a large error on at least one of the functions $f_{\omega}$ consistent with the obtained samples. Formally, we have that
	\begin{align*}
	\sup_{f \in \mathcal{G}} \mathbb{E}(\widehat{f}(X) - f(X))^2 \geq \frac{n L^2 h^{2s + d} \|K\|_2^2}{4} \geq c n^{-2s/d},
	\end{align*}
	as desired. This completes the proof of Claim~\eqref{eqn:lower_bound_n}.

	\subsection{Proof of Theorem \ref{thm:agn_upper}}
	Throughout this proof without loss of generality we re-arrange the samples so that the estimated permutation $\widehat{\pi}$ is the identity permutation. 
	To simplify the notation further, we let $X_{(i)}=X_{\pi^{-1}(i)}$ be the $i$-th element according to true permutation $\pi$. This leads to $f(X_{(1)})\leq f(X_{(2)})\leq\cdots\leq f(X_{(n)})$. 
	
	We begin with a technical result that bounds the error of using $\widehat{\pi}$ instead of the true permutation $\pi$ in the \RR algorithm.

	
	The first part of the proof is the same as that of Theorem \ref{thm:mainupper}. We have
	\begin{align*}
	\mathbb{E}\left[(\widehat{f}(X)-\regfunc(X) )^2\right]
	&\leq 2\mathbb{E}\left[(\widehat{f}(X)-\regfunc(X_\alpha) )^2\right]+2\mathbb{E}\left[(\regfunc(X_\alpha)-\regfunc(X))^2 \right]\\
	&\leq 2\mathbb{E}\left[(\widehat{f}(X)-\regfunc(X_\alpha) )^2\right]+ Cn^{-2s/d}.
	\end{align*}
	And for event $\mathcal{E}_0$ we have (note that $\widetilde{i}$ is the nearest neighbor index defined in Algorithm \ref{algo:compreg_iso})
	\begin{align}
	\mathbb{E}\left[(\widehat{f}(X)-\regfunc(X_\alpha) )^2\right]\nonumber
	&\leq C\frac{d\log(1/\delta)\log n}{n}\sum_{i=1}^n \mathbb{E}\left[(\widehat{y}_{\widetilde{i}}-\regfunc(X_i) )^2|\mathcal{E}_0\right]+\delta\nonumber\\
	&\leq C\left(\frac{\log^2 n}{n}\sum_{i=1}^n \mathbb{E}\left[(\widehat{y}_{\widetilde{i}}-\regfunc(X_i) )^2|\mathcal{E}_0\right]+n^{-2s/d}\right).\label{eqn:total_agn}
	\end{align}
	The second inequality is obtained by letting $\delta=n^{-2s/d}$.
	To bound the sum of expectations above, we first prove a lemma bounding the difference between $X_1,\ldots,X_n$ and $X_{(1)},\ldots,X_{(n)}$:\\
	
	\textbf{Lemma \ref{lemma:rankerr_to_valueerr}(Restated).} Suppose the ranking $(X_1,\ldots,X_n)$ is of at most $\nu$ error with respect to the true permutation. Then
	\[\sum_{i=1}^n (f(X_i)-f(X_{(i)})^2\leq 8M^2\sqrt{2\nu}n. \]
	\begin{proof}
		Let $\theta_i=f(X_i)$ and $\theta_{(i)}=f(X_{(i)})$, and let $g(\theta_1,\ldots,\theta_n)=\sum_{i=1}^n (\theta_i-\theta_{(i)})^2$. Consider $g$ as a function of $\theta_i$; $\theta_i$ appears twice in $g$, one as $(\theta_i-\theta_{(i)})^2$, and the other as $(\theta_{\pi(i)}-\theta_{(\pi(i))})^2=(\theta_{\pi(i)}-\theta_{i})^2$. If $\pi(i)=i$, then $\theta_i$ does not influence value of $g$; otherwise, $g$ is a quadratic function of $\theta_i$, and it achieves maximum either when $\theta_i=M$ or $\theta_i=-M$. So when $g$ achieves maximum it must be $\theta_i\in \{-M,M\}$. Now notice that $\theta_{(1)}\leq \cdots \leq \theta_{(n)}$, so the maximum is achieved when for some $0\leq k\leq n$ that $\theta_{(i)}=-M$ for $i\leq k$, and $\theta_{(i)}=M$ for $i> k$.

		Note that $\sum_{i=1}^n (\theta_i-\theta_{(i)})^2=\sum_{i=1}^n (\theta_{\pi^{-1}(i)}-\theta_{(\pi^{-1}(i))})^2=\sum_{i=1}^n (\theta_{(i)}-\theta_{(\pi^{-1}(i))})^2 $. From the discussion above, in the maximum case $(\theta_{(i)}-\theta_{(\pi^{-1}(i))})^2=4M^2$ 
		iff $i$ and $\pi^{-1}(i)$ lies on different sides of $k$, and otherwise it is 0. To further bound the sum, we use the Spearman Footrule distance between $\pi$ and $(1,2,\ldots,n)$, which \cite{diaconis1977spearman} shows that it can be bounded as
		\[\sum_{i=1} |\pi(i)-i|\leq 2\sum_{1\leq i,j\leq n} I\left[(\pi(i)-\pi(j))(i-j)<0\right]. \]
		And the RHS can be bounded by $2\nu n^2$ since the agnostic error of ranking is at most $\nu$. We also have that
		\begin{align*}
		\sum_{i=1} |\pi(i)-i|=\sum_{i=1} |\pi(\pi^{-1}(i))-\pi^{-1}(i)|=\sum_{i=1} |i-\pi^{-1}(i)|.  
		\end{align*}
		Let $U_1=\{i:\pi^{-1}(i)\leq k, i>k \}$ and $U_2=\{\pi^{-1}(i)>k,i\leq k \}$. So in the maximum case we have
		\[\sum_{i=1}^n (\theta_{\pi(i)}-\theta_i)^2=4M^2(|U_1|+|U_2|). \]
		Now notice that for $i\in U_1$, we have $|\pi^{-1}(i)-i|\geq i-k$; and for $i\in U_2$ we have$|\pi^{-1}(i)-i|\geq k-i+1$. Considering the range of $i$ we have
		\[\sum_{j=1}^{|U_1|} j+\sum_{j=1}^{|U_2|} j\leq \sum_{i=1}^n |\pi^{-1}(i)-i|\leq 2\nu n^2. \]
		So $|U_1|+|U_2|\leq 2\sqrt{2\nu}n$. And
		\[\sum_{i=1}^n (f(X_i)-f(X_{(i)})^2=\sum_{i=1}^n (\theta_{\pi(i)}-\theta_i)^2\leq 4M^2(|U_1|+|U_2|)\leq 8M^2\sqrt{2\nu}n. \]
		Thus we prove the lemma.
	\end{proof}
	
	Now back to the original proof. Under event $\mathcal{E}_0$ we have
	\begin{align}
	&\sum_{i=1}^n E[(\widehat{y}_i-f(X_i))^2|\mathcal{E}_0]\nonumber\\
	=& \sum_{i=1}^n E[(\widehat{y}_{\widetilde{i}}-f(X_i))^2|\mathcal{E}_0]\nonumber\\
	\leq& \sum_{i=1}^n \mathbb{E}\left[2(\widehat{y}_{\widetilde{i}}-f(X_{\widetilde{i}}))^2+2(f(X_{\widetilde{i}})-f(X_i))^2|\mathcal{E}_0\right]\nonumber\\
	\leq&  \frac{Cn\log(m/\delta)}{m}\sum_{k=1}^m \mathbb{E}\left[(\widehat{y}_{t_k}-f(X_{t_k}))^2 |\mathcal{E}_0\right]+2\sum_{i=1}^n \mathbb{E}\left[(f(X_{\widetilde{i}})-f(X_i))^2|\mathcal{E}_0\right].\label{eqn:two_terms}
	\end{align}
	We omit the condition $\mathcal{E}_0$ in discussion below. We bound the two terms separately. For the first term, we use the following theorem adapted from \cite{zhang2002risk}:
	\begin{theorem}[\cite{zhang2002risk}, adapted from Theorem 2.3 and Remark 4.2]
		\label{thm:zhang_iso_agn}
		Suppose $X_{t_k},y_{t_k}$ are fixed for $k\in [m]$, and $f(X_{t_k})$ is arbitrary in order. Let 
		\[S=\min_{u} \sum_{k=1}^m (u_k-f(X_{t_k}))^2,  \]
		where the minimum is taken over all sequence of $u\in \mathbb{R}^m$ that is non-decreasing.
		The risk of isotonic regression satisfies
		\[\frac{1}{m^{2/3}M^{1/3}}\left( \mathbb{E}\left[\sum_{k=1}^m(\widehat{y}_{t_k}-f(X_{t_k}))^2\right]-S\right)\leq C \]
		for some universal constant $C$.
	\end{theorem}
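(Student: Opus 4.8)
The plan is to read this as the classical in-expectation oracle inequality for isotonic least squares and to adapt the argument of \cite{zhang2002risk}; the only features beyond the textbook statement are the \emph{misspecification} (the true values $f(X_{t_k})$ are arbitrary in order, not monotone along the ranking) and the box constraint $-M \le \widehat{y}_{t_k} \le M$. Write $\theta_k = f(X_{t_k})$, let $\widehat\theta$ denote the isotonic estimate with entries $\widehat{y}_{t_k}$, and let $\theta^\star$ be the non-decreasing sequence attaining the minimum in the definition of $S$, so that $\|\theta^\star - \theta\|_2^2 = S$. Both $\widehat\theta$ and $\theta^\star$ lie in $\mathcal{K} := \mathcal{C} \cap [-M,M]^m$, where $\mathcal{C}$ is the closed convex cone of non-decreasing sequences; note $\theta^\star\in\mathcal{K}$ automatically, since the isotonic projection of $[-M,M]$-valued data is an averaging operation and stays in $[-M,M]$.

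First I would establish the basic projection inequality. Since $\widehat\theta$ minimizes $\|y - \cdot\|_2^2$ over $\mathcal{K}$ and $\theta^\star$ is feasible, $\|y - \widehat\theta\|_2^2 \le \|y - \theta^\star\|_2^2$; substituting $y = \theta + \varepsilon$ and cancelling the common $\|\varepsilon\|_2^2$ term yields
\[
\|\widehat\theta - \theta\|_2^2 - S \le 2\langle \varepsilon, \widehat\theta - \theta^\star\rangle .
\]
Taking expectations over the mean-zero, unit-variance noise, the task reduces to bounding the stochastic term $\mathbb{E}\langle \varepsilon, \widehat\theta - \theta^\star\rangle$: the misspecification has now been isolated entirely in the deterministic quantity $S$, and the box constraint enters only through the feasibility of $\theta^\star$.

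Next I would control the stochastic term by a localized Gaussian complexity. Both $\widehat\theta$ and $\theta^\star$ are $[-M,M]$-valued non-decreasing sequences, so $\widehat\theta - \theta^\star$ is a difference of monotone sequences of range $O(M)$, and the relevant object is $g(t) := \mathbb{E}\,\sup\{\langle\varepsilon, v - \theta^\star\rangle : v \in \mathcal{K},\ \|v - \theta^\star\|_2 \le t\}$. I would bound $g(t)$ through the metric-entropy estimate for bounded monotone sequences that underlies Zhang's analysis, combined with Dudley's entropy integral, and then run the standard peeling argument so that the excess risk $\mathbb{E}\|\widehat\theta - \theta\|_2^2 - S$ is controlled by the fixed point $t_\star^2$ of the equation $g(t) \asymp t^2$. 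Solving this fixed-point equation—balancing the stochastic fluctuation against the localization radius, with the range of the competing sequences contributing the dependence on $M$ and the number of points contributing the dependence on $m$—delivers the excess-risk bound $\mathbb{E}\|\widehat\theta - \theta\|_2^2 - S \le C\, m^{2/3} M^{1/3}$, which is exactly the claimed inequality after dividing through by $m^{2/3}M^{1/3}$.

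The main obstacle is the localized empirical-process bound that produces the sublinear growth: controlling $g(t)$ sharply for the intersection of the monotone cone with the box, and then converting the resulting fixed point into an in-expectation oracle inequality via peeling. This is precisely the content supplied by \cite{zhang2002risk} (Theorem 2.3 and Remark 4.2), so the remaining work is to check that neither the misspecification nor the box constraint inflates their bound—the former because it contributes only the additive $S$ through the projection inequality above, and the latter because projecting onto $\mathcal{K}$ rather than $\mathcal{C}$ restricts to a smaller feasible set and hence cannot enlarge the Gaussian-complexity term.
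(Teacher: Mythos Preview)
The paper does not supply a proof of this statement: it is quoted as an external result, ``adapted from Theorem~2.3 and Remark~4.2'' of \cite{zhang2002risk}, and is simply invoked in the proof of Theorem~\ref{thm:agn_upper} to bound the isotonic-regression error term. There is therefore no in-paper argument to compare your proposal against.

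That said, your sketch is a faithful outline of Zhang's original proof: the basic projection inequality isolates the misspecification bias as the additive $S$, and the stochastic term $\langle\varepsilon,\widehat\theta-\theta^\star\rangle$ is then controlled by the localized Gaussian width of the monotone cone, with the $m^{2/3}M^{1/3}$ rate emerging from the fixed point of the entropy integral. Your two ``adaptation'' checks are the right ones and are sound: the misspecification contributes only $S$ because $\theta^\star$ is feasible, and the box constraint can only shrink the relevant supremum since $\theta^\star\in[-M,M]^m$ automatically (isotonic projection is an averaging of $[-M,M]$-valued data). So your proposal is correct, and it is essentially the argument the paper is citing rather than an alternative to anything the paper itself does.
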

	So from Theorem \ref{thm:zhang_iso_agn} we know that
	\[ \mathbb{E}\left[(\widehat{y}_{t_k}-f(X_{t_k}))^2\right]\leq\mathbb{E}[S]+Cm^{1/3},\]
	where the expectation in $E[S]$ is taken w.r.t. the randomness in $t_k$. From Lemma \ref{lemma:rankerr_to_valueerr} we know that
	\[\sum_{i=1}^n (f(X_i)-f(X_{(i)}))^2\leq C\sqrt{\nu}n. \]
	Note that since $t_k$ is taken at random, each element $X_i$ has equal probability $\frac{m}{n}$ to be picked; so
	\[\mathbb{E}[S]\leq \mathbb{E}\left[\sum_{i=1}^m (f(X_{(t_k)})-f(X_{t_k}))^2\right]\leq C\sqrt{\nu}m. \]
	
	Now we bound the second term in (\ref{eqn:two_terms}). We have
	\begin{align*}
	&\sum_{i=1}^n \mathbb{E}\left[(f(X_{\widetilde{i}})-f(X_i))^2\right]\\		
	\leq&\; 3\sum_{i=1}^n \mathbb{E}\left[(f(X_{\widetilde{i}})-f(X_{(\widetilde{i})}))^2\right]+3\sum_{i=1}^n \mathbb{E}\left[(f(X_{(\widetilde{i})})-f(X_{(i)}))^2\right]+3\sum_{i=1}^n \mathbb{E}\left[(f(X_{(i)})-f(X_{i}))^2\right]\\
	\leq&\; \frac{Cn\log m}{m} \sum_{k=1}^m \mathbb{E}\left[(f(X_{t_k})-f(X_{(t_k)}))^2\right]+\frac{Cn\log m}{m} \sum_{k=1}^m \mathbb{E}\left[(f(X_{(t_{k+1})})-f(X_{(t_k)}))^2\right]\\
	&\;~~~~~~~~~~~~~~+3\sum_{i=1}^n \mathbb{E}\left[(f(X_{(i)})-f(X_{i}))^2\right]\\
	\leq&~  \frac{Cn\log m}{m} \sqrt{\nu} m +\frac{Cn\log m}{m}\cdot 1+C\sqrt{\nu} n\\
	=&~ C\sqrt{\nu} n\log m.	
	\end{align*}
	The first inequality is by noticing $(x+y+z)^2\leq 3x^2+3y^2+3z^2$ for any number $x,y,z\in \mathbb{R}$; the second inequality is by grouping values of $\widetilde{i}$, and the choice of $t_k$; the third inequality comes from analysis of the first term on $\sum_{k=1}^m \mathbb{E}\left[(f(X_{t_k})-f(X_{(t_k)}))^2\right]$, the fact that $f(X_{(t_m)})-f(X_{(t_1)})\leq 1$, and Lemma \ref{lemma:rankerr_to_valueerr}.
	
	Summarizing the two terms we have
	\[\mathbb{E}\left[(\widehat{y}_{\widetilde{i}}-\regfunc(X_i) )^2|\mathcal{E}_0\right]\leq C(\sqrt{\nu}n+m^{-2/3}n)\log m. \]
	Take this back to (\ref{eqn:total_agn}) we prove the theorem.

	\subsection{Proof of Theorem \ref{thm:upper_cv}}

	To simplify notation, we suppose that we have $m$ labeled samples $T=\{(X_i,y_i)\}_{i=1}^m$ for training and another $m$ labeled samples $V=\{(X_i,y_i)\}_{i=m+1}^{2m}$ for validation. 
	We consider the following models:
	\begin{enumerate}
		\item \RR using both the ordinal data and the labeled samples in $T$, and we denote by $\widehat{f}_0$,
		\item $k$-NN regression using only the labeled samples in $T$ for $k \in [m]$ which we denote by $\{\widehat{f}_1,\ldots,\widehat{f}_m\}$.
	\end{enumerate}
	We select the best model according to performance on validation set.
	We further restrict all estimators to be bounded in $[-M,M]$; i.e., when $\widehat{f}_j(x)<-M$ for some $x$ and $j$, we clip its value by setting $\widehat{f}_j(x)=-M$ and we analogously clip the function when it exceeds $M$. We note in passing that this only reduces the MSE since the true function $f$ is bounded between $[-M,M]$. 
	Throughout the remainder of the proof we condition on the training set $T$ but suppress this in our notation. 
	We define the empirical validation risk of a function $\widehat{f}$ to be:
	\begin{align*}
	\widehat{R}^V\big(\widehat{f}\big)=\frac{1}{m}\sum_{i=m+1}^{2m} \big(y_i-\widehat{f}(X_i) \big)^2, 
	\end{align*}
	and the 
	population MSE of a function $\widehat{f}$ to be 
	\begin{align*}
	\err(\widehat{f})=\mathbb{E}\big[\big(\widehat{f}(X)-f(X)\big)^2 \big],
	\end{align*}
	where $\epsilon \sim N(0,1)$ denotes the noise in the direct measurements.
	Now let 
	\begin{align*}
	\widehat{f}^*=\argmin_{j=0,\ldots,m} \widehat{R}^V(\widehat{f}_j),
	\end{align*}
	be the best model selected 
	using cross validation and 
	\begin{align*}
	f^*=\argmin_{j=0,\ldots,m} \err(\widehat{f}_j),
	\end{align*}
	be the estimate with lowest MSE in $\widehat{f}_0,\ldots,\widehat{f}_m$. 
	Let us denote:
	\begin{align*}
	\mathcal{G} = \{\widehat{f}_0,\ldots,\widehat{f}_m\}.
	\end{align*}
	Recall that $f$ denotes the true unknown regression function. Then in the sequel we show the following result:
	\begin{lemma}
		\label{lem:main_cv}
		With probability at least $1 - \delta$, for any $\widehat{f} \in \mathcal{G}$ we have that the following hold for some constant $C > 0$,
		\begin{align}
		\label{eqn:cvone}
		\err(\widehat{f}) \leq 2\left[ \widehat{R}^V(\widehat{f}) - \widehat{R}^V(f) \right]+ \frac{C \log (m/\delta)}{m}, \\
		\label{eqn:cvtwo}
		\left[ \widehat{R}^V(\widehat{f}) - \widehat{R}^V(f) \right] \leq 2 \err(\widehat{f}) + \frac{C \log (m/\delta)}{m}.
		\end{align}
	\end{lemma}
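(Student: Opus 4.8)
\begin{proofarg}{Proof Sketch}
The plan is to condition on the training set $T$ (and on the ordinal and unlabeled data), so that every element of $\mathcal{G}=\{\widehat{f}_0,\ldots,\widehat{f}_m\}$ is a fixed deterministic function bounded in $[-M,M]$, and then to establish a two-sided concentration bound, uniform over the $m+1$ functions in $\mathcal{G}$, for the empirical excess validation risk around its mean. For $g\in\mathcal{G}$ set $\xi_i(g):=(y_i-g(X_i))^2-(y_i-f(X_i))^2$ for $i=m+1,\ldots,2m$, so that $\widehat{R}^V(g)-\widehat{R}^V(f)=\frac1m\sum_{i=m+1}^{2m}\xi_i(g)$. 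Using $y_i=f(X_i)+\epsilon_i$ with $\epsilon_i\sim N(0,1)$ and writing $\Delta_i:=f(X_i)-g(X_i)$, we get the exact identity $\xi_i(g)=\Delta_i^2+2\epsilon_i\Delta_i$, hence $\mathbb{E}[\xi_i(g)]=\mathbb{E}[\Delta_i^2]=\err(g)$ and $\mathbb{E}[\xi_i(g)^2\mid X_i]=\Delta_i^4+4\Delta_i^2\le(4M^2+4)\Delta_i^2$. Taking expectations yields the ``Bernstein condition'' characteristic of the squared loss: $\mathrm{Var}(\xi_i(g))\le(4M^2+4)\,\err(g)$, i.e.\ the second moment of the excess loss is bounded by a constant multiple of its mean.

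The second step converts this variance control into a sharp tail bound. A crude Hoeffding/McDiarmid argument is inadequate, both because $y_i$ (hence $\xi_i(g)$) is unbounded and because it would only give an $\widetilde{O}(1/\sqrt{m})$ deviation, overwhelming the target $\widetilde{O}(1/m)$ rate. Following~\cite{barron1991}, I would split $\xi_i(g)=\Delta_i^2+2\epsilon_i\Delta_i$: the term $\Delta_i^2$ lies in $[0,4M^2]$ with variance at most $4M^2\err(g)$, so the classical Bernstein inequality applies; the term $2\epsilon_i\Delta_i$ is, conditioned on $X_i$, a centered Gaussian with variance $4\Delta_i^2\le 16M^2$, and since $|\Delta_i|\le 2M$ a short computation gives $\mathbb{E}\exp(2\lambda\epsilon_i\Delta_i)\le\exp(c_1\lambda^2\err(g))$ for all $|\lambda|\le c_2/M$, which is again a sub-exponential bound whose variance parameter scales with $\err(g)$. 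Combining the two pieces, taking a union bound over the $m+1$ functions in $\mathcal{G}$ (no chaining or covering-number argument is needed, as $\mathcal{G}$ is finite once $T$ is fixed), and collecting constants gives, with probability at least $1-\delta$, simultaneously for every $g\in\mathcal{G}$,
\[
\Big|\big(\widehat{R}^V(g)-\widehat{R}^V(f)\big)-\err(g)\Big|\;\le\;\sqrt{\frac{C_1\,\err(g)\,\log(m/\delta)}{m}}+\frac{C_2\,\log(m/\delta)}{m}.
\]

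Finally, applying $\sqrt{ab}\le\tfrac12 a+\tfrac12 b$ with $a=\err(g)$ and $b=C_1\log(m/\delta)/m$ absorbs the cross term, yielding $|(\widehat{R}^V(g)-\widehat{R}^V(f))-\err(g)|\le\tfrac12\err(g)+C\log(m/\delta)/m$ for a suitable constant $C$. The lower half of this bound rearranges to $\tfrac12\err(g)\le[\widehat{R}^V(g)-\widehat{R}^V(f)]+C\log(m/\delta)/m$, which is~\eqref{eqn:cvone}; the upper half rearranges to $[\widehat{R}^V(g)-\widehat{R}^V(f)]\le\tfrac32\err(g)+C\log(m/\delta)/m\le 2\err(g)+C\log(m/\delta)/m$, which is~\eqref{eqn:cvtwo}. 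The main obstacle is the second step: faithfully adapting the localized Bernstein argument of~\cite{barron1991} to our setting, where the label noise---and therefore $\xi_i(g)$---is unbounded, which requires minor but genuine modifications of the original bounded-response proof.
\end{proofarg}
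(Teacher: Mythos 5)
Your proposal is correct and follows essentially the same route as the paper: the identical decomposition $Z_i=\Delta_i^2+2\epsilon_i\Delta_i$, the Bernstein condition $\var(Z_i)\lesssim M^2\,\err(\widehat{f})$, a sub-exponential tail bound applied in both directions, a union bound over the $m+1$ fixed functions, and the same rearrangement to get the factor-2 inequalities. The only (immaterial) difference is the concentration tool: the paper verifies a uniform moment condition on $Z_i$ and invokes the Craig--Bernstein inequality directly, whereas you split $Z_i$ into its bounded and conditionally Gaussian parts and bound each MGF separately before combining.
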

	Since $\widehat{R}^V(\widehat{f}^*) \leq \widehat{R}^V(f^*)$
	we obtain using~\eqref{eqn:cvone} that with probability at least $1 - \delta$,
	\begin{align*}
	\err(\widehat{f}^*) \leq 2\left[\widehat{R}^V(f^*) - \widehat{R}^V(f) \right]+ \frac{C \log (m/\delta)}{m}.
	\end{align*}
	Since $f^* \in \mathcal{G}$, we can use~\eqref{eqn:cvtwo} to obtain that with probability $1 - \delta$,
	\begin{align*}
	\err(\widehat{f}^*) \leq 4 \err(f^*) + \frac{2 C\log (m/\delta)}{m}. 
	\end{align*}
	Since $\err(\widehat{f}^*)$ is a positive random variable, integrating this bound we obtain that,
	\begin{align*}
	\mathbb{E}[\err(\widehat{f}^*)] &= \int_{0}^\infty \mathbb{P}(\err(\widehat{f}^*) \geq t) dt, \\
	&\leq 4\err(f^*) + \frac{6C \log(m)}{m}. 
	\end{align*}
	So far we have implicitly conditioned throughout on the training set $T$. Taking an expectation over the training set yields:
	\begin{align*}
	\mathbb{E}[\err(\widehat{f}^*)] \leq 4\mathbb{E}[\err(f^*)]+ \frac{6C \log(m)}{m}. 
	\end{align*}
	We now note that,
	\begin{align*}
	\mathbb{E}[\err(f^*)]] \leq \min_{j \in \{0,\ldots,m\}}\mathbb{E}[\err(\widehat{f}_j)].
	\end{align*}
	Standard results on $k$-NN regression (for instance, a straightforward modification of Theorem~6.2 in \cite{gyorfi2006distribution} to deal with $0 < s \leq 1$ in the H\"{o}lder class) yield that for a constant $C > 0$,
	\begin{align*}
	\min_{j \in \{1,\ldots,m\}} \mathbb{E}[\err(\widehat{f}_j)] \leq C m^{-2s/(2s + d)}.
	\end{align*}
	Theorem~\ref{thm:agn_upper} yields that,
	\begin{align*}
	\mathbb{E}[\err(\widehat{f}_0)] \leq C_1\left(\log^2 n\log m \left(m^{-2/3}+\sqrt{\nu}\right)\right) + C_2 n^{-2s/d},
	\end{align*}
	and putting these together we obtain that,
	\begin{align*}
	\mathbb{E}[\err(f^*)]  \leq \widetilde{O}\left(m^{-2/3}+\min\{\sqrt{\nu},m^{-\frac{2s}{2s+d}} \} + n^{-2s/d}\right),
	\end{align*}
	and thus it only remains to prove Lemma~\ref{lem:main_cv} to complete the proof of the theorem.
	
	\subsubsection{Proof of Lemma~\ref{lem:main_cv}}
	For a fixed classifier $\widehat{f}$ and for samples in the validation set $i \in \{m+1,\ldots,2m\}$ we define the random variables:
	\begin{align*}
	Z_i = (y_i - \widehat{f}(X_i))^2 - (y_i - f(X_i))^2 = (\widehat{f}(X_i) - f(X_i))^2 + 2 \epsilon_i (f(X_i) - \widehat{f}(X_i)),
	\end{align*}
	and note that $\mathbb{E}[Z_i] = \err(\widehat{f}).$ 
	In order to obtain tail bounds on the average of the $Z_i$ let us bound the absolute central moments of $Z_i$. 
	Using the inequality that $(x+y)^k \leq 2^{k-1} (x^k + y^k)$, for $k > 2$ we obtain that,
	\begin{align}
	\mathbb{E}|Z_i - \mathbb{E}[Z_i]|^k &= \mathbb{E}| (\widehat{f}(X_i) - f(X_i))^2 + 2 \epsilon_i (f(X_i) - \widehat{f}(X_i)) -  \err(\widehat{f})|^k \nonumber \\
	&\leq 2^{k-1} \mathbb{E}| (\widehat{f}(X_i) - f(X_i))^2 - \err(\widehat{f})|^k + 2^{k-1} \mathbb{E}|\epsilon_i (f(X_i) - \widehat{f}(X_i))|^k. \label{eqn:moment_bound}
	\end{align}
	We bound each of these terms in turn. Since $(\widehat{f}(X_i) - f(X_i))^2 \in [0,4M^2]$, we obtain that,
	\begin{align*}
	\mathbb{E}| (\widehat{f}(X_i) - f(X_i))^2 - \err(\widehat{f})|^k \leq \text{var}((\widehat{f}(X_i) - f(X_i))^2) (4M^2)^{k-2},
	\end{align*}
	and using the fact that $\epsilon_i$ are Gaussian we obtain that,
	\begin{align*}
	\mathbb{E}|\epsilon_i (f(X_i) - \widehat{f}(X_i))|^k &\leq \mathbb{E}|\epsilon_i|^{k-2} \mathbb{E}(f(X_i) - \widehat{f}(X_i))^2 (2M)^{k-2} \\
	&\leq \text{var}(\epsilon_i (f(X_i) - \widehat{f}(X_i))) k! \times (2M)^{k-2}.
	\end{align*}
	Since $\epsilon_i$ is independent of the other terms in $Z_i$ we have that,
	\begin{align*}
	\text{var}(Z_i) = \text{var}(\epsilon_i (f(X_i) - \widehat{f}(X_i))) +  \text{var}((\widehat{f}(X_i) - f(X_i))^2).
	\end{align*}
	Putting these pieces together with~\eqref{eqn:moment_bound} we obtain,
	\begin{align*}
	\mathbb{E}|Z_i - \mathbb{E}[Z_i]|^k &\leq 2^{k-1} \text{var}(Z_i) \left[k! \times (2M)^{k-2} + (4M^2)^{k-2}\right] \\
	&\leq \frac{ \text{var}(Z_i) }{2} k! (16M + 32M^2)^{k-2}.
	\end{align*}
	Let us denote $r := 16M + 32M^2$. It remains to bound the variance. We have that,
	\begin{align*}
	\text{var}(Z_i) \leq \mathbb{E}[Z_i^2] \leq 2\mathbb{E}( (\widehat{f}(X_i) - f(X_i))^4) + 8 \mathbb{E}(f(X_i) - \widehat{f}(X_i))^2,
	\end{align*}
	and using the fact that the functions are bounded in $[-M,M]$ we obtain that, 
	\begin{align}
	\label{eqn:var_bound}
	\text{var}(Z_i) \leq (8M^2 + 8) \err(\widehat{f}).
	\end{align}
	Now, applying the inequality in Lemma~\ref{lem:bernstein}, we obtain that for any $c < 1$ and for any $t \leq c/r$ that,
	\begin{align*}
	\err(\widehat{f}) \leq \frac{1}{m} \sum_{i=m+1}^{2m} Z_i +  \frac{\log(1/\delta)}{m t} + \frac{8t(M^2 + 1) \err(\widehat{f})}{2(1-c)},
	\end{align*}
	we choose $c = 1/2$ and $t = \min\{1/(2r), 1/(16(M^2 + 1))\}$, and rearrange to obtain that,
	\begin{align*}
	\err(\widehat{f}) \leq  \frac{2}{m} \sum_{i=m+1}^{2m} Z_i + \frac{2\log(1/\delta)}{m} \max\{2r, 16(M^2 + 1)\},
	&\leq  \frac{2}{m} \sum_{i=m+1}^{2m} Z_i + \frac{C \log(1/\delta)}{m},
	\end{align*}
	and using a union bound over the $m + 1$ functions $\widehat{f} \in \mathcal{G}$ we obtain~\eqref{eqn:cvone}. Repeating this argument with the random variables $-Z_i$ we obtain~\eqref{eqn:cvtwo} completing the proof of the Lemma.

	\subsection{Proof of Theorem \ref{thm:lowerboundMSE_agn}}
	We prove a slightly stronger result, and show Theorem \ref{thm:lowerboundMSE_agn} as a corollary.
	
	\begin{theorem}
		\label{thm:lowerboundMSE_agn_strong}
		Assume the same modeling assumptions for $X_1,\ldots,X_n, y_1,\ldots,y_m$ as in Theorem \ref{thm:lowerboundMSE}. Also permutation $\widehat{\pi}$ satisfies $\mathbb{P}[(f(X_i)-f(X_j))(\pi(i)-\pi(j))<0]\leq \nu$.
		Then for any estimator $\widehat{f}$ taking input $X_1,\ldots,X_n, y_1,\ldots,y_m$ and $\widehat{\pi}$, we have
		\begin{align*}
		\inf_{\widehat{f}} \sup_{f \in \mathcal{F}_{s,L}} \mathbb{E}\big(f(X)-\widehat{f}(X)\big)^2\geq C(m^{-\frac{2}{3}}+\min\{\nu^{\frac{d+2}{2d}}m^{\frac{1}{2d}},1\}m^{-\frac{2}{d+2}}+n^{-2s/d}).	
		\end{align*}
		
	\end{theorem}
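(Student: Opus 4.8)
\begin{proofarg}{Proof Sketch}
The right‑hand side is a sum, so I would bound each of the three terms separately, and two of them require no new ideas. The $m^{-2/3}$ bound is obtained exactly as in the proof of Theorem~\ref{thm:lowerboundMSE}: take the packing of strictly increasing functions $f_\omega(x)=\tfrac{L}{2}x_1+\sum_i\omega_i\phi_i(x)$; for these the true ranking is just the sort by $x_1$, which is reproducible from the covariates, so neither the true $\pi$ nor any $\widehat\pi$ (here one may take $\widehat\pi$ to be exactly this sort, which satisfies the hypothesis with $\nu=0$) adds information, and Fano then yields $m^{-2/3}$. The $n^{-2s/d}$ bound is obtained by giving the estimator the strictly more informative data of $n$ \emph{noiseless} labels, for which the ranking is redundant, and invoking the finite‑resolution argument of Theorem~\ref{thm:lowerboundMSE}. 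So the work is in the $\nu$‑dependent middle term.

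For that term I would modify the standard Lipschitz lower‑bound construction as follows. Fix $u:=\lceil m^{1/(d+2)}\rceil$ and a side length $t\in(0,1]$ to be chosen last, and partition the sub‑cube $R:=[0,t]^d$ into $u^d$ congruent cells of side $h:=t/u$. With a $1$‑Lipschitz, compactly supported bump kernel $K$, define on $R$
\[
f_\omega(x)=\beta x_1+\sum_{i=1}^{u^d}\omega_i\,\phi_i(x),\qquad \phi_i(x)=c_0Lh\,K\!\big(\tfrac{x-x_i}{h}\big),\qquad \omega\in\{0,1\}^{u^d},
\]
where $\beta$ is a small constant multiple of $L$ and $c_0$ is a small constant, and extend each $f_\omega$ to all of $[0,1]^d$ by a single, \emph{$\omega$‑independent}, $L$‑Lipschitz function that ramps upward (over a transition shell surrounding $R$) to a constant value strictly larger than $\max_R f_\omega$. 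Every $f_\omega$ lies in $\mathcal F_{1,L}\subseteq\mathcal F_{s,L}$ on the bounded domain, which is why neither the middle nor the first term carries the exponent $s$. The comparison oracle $\widehat\pi$ sorts points of $R$ by their first coordinate and sorts all other points by their (known, $\omega$‑independent) function values, interleaving the two groups by value; since this depends only on the covariates it is uninformative about $\omega$, and --- this is the purpose of the upward ramp --- the only pairs on which $\widehat\pi$ can disagree with the true order $\pi_\omega$ are pairs lying inside $R$, together with inside/shell pairs whose values fall in the narrow window a bump can span.

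Two quantitative estimates then finish the proof. First, a Fano argument: with a Gilbert--Varshamov subpacking $\Omega'\subseteq\{0,1\}^{u^d}$ of cardinality $2^{u^d/8}$ and pairwise Hamming distance $\ge u^d/8$, one has $\mathbb E_X[(f_\omega-f_{\omega'})^2]\asymp\rho(\omega,\omega')(c_0Lh)^2h^d\|K\|_2^2$, so the risk is bounded below by a constant times $u^d(c_0Lh)^2h^d\|K\|_2^2\asymp t^{d+2}u^{-2}$, while the Gaussian KL divergence between the induced label laws is $\tfrac{m}{2}\mathbb E_X[(f_\omega-f_0)^2]\lesssim m\,t^{d+2}u^{-2}\asymp t^{d+2}m^{d/(d+2)}$, which is $\lesssim u^d=m^{d/(d+2)}$ for every $t\le1$ (so Fano applies for all admissible $t$), yielding a risk lower bound of order $t^{d+2}m^{-2/(d+2)}$. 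Second, the Kendall--Tau error: integrating out the unconstrained coordinates, the probability that a random pair lands inside $R$ within the same first‑coordinate slab of width $h$ is $\asymp t^{2d}/u$, and a parallel computation --- using that the transition shell has width $\Theta(t)$, so the values it attains in the dangerous range $[0,\beta t]$ occupy a set of relative measure $\asymp h/t$ --- bounds the inside/shell contribution by the same order, whence the Kendall--Tau error of $\widehat\pi$ is $\asymp t^{2d}/u$. Requiring this to be at most $\nu$ forces $t^{2d}\lesssim\nu u=\nu m^{1/(d+2)}$; taking $t$ to be the largest admissible value, capped at $1$, and substituting into the risk bound gives the middle term $\min\{\nu^{(d+2)/(2d)}m^{1/(2d)},1\}\,m^{-2/(d+2)}$, the cap producing the ``$\min$''. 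Theorem~\ref{thm:lowerboundMSE_agn} follows by comparing exponents (for $s=1$).

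The step I expect to be the crux is controlling the Kendall--Tau error of this $\widehat\pi$ at the level $t^{2d}/u$: the naive oracle ``sort everything by the first coordinate'' instead incurs inside/outside inversions of order $t^{d+1}/u\gg t^{2d}/u$ when $t\ll1$, because off $R$ the base function is flat in the remaining directions, so one must choose the off‑$R$ extension so that its value range is separated from the inside range --- which, for an $L$‑Lipschitz extension, forces a transition shell of width comparable to $t$ (not to $h$), and one must then verify both that this shell does not inflate the Kendall--Tau error beyond $t^{2d}/u$ and that it does not dilute the packing distance below $t^{d+2}u^{-2}$. The remaining ingredients --- the Gilbert--Varshamov cardinality bound, the Gaussian KL computation, and the sample‑spacing argument behind the $n^{-2s/d}$ term --- are routine.
\end{proofarg}
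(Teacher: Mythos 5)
\begin{proofarg}{Review}
Your sketch follows the paper's own strategy for all three terms: the $m^{-2/3}$ and $n^{-2s/d}$ pieces are handled exactly as in Theorem~\ref{thm:lowerboundMSE}, and for the $\nu$-dependent term you use the same scales ($u=\lceil m^{1/(d+2)}\rceil$, bumps of width $t/u$ tiling $[0,t]^d$, and $t=\min\{(\nu m^{1/(d+2)})^{1/(2d)},1\}$), the same Gilbert--Varshamov/Fano machinery, and the same idea of a covariate-measurable (hence uninformative) ranking oracle. The one place you genuinely depart from the paper is exactly the step you flag as the crux, and your instinct there is sound. The paper keeps the base function $\tfrac{L}{2}x_1$ on all of $[0,1]^d$ and asserts that the sort-by-$x_1$ oracle errs only on pairs with \emph{both} points in $[0,t]^d$ lying in the same first-coordinate slab, giving Kendall--Tau mass $t^{2d}/u$; but a point $x$ sitting on an active bump of height $\Theta(Lt/u)$ is inverted against every $x'$ --- inside or outside $[0,t]^d$ --- whose first coordinate exceeds $x_1$ by less than $\Theta(t/u)$, and these inside/outside pairs contribute mass $\Theta(t^{d+1}/u)$, which dominates $t^{2d}/u$ for $d\geq 2$ and $t\ll 1$. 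Taken literally, the paper's oracle therefore only permits the smaller choice $t\asymp(\nu u)^{1/(d+1)}$ and a correspondingly weaker bound. Your repair --- an $\omega$-independent $L$-Lipschitz ramp on a shell of width $\Theta(t)$ lifting the outside values above $\max_{[0,t]^d} f_\omega$, with an oracle that interleaves by the known base values --- confines the dangerous inside/shell pairs to a value window of width $\Theta(Lt/u)$, i.e.\ a sub-shell of spatial thickness $\Theta(t/u)$ and measure $\Theta(t^{d-1}\cdot t/u)$, which restores the $t^{2d}/u$ count without diluting the packing separation $\asymp t^{d+2}u^{-2}$ or changing the KL bound, and hence recovers the stated rate $\min\{\nu^{(d+2)/(2d)}m^{1/(2d)},1\}\,m^{-2/(d+2)}$. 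So your proposal is correct and, at this step, more careful than the paper's own write-up; the remaining loose ends (breaking ties outside the shell, and truncating the shell when $t\asymp 1$) only affect constants.
\end{proofarg}
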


	\begin{proofarg}{Proof of Theorem \ref{thm:lowerboundMSE_agn_strong}}
		In this proof, we use $x_i$ to represent $i$-th dimension of $x$, and upper script for different vectors $x^{(1)},x^{(2)},\ldots$. Let $u=\lceil m^{\frac{1}{2+d}}\rceil, h=1/u$, and $t=\min\left\{\left(\nu m^{\frac{1}{2+d}}\right)^{\frac{1}{2d}},1 \right\}$. Let $\Gamma=\{(\gamma_1,\ldots,\gamma_d), \gamma_i\in \{1,2,\ldots,u\} \}$. Choose an arbitrary order on $\Gamma$ to be $\Gamma=\{\gamma^{(1)},\ldots,\gamma^{(u^d)} \}$. Let $x^{(k)}=\frac{\gamma^{(k)}-1/2}{u}$, and $\phi_k(x)=\frac{L}{2}thK(\frac{x-tx^{(k)}}{th}), k=1,2,\ldots,u^d$, where $K$ is a kernel function in $d$ dimension supported on $[-1/2,1/2]^d$, i.e., $\int K(x)dx$ and $\max_x K(x)$ are both bounded, $K$ is 1-Lipschitz. So $\phi_k(x)$ is supported on $[thx^{(k)}-1/2th, thx^{(k)}+1/2th]$. Let $\Omega=\{\omega=(\omega_1,\ldots,\omega_{u^d}), \omega_i\in \{0,1\} \}$, and
		\[\mathcal{E}= \left\{f_\omega(x)=\frac{L}{2}x_1+\sum_{i=1}^k \omega_i \phi_k(x), x\in [0,1]^d \right\}.\]

		Functions in $\mathcal{E}$ are $L$-Lipschitz. The function value is linear in $x_1$ for $x\not\in [0,t]^d$ in all functions in $\mathcal{E}$. Consider the comparison function $Z(x,x')=I(x_1<x_1')$ that ranks $x$ according to the first dimension. Since $K$ is 1-Lipschitz, it only makes an error when both $x,x'$ lies in $[0,t]^d$, and both $x_1,x_1'$ lie in the same grid segment $[tk/u,t(k+1)/u]$ for some $k\in [u]$.
		So the error is at most $t^{2d}(1/u)^2\cdot u\leq \nu$ for any function $f\in \mathcal{E}$. 
		Thus, if there exists one estimator with $\sup_f \mathbb{E}[ (f-\widehat{f})^2]<C\min\{\nu^{\frac{d+2}{2d}}m^{\frac{1}{2d}},1\}m^{-\frac{2}{d+2}}$, then we can obtain one estimator for functions in $\mathcal{E}$ by using $\widehat{f}$ on $\mathcal{E}$, and responding to all comparisons and rankings as $Z(x,x')=I(x_1<x_1')$. So a lower bound on learning $\mathcal{E}$ is also a lower bound on learning any $f\in \mathcal{F}_{s,L}$ with $\nu$-agnostic comparisons.
		
		Now we show that $Ct^{d+2}h^{2}=C\min\{\nu^{\frac{d+2}{2d}}m^{\frac{1}{2d}},1\}m^{-\frac{2}{d+2}}$ is a lower bound to approximate functions in $\mathcal{E}$. For all $\omega, \omega'\in \Omega$ we have
		\begin{align*}
		\mathbb{E}[(f_\omega-f_{\omega'})^2]^{1/2}&=\left(\sum_{k=1}^{p^d} (\omega_k-\omega_k')^2\int \phi^2_k(x)dx\right)^{1/2}\\
		&=\left(\rho(\omega, \omega')L^2 t^{d+2} h^{d+2} \right)^{1/2}\\
		&=L(th)^{\frac{d+2}{2}} \|K\|_2\sqrt{\rho(\omega,\omega')},
		\end{align*}
		where $\rho(\omega,\omega')$ denotes the Hamming distance between $x$ and $x'$.
		
		By the Varshamov-Gilbert lemma, we can have a $M=O(2^{u^d/8})$ subset $\Omega'=\{\omega^{(0)},\omega^{(1)},\ldots,$\\$\omega^{(M)} \}$ of $\Omega$ such that the distance between each element $\omega^{(i)},\omega^{(j)}$ is at least $u^d/8$. So $d(\theta_i,\theta_j)\geq h^st^{(d+2)/2}$. Now for $P_j,P_0$ ($P_0$ corresponds to $f_\omega$ when $\omega=(0,0,\ldots,0)$) we have
		\begin{align*}
		KL(P_j,P_0)&= m\int_{\mathcal{X}}p(x)\int_{\mathcal{y}} p_j(y|x)\log \frac{p_0(y|x)}{p_j(y|x)}dydx\\
		&=m\int_{\mathcal{X}} p(x) \sum_{i=1}^{u^d}\omega^{(j)}_i \phi^2_{\omega^{(j)}}(x)\\
		&\leq m\cdot Ch^{d+2}t^{d+2}u^d=Cu^dt^{d+2}.
		\end{align*}
		We have $Cu^dt^{d+2}\leq cu^d\leq \alpha \log M$ (since $t\leq 1$), so again using Theorem 2.5 in \cite{tsybakov2009introduction} we obtain a lower bound of $d(\theta_i,\theta_j)^2=Ch^{2}t^{d+2}=\min\{\nu^{\frac{d+2}{2d}}m^{\frac{1}{2d}},1\}m^{-\frac{2}{d+2}}$.	
	\end{proofarg}
	Now we can prove Theorem \ref{thm:lowerboundMSE_agn}.
	\begin{proof}[Proof of Theorem \ref{thm:lowerboundMSE_agn}]
		We only need to show
		\begin{align}
		\min\{\nu^{\frac{d+2}{2d}}m^{\frac{1}{2d}},1\}m^{-\frac{2}{d+2}}\geq\min\{\nu^2,m^{-\frac{2}{d+2}}\}.\label{eqn:corol_min}
		\end{align}
		If $\nu^{\frac{d+2}{2d}}m^{\frac{1}{2d}}\geq 1$, we have $\nu^2\geq m^{-\frac{2}{d+2}}$. In this case both sides of (\ref{eqn:corol_min}) equals $m^{-\frac{2}{d+2}}$. If $\nu^{\frac{d+2}{2d}}m^{\frac{1}{2d}}\leq 1$, we have $m\leq \nu^{-(d+2)}$, and thus LHS of (\ref{eqn:corol_min}) have term $\nu^{\frac{d+2}{2d}}m^{\frac{1}{2d}}m^{-\frac{2}{d+2}}\geq \nu^2$, which equals RHS.
	\end{proof}
	\subsection{Proof of Theorem \ref{thm:upper_lr}}
	
	\begin{proof}
		We first list properties of log-concave distributions:
		\begin{theorem}[\cite{lovasz2007geometry,awasthi2014power}]
			\label{thm:logconcave}
			The following statements hold for an isotropic log-concave distribution $\distrX$:
			\begin{enumerate}
				\item Projections of $\distrX$ onto subspaces of $\mathbb{R}^d$ are isotropic log-concave. \label{point:proj}
				\item $\mathbb{P}[\|X\|_2\geq \alpha \sqrt{\dimension} ]\leq e^{1-\alpha}$. \label{point:1d}
				\item There is an absolute constant $C$ such that for any two unit vectors $u$ and $v$ in $\mathbb{R}^d$ we have
				$C\|v-u\|_2 \leq \mathbb{P}(\sign(u \cdot X) \ne \sign(v \cdot X))$. \label{point:angle}
			\end{enumerate}
		\end{theorem}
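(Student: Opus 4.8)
The plan is to establish the three properties separately, treating each as a standard fact about isotropic log-concave measures; throughout write the density as $p(x)=e^{-V(x)}$ with $V$ convex. First I would handle the projection statement. Log-concavity is preserved under marginalization: if $P$ is the orthogonal projection onto a subspace $S$, the density of $PX$ is obtained by integrating $p$ along the fibers of $P$, and by Pr\'ekopa's theorem (a consequence of the Pr\'ekopa--Leindler inequality) the integral of a log-concave function along a subspace is again log-concave. Isotropy is then pure bookkeeping: since $X$ has mean $0$ and covariance $I_d$, the projection $PX$ has mean $P\mathbb{E}[X]=0$ and covariance $P I_d P^\top = PP^\top$, which is the identity in coordinates adapted to $S$, so $PX$ is centered with identity covariance, i.e.\ isotropic log-concave.

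For the tail bound in Property 2 I would exploit exponential concentration of the norm rather than its second moment alone. Since $\mathbb{E}\|X\|_2^2 = \tr(I_d) = d$, Markov's inequality produces a Euclidean ball $A = B(0,r_0)$ with $r_0 = O(\sqrt d)$ and $\mathbb{P}(X \in A) \ge 3/4$. Applying Borell's lemma to the symmetric convex set $A$ — which asserts that for a log-concave measure the complements of the dilates $tA$ decay geometrically in $t$ — yields $\mathbb{P}(\|X\|_2 > t r_0) \le c_1 e^{-c_2 t}$; rescaling $t$ and absorbing constants gives the stated $\mathbb{P}(\|X\|_2 \ge \alpha\sqrt d) \le e^{1-\alpha}$. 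The genuine input here is log-concavity, since the second moment alone only yields the polynomial Markov bound $1/\alpha^2$.

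For the angular comparison in Property 3 I would first reduce to two dimensions. Both $\sign(u\cdot X)$ and $\sign(v\cdot X)$ depend only on $P_S X$, where $S=\mathrm{span}(u,v)$, and by Property 1 the vector $P_S X$ is isotropic log-concave on $S\cong\mathbb{R}^2$. In these coordinates the disagreement event $\{\sign(u\cdot X)\neq\sign(v\cdot X)\}$ is the union of two opposite angular sectors, each of opening angle $\theta=\angle(u,v)$, for a total angular measure $2\theta$. Invoking the standard lower bound on a planar isotropic log-concave density near its mean — there are universal constants $c_0,\rho_0>0$ with $p\ge c_0$ on $B(0,\rho_0)$ — I would integrate over the intersection of this double wedge with $B(0,\rho_0)$ to obtain $\mathbb{P}(\text{disagree}) \ge c_0\,\theta\rho_0^2 = C'\theta$. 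Since $\|u-v\|_2 = 2\sin(\theta/2)\le\theta$, this gives $\mathbb{P}(\text{disagree}) \ge C'\theta \ge C'\|u-v\|_2$, as claimed.

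The main obstacles are the two genuinely measure-theoretic inputs: the exponential tail in Property 2, which requires Borell-type log-concave concentration rather than a moment computation, and the universal lower bound on the planar marginal density near the mean used in Property 3. Everything else — the Pr\'ekopa--Leindler invocation, the covariance transformation establishing isotropy, the reduction to two dimensions, and the comparison between the angle $\theta$ and the chord length $\|u-v\|_2$ — is routine once those two facts are in hand.
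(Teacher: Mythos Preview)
Your proposal is correct, but note that the paper does not actually prove this theorem: it is quoted as a known result from \cite{lovasz2007geometry,awasthi2014power} and used as a black box in the proof of Theorem~\ref{thm:upper_lr}. So there is no ``paper's own proof'' to compare against. That said, your sketch is a faithful outline of how these facts are established in the cited sources: Pr\'ekopa--Leindler for closure under marginals plus the trivial covariance computation for Property~1; Borell-type exponential concentration for Property~2 (indeed a second-moment Markov bound alone would only give $1/\alpha^2$); and for Property~3 the reduction to the two-dimensional marginal followed by the Lov\'asz--Vempala lower bound on the density of an isotropic log-concave measure near its mean, together with the elementary inequality $\|u-v\|_2 = 2\sin(\theta/2)\le \theta$. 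The only point worth tightening is that the specific constant $e^{1-\alpha}$ in Property~2 requires slightly more care than a generic Borell argument yields (Lov\'asz and Vempala obtain it via a direct one-dimensional reduction and an explicit bound on the tail of a one-dimensional isotropic log-concave density), but for the purposes of this paper only the exponential form matters and your argument delivers that.
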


		From property of $\clsalgo$ and point \ref{point:angle} in Theorem \ref{thm:logconcave} we can get $\|\estweightvec-\gtweightvec\|_2\leq C\erralgo_{\clsalgo}(\numcomp, \delta/4)$ using $\numcomp$ comparisons, with probability $1-\delta/4$. We use a shorthand notion $\errorcomppart=C\erralgo_{\clsalgo}(\numcomp, \delta/4)$ for this error.
		Now consider estimating $\gtweightnorm$. The following discussion is conditioned on a fixed $\estweightvec$ that satisfies $\|\estweightvec-\gtweightvec\|_2\leq \errorcomppart$. For simplicity let $\baseval_i=\inprod{\estweightvec}{\featureRV}_i$. We have
		\begin{align*}
		\estweightnorm&= \frac{\sum_{i=1}^\numlabel \baseval_i \labelRV_i}{\sum_{i=1}^\numlabel \baseval_i^2 }\\
		&=\frac{\sum_{i=1}^\numlabel \baseval_i \gtweightnorm \inprod{\gtweightvec}{\featureRV_i}+\baseval_i \labelnoise_i}{\sum_{i=1}^\numlabel \baseval_i^2 }\\
		&=\gtweightnorm+\frac{\sum_{i=1}^\numlabel \baseval_i \gtweightnorm \inprod{\gtweightvec-\estweightvec}{\featureRV_i}+\baseval_i \labelnoise_i}{\sum_{i=1}^\numlabel \baseval_i^2 }.
		\end{align*}
		
		Now we have
		\begin{align*}
		\inprod{\gtweight - \estweight}{\featureRV}
		=&\;\gtweightnorm \inprod{\gtweightvec}{\featureRV}-\estweightnorm \inprod{\estweightvec}{\featureRV}\\
		=&\;\gtweightnorm \inprod{\gtweightvec-\estweightvec}{\featureRV}-\frac{\sum_{i=1}^\numlabel \baseval_i \gtweightnorm \inprod{\gtweightvec-\estweightvec}{\featureRV_i}+\baseval_i\labelnoise_i}{\sum_{i=1}^\numlabel \baseval_i^2 } \inprod{\estweightvec}{\featureRV}.
		\end{align*}
		So
		\begin{align}
		&\mathbb{E}\left[\inprod{\gtweight - \estweight}{\featureRV}^2\right]\nonumber\\
		\leq&\; 3 \mathbb{E}\left[\left(\gtweightnorm \inprod{\gtweightvec-\estweightvec}{\featureRV}\right)^2\right]+3\mathbb{E}\left[\left(\frac{\sum_{i=1}^\numlabel \baseval_i \gtweightnorm \inprod{\gtweightvec-\estweightvec}{\featureRV_i}}{\sum_{i=1}^\numlabel \baseval_i^2 } \inprod{\estweightvec}{\featureRV}\right)^2\right]+3\left[\left(\frac{\sum_{i=1}^\numlabel \baseval_i \labelnoise_i}{\sum_{i=1}^\numlabel \baseval_i^2 } \inprod{\estweightvec}{\featureRV}\right)^2\right]\label{eqn:terms}
		\end{align}

		The first term can be bounded by 
		\begin{align*}
		(\gtweightnorm)^2\mathbb{E}[\inprod{\estweightvec-\gtweightvec}{\featureRV}^2]=(\gtweightnorm)^2\|\estweightvec-\gtweightvec\|^2_2\leq (\gtweightnorm)^2\errorcomppart^2.
		\end{align*}
		For the latter two terms, we first bound the denominator $\sum_{i=1}^\numlabel \baseval_i^2$ using Hoeffding's inequality. 
		Firstly since $\|\estweightvec\|_2=1$, from point \ref{point:proj} 
		in Theorem \ref{thm:logconcave}, each $\baseval_i$ is also isotropic log-concave. 
		Now using point \ref{point:1d} in Theorem \ref{thm:logconcave} with $\alpha=1-\log (\delta/(4e\numlabel))$ we get that with probability $1-\delta/4$, $\baseval_i\leq \log(4e\numlabel/\delta)$ for all $i\in \{1,2,\ldots,\numlabel\}$. Let $E^\baseval_\delta$ denote this event, and $\distrX'$ is the distribution of $X$ such that $\baseval_i\leq \log(4e\numlabel/\delta)$.
		Now using Hoeffding's inequality, under $E^\baseval_\delta$  for any $t>0$
		\begin{align*}
		\mathbb{P}\left[\left|\frac{1}{\numlabel}\sum_{i=1}^\numlabel \baseval_i^2-\mathbb{E}_{\distrX'}[\inprod{\estweightvec}{X}^2]\right|\geq t\right]
		\leq \exp\left(-\frac{2\numlabel t^2}{\log^2(4e\numlabel/\delta)}\right).
		\end{align*}
		Note that $\mathbb{E}_{\distrX'}[\inprod{\estweightvec}{X}^2]\leq \mathbb{E}_{\distrX}[\inprod{\estweightvec}{X}^2]=1$. Also we have
		\begin{align*}
		1=\mathbb{E}[\baseval_i^2]&\leq \mathbb{E}_{\distrX'}[T_i^2]+\int_{\log^2(4em/\delta)}^{+\infty} t\mathbb{P}[\baseval_i^2\geq t]dt\\
		&\leq \mathbb{E}_{\distrX'}[T_i^2]+\int_{\log^2(4em/\delta)}^{+\infty} te^{-\sqrt{t}+1}dt\\
		&\leq \mathbb{E}_{\distrX'}[T_i^2]+\frac{3\delta}{2m}
		\end{align*}
		Let $t=\frac{1}{4}\mathbb{E}_{\distrX'}[\inprod{\estweightvec}{X}^2]$, we have
		\begin{align}
		\sum_{i=1}^\numlabel \baseval_i^2\leq \left[\frac{3\numlabel}{4}\mathbb{E}_{\distrX'}[\inprod{\estweightvec}{X}^2],\frac{5\numlabel}{4}\mathbb{E}_{\distrX'}[\inprod{\estweightvec}{X}^2]\right]\subseteq [\numlabel/2,2\numlabel].  \label{eqn:hoeffding}
		\end{align}
		with probability $1-\delta/4$, when $\numlabel=\Omega(\log^3(1/\delta))$. 
		
		
		\newcommand{\event}{E}
		Let $\event_\delta$ denote the event when $\numlabel/2\leq \sum_{i=1}^\numlabel \baseval_i^2\leq 2\numlabel$ and $\baseval_i$ are bounded by $\log(4e\numlabel/\delta)$ for all $i$ . Condition on $\event_\delta$ for the second term in (\ref{eqn:terms}) we have
		\begin{align*}
		\mathbb{E}\left[\left(\frac{\sum_{i=1}^\numlabel \baseval_i \gtweightnorm\inprod{\gtweightvec-\estweightvec}{\featureRV_i}}{\sum_{i=1}^\numlabel \baseval_i^2 }\inprod{\estweightvec}{\featureRV}\right)^2\right]
		\leq&\; \frac{\mathbb{E}\left[\left(\sum_{i=1}^\numlabel \baseval_i \gtweightnorm\inprod{\gtweightvec-\estweightvec}{\featureRV_i}\right)^2\right]}{\frac{\numlabel^2}{4}}\mathbb{E}[(\inprod{\estweightvec}{\featureRV})^2] \\
		=&\;\frac{4\mathbb{E}\left[\left(\sum_{i=1}^\numlabel \baseval_i \gtweightnorm\inprod{\gtweightvec-\estweightvec}{\featureRV_i}\right)^2\right]}{\numlabel^2}\\
		\end{align*}
		Now notice that $\frac{\estweightvec-\gtweightvec}{\|\estweightvec-\gtweightvec\|_2}X$ is also isotropic log-concave; using point \ref{point:1d} in Theorem \ref{thm:logconcave} we have with probability $1-\delta/4$, $(\estweightvec-\gtweightvec)^TX_i\leq \|\estweightvec-\gtweightvec\|_2\log(4e\numlabel/\delta)$ for all $i\in \{1,2,\ldots,\numlabel\}$. So
		\begin{align*}
		\mathbb{E}\left[\left(\sum_{i=1}^\numlabel \baseval_i \gtweightnorm\inprod{\gtweightvec-\estweightvec}{\featureRV_i}\right)^2\right]
		\leq &\; (\gtweightnorm)^2\errorcomppart^2\log^2(4e\numlabel/\delta)\mathbb{E}\left[\left(\sum_{i=1}^\numlabel \left|\baseval_i\right|\right)^2\right]\\
		\leq &\;(\gtweightnorm)^2\errorcomppart^2\log^2(4e\numlabel/\delta)\mathbb{E}\left[\numlabel\sum_{i=1}^\numlabel \baseval^2_i\right]\\
		=&\;(\gtweightnorm)^2\errorcomppart^2\log^2(4e\numlabel/\delta)\numlabel^2
		\end{align*}
		
		For the third term in (\ref{eqn:terms}), also conditioning on $\event_\delta$ we have
		\begin{align*}
		\mathbb{E}\left[\left(\frac{\sum_{i=1}^\numlabel \baseval_i \labelnoise_i}{\sum_{i=1}^\numlabel \baseval_i^2 }\inprod{\estweightvec}{\featureRV}\right)^2\right]
		=&\;\mathbb{E}\left[\left(\frac{\sum_{i=1}^\numlabel \baseval_i \labelnoise_i}{\sum_{i=1}^\numlabel \baseval_i^2 }\right)^2\right]\mathbb{E}\left[\inprod{\estweightvec}{\featureRV}^2\right]\\
		\leq &\; \frac{\mathbb{E}\left[\left(\sum_{i=1}^\numlabel \baseval_i \labelnoise_i\right)^2\right]}{\frac{\numlabel^2}{4}}\\
		\leq &\; \frac{4\mathbb{E}\left[\sum_{i=1}^\numlabel \baseval^2_i \sigma^2\right]}{\numlabel^2}=\frac{4\sigma^2}{\numlabel}.
		\end{align*}
		Combining the three terms and considering all the conditioned events, we have
		\begin{align*}
		\mathbb{E}\left[\left(\inprod{\gtweight}{\featureRV}- \inprod{\estweight}{\featureRV}\right)^2\right]
		\leq &4(\gtweightnorm)^2\errorcomppart^2+(\gtweightnorm)^2\errorcomppart^2\log^2(4e\numlabel/\delta)+\frac{4\sigma^2}{\numlabel}+C'\delta\\
		\leq &O\left(\frac{1}{\numlabel}+\log^2(\numlabel/\delta)\erralgo_{\clsalgo}(\numcomp, \delta/4)+\nu^2+\delta \right)
		\end{align*}
		Taking $\delta=\frac{4}{\numlabel}$ obtain our desired result.

	\end{proof}
	\subsection{Proof of Theorem \ref{thm:lowern} \label{sec:proof_lower_n_lr}}
		Our proof ideas come from \cite{castro2008minimax}.
		We use Le Cam's method, explained in the lemma below:
		\newcommand{\KL}{\text{KL}}
		\begin{lemma}[Theorem 2.2, \cite{tsybakov2009introduction}]
			\label{lemma:lecam}
			Suppose $\mathcal{P}$ is a set of distributions parametrized by $\theta\in \Theta$. $P_0,P_1\in \mathcal{P}$ are two distributions, parametrized by $\theta_0,\theta_1$ respectively, and $\KL(P_1,P_2)\leq \alpha\leq \infty$. Let $d$ be a semi-distance on $\Theta$, and $d(\theta_0,\theta_1)=2a$. Then for any estimator $\widehat{\theta}$ we have
			\begin{align*}
			\inf_{\widehat{\theta}} \sup_{\theta\in \Theta}\mathbb{P}[d(\widehat{\theta},\theta)\geq a]
			\geq&\; \inf_{\widehat{\theta}} \sup_{j\in \{0,1\}}\mathbb{P}[d(\widehat{\theta},\theta_j)\geq a] \\
			\geq&\; \max\left(\frac{1}{4}\exp(-\alpha),-\frac{1-\sqrt{\alpha/2}}{2} \right)
			\end{align*}
		\end{lemma}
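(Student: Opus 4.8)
The plan is to prove Le Cam's two-point bound by reducing the estimation problem to a binary hypothesis test between $P_0$ and $P_1$ and then lower-bounding the minimal testing error by the \emph{affinity} $\int\min(dP_0,dP_1)$. Throughout let $\mathbb{P}_j$ denote probability under $P_j$. The first inequality in the displayed statement is immediate: since $\{\theta_0,\theta_1\}\subseteq\Theta$, restricting the supremum to this two-element set only decreases it, so
\[\inf_{\widehat{\theta}}\sup_{\theta\in\Theta}\mathbb{P}[d(\widehat{\theta},\theta)\geq a] \geq \inf_{\widehat{\theta}}\max_{j\in\{0,1\}}\mathbb{P}_j[d(\widehat{\theta},\theta_j)\geq a].\]
All the work is in the second inequality, which I would establish for a fixed but arbitrary estimator $\widehat{\theta}$ and then take the infimum.

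The key reduction is to convert $\widehat{\theta}$ into a test. Define $\psi=1$ if $d(\widehat{\theta},\theta_0)\geq a$ and $\psi=0$ otherwise. When $d(\widehat{\theta},\theta_0)<a$, the triangle inequality for the semi-distance $d$ together with $d(\theta_0,\theta_1)=2a$ forces $d(\widehat{\theta},\theta_1)\geq d(\theta_0,\theta_1)-d(\widehat{\theta},\theta_0)>a$; hence $\{\psi=0\}\subseteq\{d(\widehat{\theta},\theta_1)\geq a\}$, while $\{\psi=1\}=\{d(\widehat{\theta},\theta_0)\geq a\}$ by definition. Bounding the maximum by the average and using these containments gives
\[\max_{j}\mathbb{P}_j[d(\widehat{\theta},\theta_j)\geq a] \geq \tfrac12\left(\mathbb{P}_0[\psi=1]+\mathbb{P}_1[\psi=0]\right)\geq \tfrac12\inf_{\psi'}\left(\mathbb{P}_0[\psi'=1]+\mathbb{P}_1[\psi'=0]\right),\]
so the problem is reduced to lower-bounding the minimal sum of type-I and type-II errors over all tests $\psi'$.

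Next I would invoke the Neyman–Pearson characterization: with $p_0,p_1$ densities with respect to a common dominating measure $\mu$, the infimum over tests of the error sum equals the affinity $\int\min(p_0,p_1)\,d\mu=1-\|P_0-P_1\|_{\mathrm{TV}}$, attained by the likelihood-ratio test. It then remains to lower bound this affinity in two ways, yielding the two terms of the stated maximum. For the $\tfrac12 e^{-\alpha}$ term I would use Cauchy–Schwarz,
\[\left(\int\sqrt{p_0p_1}\,d\mu\right)^2\leq \int\min(p_0,p_1)\,d\mu\cdot\int\max(p_0,p_1)\,d\mu\leq 2\int\min(p_0,p_1)\,d\mu,\]
together with Jensen's inequality $\int\sqrt{p_0p_1}\,d\mu=\mathbb{E}_{P_1}\!\left[\sqrt{p_0/p_1}\right]\geq \exp\!\left(-\tfrac12\mathrm{KL}(P_1,P_0)\right)\geq e^{-\alpha/2}$, which combine to give $\int\min(p_0,p_1)\,d\mu\geq\tfrac12 e^{-\alpha}$, hence the factor $\tfrac14 e^{-\alpha}$ after the $\tfrac12$ above. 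For the remaining term I would apply Pinsker's inequality $\|P_0-P_1\|_{\mathrm{TV}}\leq\sqrt{\mathrm{KL}(P_0,P_1)/2}\leq\sqrt{\alpha/2}$, so the affinity is at least $1-\sqrt{\alpha/2}$ and the error sum contributes $\tfrac12(1-\sqrt{\alpha/2})$.

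The main subtlety is the affinity bound $\int\min(p_0,p_1)\,d\mu\geq\tfrac12 e^{-\alpha}$: the Cauchy–Schwarz and Jensen steps must be chained in the right order, the KL divergence appears with $P_1$ as the ``averaging'' measure, and the measure-theoretic bookkeeping (choosing a common dominating $\mu$, and the trivial case $\alpha=\infty$) must be noted even though it is routine. I would also flag that the second entry of the displayed maximum should read $\tfrac12\bigl(1-\sqrt{\alpha/2}\bigr)$ with a \emph{positive} numerator; as written with a leading minus sign the bound is vacuous for $\alpha<2$ and in fact would exceed $1$ for large $\alpha$, so I would prove and record the corrected positive form.
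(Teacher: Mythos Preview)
The paper does not prove this lemma at all: it is quoted verbatim as Theorem~2.2 of \cite{tsybakov2009introduction} and used as a black box in the proof of Theorem~\ref{thm:lowern}. So there is no ``paper's own proof'' to compare against; your write-up is in fact a reconstruction of the standard Tsybakov argument (reduction to a two-point test via the triangle inequality for the semi-distance, identification of the minimal error sum with the affinity $\int\min(p_0,p_1)\,d\mu$, and then two separate lower bounds on the affinity via Cauchy--Schwarz/Jensen and via Pinsker). That argument is correct as you have written it.

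Your observation about the sign is also right: the second entry of the maximum should be $\tfrac12\bigl(1-\sqrt{\alpha/2}\bigr)$, and the minus sign in the displayed statement is a typo inherited in the paper's transcription (as is the ``$P_2$'' in $\mathrm{KL}(P_1,P_2)$, which should be $P_0$). One minor point of care: the Jensen step gives $\int\sqrt{p_0p_1}\,d\mu\ge\exp\bigl(-\tfrac12\mathrm{KL}(P_1,P_0)\bigr)$, while the Pinsker step is usually stated with $\mathrm{KL}(P_0,P_1)$; since the lemma only asserts a bound in terms of a single $\alpha$ that dominates the relevant KL divergence, this asymmetry is harmless here, but it is worth flagging if you intend to record the corrected statement.
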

		\newcommand{\vecpre}[1]{\mathbf{#1}}
		\newcommand{\dimvec}[1]{^{(#1)}}
		\newcommand{\smallconst}{\xi}
		We consider two functions: $\gtweight_0=(\smallconst,0,0,\ldots,0)^T$ and $\gtweight_1=(\frac{1}{\sqrt{\numlabel}},0,0,\ldots,0)^T$, where $\smallconst$ is a very small constant. Note that for these two functions comparisons provide no information about the weights (comparisons can be carried out directly by comparing $x\dimvec{1}$, the first dimension of $x$). So differentiating $\gtweight_0$ and $\gtweight_1$ using two oracles is the same as that using only active labels. We have $d(\gtweight_0,\gtweight_1)=E\left[\left((\gtweight_0-\gtweight_1)^TX\right)^2\right]=\left(\frac{1}{\sqrt{\numlabel}}-\smallconst\right)^2$. For any estimator $\estweight$, let $\{(\featureRV_i,\labelRV_i)\}_{i=1}^{\numlabel}$ be the set of samples and labels obtained by $\estweight$. Note that $\featureRV_{j+1}$ might depend on $\{(\featureRV_i,\labelRV_i)\}_{i=1}^j$. Now for KL-divergence we have
		\begin{align*}
		\KL(P_1,P_0)=&\;\mathbb{E}_{P_1}\left[\log\frac{P_1\left(\{(\featureRV_i,\labelRV_i)\}_{i=1}^{\numlabel}\right)}{P_0\left(\{(\featureRV_i,\labelRV_i)\}_{i=1}^{\numlabel}\right)} \right]\\
		=&\;\mathbb{E}_{P_1}\left[\log\frac{\prod_{j=1}^\numlabel P_1(Y_j|X_j)P(X_j|\{(\featureRV_i,\labelRV_i)\}_{i=1}^{j})}{\prod_{j=1}^\numlabel P_1(Y_j|X_j)P(X_j|\{(\featureRV_i,\labelRV_i)\}_{i=1}^{j})} \right]\\
		=&\;\mathbb{E}_{P_1}\left[\log\frac{\prod_{j=1}^\numlabel P_1(\labelRV_j|\featureRV_j)}{\prod_{j=1}^\numlabel P_0(\labelRV_j|\featureRV_j)} \right]\\
		=&\;\sum_{i=1}^{\numlabel} \mathbb{E}_{P_1}\left[\mathbb{E}_{P_1}\left[\log\left.\frac{\prod_{j=1}^\numlabel P_1(\labelRV_j|\featureRV_j)}{\prod_{j=1}^\numlabel P_0(\labelRV_j|\featureRV_j)}\right|\featureRV_1,\ldots,\featureRV_\numlabel\right]\right]\\
		\leq&\; n\max_{x}\mathbb{E}_{P_1}\left[\log\left.\frac{\prod_{j=1}^\numlabel P_1(\labelRV_j|\featureRV_j)}{\prod_{j=1}^\numlabel P_0(\labelRV_j|\featureRV_j)}\right|\featureRV_1=x\right].
		\end{align*}
		The third equality is because decision of $\featureRV_j$ is independent of the underlying function giving previous samples. Note that given $\featureRV=\featureV$, $\labelRV$ is normally distributed; by basic properties of Gaussian we have
		\[\mathbb{E}_{P_1}\left[\log\left.\frac{\prod_{j=1}^\numlabel P_1(\labelRV_j|\featureRV_j)}{\prod_{j=1}^\numlabel P_0(\labelRV_j|\featureRV_j)}\right|\featureRV_1=x\right]=\frac{(\frac{1}{\sqrt{\numlabel}-\smallconst})^2}{2\sigma^2}. \]
		Now by taking $\smallconst$ sufficiently small we have for some constants $C_1,C_2$,
		\[\KL(P_1,P_0)\leq C_1, d(\theta_0,\theta_1)\geq \frac{C_2}{\numlabel}. \]
		Combining with Lemma \ref{lemma:lecam} we obtain the lower bound.

	\subsection{Lower Bounds for Total Number of Queries under Active Case\label{sec:proof_lowerall_lr}}
	\begin{theorem}
		\label{thm:lowerall}
		For any (active) estimator $\estweight$ with access to $\numlabel$ labels and $\numcomp$ comparisons, there exists a ground truth weight $\existweight$ and a global constant $C$, such that when $\gtweight=\existweight$ and $2\numcomp+\numlabel< d$,
		$$\mathbb{E}\left[\inprod{\estweight-\gtweight}{X}^2\right]\geq C.$$
	\end{theorem}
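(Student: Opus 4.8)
\begin{proofarg}{Proof sketch of Theorem~\ref{thm:lowerall}}
The plan is to reduce to noiseless linear measurements and then run a Bayesian (prior-averaging) argument that cleanly handles adaptivity. First I would pass to a strictly more informative oracle model: replace each label query at a point $x$ by the \emph{exact} value $\inprod{\gtweight}{x}$ (the learner can regenerate the noisy label by adding its own fresh $\mathcal N(0,\stdlabelnoise^2)$ noise), and replace each comparison query on $(x,x')$ by the pair of exact values $(\inprod{\gtweight}{x},\inprod{\gtweight}{x'})$ (from which the sign is computable). Each replacement only hands the learner more information, and the comparison oracle may be taken noiseless (the $\nu=0$ instance of the agnostic model), so a lower bound in this enhanced model transfers to the original one; restricting query points to the unlabeled pool $\mathcal U$ also only handicaps the learner, so I will ignore that restriction. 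In the enhanced model the learner thus obtains $q:=2\numcomp+\numlabel$ exact linear functionals $r_i=\inprod{\gtweight}{a_i}$ of $\gtweight$, with query vectors $a_1,\dots,a_q\in\mathbb R^{\dimension}$ chosen adaptively.

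Next I would endow $\gtweight$ with the prior $\gtweight\sim\mathcal N(0,I_{\dimension})$ (conditioning on the algorithm's internal randomness, so the algorithm is deterministic) and analyze the Bayes risk. The key observation is that the full transcript $\mathcal T=(a_1,r_1,\dots,a_q,r_q)$ is a deterministic function of $\gtweight$: once the responses $r_1,\dots,r_q$ are fixed, adaptivity forces each query to be determined by the previous ones. Hence for every realizable $\tau$ the event $\{\mathcal T=\tau\}$ is exactly the affine slice $\{w:\inprod{w}{a_i}=r_i,\ i\le q\}$, with $a_i,r_i$ read off from $\tau$, and the conditional law is the Gaussian
\begin{align*}
\gtweight\mid\{\mathcal T=\tau\}\ \sim\ \mathcal N\!\big(\mu_\tau,\ \Pi_\tau\big),\qquad \Pi_\tau:=\text{Proj}_{V_\tau^\perp},\quad V_\tau:=\mathrm{span}(a_1,\dots,a_q),
\end{align*}
where $\dim V_\tau\le q$ and $\mu_\tau\in V_\tau$ (so $\Pi_\tau\mu_\tau=0$). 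Thus, conditionally on $\mathcal T$, the component of $\gtweight$ orthogonal to $V_\tau$ has mean $0$ and covariance $\Pi_\tau$ with $\tr(\Pi_\tau)=\dimension-\dim V_\tau\ge\dimension-q$.

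Finally I would conclude. Since $\estweight$ and $V_\tau$ are $\sigma(\mathcal T)$-measurable, projecting the error onto $V_\tau^\perp$ and taking conditional expectations gives
\begin{align*}
\mathbb E\,\|\estweight-\gtweight\|_2^2\ \ge\ \mathbb E\,\|\Pi_{\mathcal T}(\estweight-\gtweight)\|_2^2\ =\ \mathbb E\Big[\|\Pi_{\mathcal T}\estweight\|_2^2+\tr(\Pi_{\mathcal T})\Big]\ \ge\ \dimension-q\ \ge\ 1
\end{align*}
whenever $2\numcomp+\numlabel<\dimension$. As $\featureRV$ is uniform on $[-1,1]^{\dimension}$ we have $\mathbb E[\featureRV\featureRV^\top]=\tfrac13 I_{\dimension}$, so $\mathbb E\,\inprod{\gtweight-\estweight}{\featureRV}^2=\tfrac13\,\mathbb E\,\|\estweight-\gtweight\|_2^2\ge\tfrac13$ for the prior-averaged risk; hence some fixed $\existweight$ in the support of the prior achieves $\mathbb E\,\inprod{\existweight-\estweight}{\featureRV}^2\ge\tfrac13=:C$.

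I expect the main obstacle to be the adaptivity bookkeeping in the middle step: carefully arguing that the transcript is a deterministic function of $\gtweight$, so that the posterior is genuinely a Gaussian affine slice and the orthogonal part of $\gtweight$ remains completely uninformed by the $q<\dimension$ measurements regardless of how the queries were adaptively selected (and also checking that the enhanced-oracle reduction is valid Blackwell domination). A secondary, cosmetic point is that the witness $\existweight$ obtained this way may have norm of order $\sqrt{\dimension}$; this is intrinsic (a prior supported on a bounded ball would yield only an $\Omega(1/\dimension)$ risk bound) and is allowed by the statement.
\end{proofarg}
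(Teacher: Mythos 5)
Your proof is correct, and it takes a genuinely different route from the paper's after the (shared) first reduction. Both arguments begin by upgrading the learner to $q=2\numcomp+\numlabel$ exact linear measurements $\langle w^*,a_i\rangle$ and discarding the pool restriction. From there the paper draws $w^*$ from the \emph{unit ball} (direction uniform on the sphere, radius uniform on $[0,1]$), reduces to the extreme case $q=d-1$, and argues geometrically: the posterior is the prior restricted to a chord $l$ of the ball, with probability $1/2$ that chord passes within distance $1/2$ of the origin, and the conditional second moment about the chord's midpoint is then a constant. You instead put a $\mathcal{N}(0,I_d)$ prior on $w^*$ and compute the posterior exactly as a Gaussian whose covariance is the projection onto the orthogonal complement of the adaptively chosen query directions, so the Bayes risk is at least $\mathrm{tr}(\Pi_\tau)\geq d-q\geq 1$. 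Your version is cleaner and more robust: it handles all $q<d$ uniformly (no need to pad to $d-1$ queries), and the sequential Gaussian conditioning makes the adaptivity bookkeeping — the step the paper handles somewhat informally via ``suppose such $l$ is unique'' and ``routine calculation'' — fully rigorous. The price, as you note, is a witness with $\|\widetilde{w}\|_2\approx\sqrt{d}$, whereas the paper's witness lies in the unit ball, which is a stronger conclusion if one cares about bounded signals (the theorem as stated does not). Two small corrections: your parenthetical that a prior supported on a bounded ball can only yield $\Omega(1/d)$ is contradicted by the paper's own construction, which extracts a constant from a bounded prior precisely because only a one-dimensional direction is left unidentified and the conditional spread along that chord is $\Theta(1)$; and the test-point covariance should be $I_d$ under the section's standing isotropy assumption rather than the $\tfrac13 I_d$ of the uniform cube you borrowed from Theorem~\ref{thm:lowern} — either way the constant survives.
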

	\medsp
	Theorem \ref{thm:lowerall} shows a lower bound on the total number of queries in order to get low error. Combining with Theorem \ref{thm:lowern}, in order to get a MSE of $\targeterror$  for some $\targeterror<C$, we need to make at least $O(1/\targeterror+\dimension)$ queries (i.e., labels+comparisons). Note that for the upper bound in Theorem \ref{thm:upper_lr}, we need $\numlabel+\numcomp=\Ologlog(1/\targeterror+\dimension\log(\dimension/\targeterror))$ for Algorithm \ref{algo:linreg} to reach $\targeterror$ MSE, when using \cite{awasthi2014power} as $\clsalgo$ (see Table \ref{tab:cls_results}). So Algorithm \ref{algo:linreg} is optimal in terms of total queries, up to log factors.
	
	The proof of Theorem \ref{thm:lowerall} is done by considering an estimator with access to $\numlabel+2\numcomp$ \emph{noiseless} labels $\{(\featureV_i, \gtweight\cdot \featureV_i)\}_{i=1}^{\numlabel+2\numcomp}$, which can be used to generate $\numcomp$ comparisons and $\numlabel$ labels. 
	We sample $\gtweight$ from a prior distribution in $\ball{0}{1}$, and show that the expectation of MSE in this case is at least a constant. Thus there exists a weight vector $\existweight$ that leads to constant error. 

	\begin{figure}
		\centering
		\includegraphics[width=0.23\textwidth]{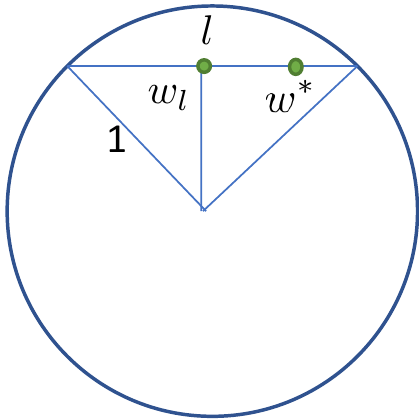}
		\caption{Graphic illustration about the sampling process in the proof of Theorem \ref{thm:lowerall}.\label{fig:prooflowerall}}
	\end{figure}
	
	\begin{proof}
	We just prove the theorem for $2\numcomp+\numlabel=d-1$. Note that this case can be simulated by considering an estimator with access to $2\numcomp+\numlabel$ \emph{truthful} samples; that is, $Y_i=\gtweight\cdot X_i$ for $i=1,2,\ldots,2\numcomp+\numlabel$. In this way truthful comparisons can be simulated by query two labels. We now prove a lower bound for this new case with $2\numcomp+\numlabel=d-1$ truthful samples.
	
	\newcommand{\distrwgt}{\mathcal{P}_{\gtweight}}
	We randomly sample $\gtweight$ as below: first uniformly sample $\gtweightvec$ on the surface of $\ball{0}{1}$, and then uniformly sample $\gtweightnorm\in [0,1]$. Let this distribution be $\distrwgt$. Since we only have $\dimension-1$ labels, for any set of samples $\featureV_1,\ldots,\featureV_{\dimension-1}$ there exists a line $l\in \ball{0}{1}$ such that every $w\in l$ produces the same labels on all the samples. Not losing generality, suppose such $l$ is unique (if not, we can augment $\estweight$ such that it always queries until $l$ is unique).
	Now for any active estimator $\estweight$, let $\featureRV_1,\ldots,\featureRV_{\dimension-1}$ denote the sequence of queried points when $\gtweight$ is randomly picked as above. Now note that for every $w$,
	\begin{align*}
	\mathbb{P}[\gtweight=w|\{\featureRV_i,\labelRV_i\}_{i=1}^{d-1},l]
	=\mathbb{P}[\gtweight=w|\{\featureRV_i,\labelRV_i\}_{i=1}^{d-1}]
	\propto \mathbb{P}[\gtweight=w]I(w\in l).
	\end{align*}
	The first equality is because $l$ is a function of $\{\featureRV_i,\labelRV_i\}_{i=1}^{d-1}$; the second statement is because all $w\in l$ produces the same dataset on $\featureRV_1,\ldots,\featureRV_{\dimension-1}$, and every $w\not\in l$ is impossible given the dataset. Notice that $\gtweightnorm$ is uniform on $[0,1]$; so with probability at least a half, the resulting $l$ has distance less than $1/2$ to the origin (since $l$ contains $\gtweight$, and $\|\gtweight\|$ is uniform on $[0,1]$). Denote by $w_l$ the middle point of $l$ (see Figure \ref{fig:prooflowerall}). For any such line $l$, the error is minimized by predicting the middle point of $l$: Actually we have
	\begin{align}
	\mathbb{E}\left[\inprod{\gtweight-\estweight}{\featureRV}^2|\text{$l$ has distance less than 1/2}\right]
	\geq \int_{u=0}^{|l|/2}u^2dP(\|\gtweight-\widehat{w}\|_2\geq u|\gtweight\in l)\\. \nonumber\\
	\geq \int_{u=0}^{|l|/2}u^2dP(\|\gtweight-w_l\|_2\geq u|\gtweight\in l)
	\label{eqn:mselower}
	\end{align}
	Note that the distribution of $\gtweight\in l$ is equivalent to that we sample from the circle containing $l$ and centered at origin, and then condition on $\gtweight\in l$ (see Figure \ref{fig:prooflowerall}). Notice that this sampling process is the same as when $d=2$; and with some routine calculation we can show that (\ref{eqn:mselower}) is a constant $C$. So overall we have
	\begin{align*}
	\mathbb{E}\left[\inprod{\gtweight-\estweight}{X}^2\right]\geq \frac{1}{2}C,
	\end{align*}
	where the expectation is taken over randomness of $\gtweight$ and randomness of $\estweight$. Now since the expectation is a constant, there must exists some $w$ such that
	\begin{align*}
	\mathbb{E}\left[\left. \inprod{\gtweight-\estweight}{X}^2\right|\gtweight=w\right]\geq \frac{1}{2}C,
	\end{align*}
	which proves the theorem.
	\end{proof}
	
	\section{Auxiliary Technical Results}
	We use some well-known technical results in our proofs and collect them here to improve readability.
	We use the Craig-Bernstein inequality~\citep{craig1933tchebychef}:
	\begin{lemma}\label{lem:bernstein}
		Suppose we have $\{X_1,\ldots,X_n\}$ be independent random variables and suppose that for $k \ge 2$, for some $r > 0$,
		\begin{align*}
		\mathbb{E}[\left|X_i - \mathbb{E}[X_i]\right|^k] \le \frac{\var(X_i)}{2}k!r^{k-2}.
		\end{align*} Then with probability at least $1-\delta$, for any $c < 1$ and for any $t \leq c/r$ 
		we have that:
		\begin{align*}
		\frac{1}{n}\sum_{i=1}^{n}\left(\mathbb{E}[X_i] - X_i\right) \le \frac{\log(1/\delta)}{n t} + \frac{t~\var(X_i)}{2(1-c)}.
		\end{align*}
	\end{lemma}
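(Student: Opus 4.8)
The plan is the standard Chernoff / exponential-moment argument tailored to the Bernstein-type moment hypothesis: bound the moment generating function of each centered summand, multiply over the independent coordinates, optimize the exponential tilt, and invert the resulting tail bound. First I would pass to the centered variables $Y_i := \mathbb{E}[X_i] - X_i$, which have mean zero and inherit the hypothesis $\mathbb{E}|Y_i|^k \le \tfrac{\var(X_i)}{2}\,k!\,r^{k-2}$ for all $k \ge 2$. For $s > 0$ with $sr < 1$, expanding the exponential, using $\mathbb{E}[Y_i]=0$, the bound $\mathbb{E}[Y_i^k]\le \mathbb{E}|Y_i|^k$, and summing the geometric series gives
\[
\mathbb{E}\big[e^{sY_i}\big] = 1 + \sum_{k \ge 2} \frac{s^k}{k!}\,\mathbb{E}[Y_i^k] \le 1 + \sum_{k \ge 2} \frac{s^k\,\var(X_i)\,r^{k-2}}{2} = 1 + \frac{s^2\,\var(X_i)}{2(1-sr)} \le \exp\!\left(\frac{s^2\,\var(X_i)}{2(1-sr)}\right),
\]
where the last inequality is $1+x\le e^x$. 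By independence, $\mathbb{E}\big[e^{s\sum_i Y_i}\big] \le \exp\!\big(\tfrac{s^2}{2(1-sr)}\sum_i \var(X_i)\big)$.

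Next I would apply Markov's inequality to $e^{s\sum_i Y_i}$: for any threshold $u$,
\[
\mathbb{P}\!\left[\sum_{i=1}^n Y_i \ge u\right] \le \exp\!\left(-su + \frac{s^2}{2(1-sr)}\sum_{i=1}^n \var(X_i)\right).
\]
Now fix $c < 1$ and $t \le c/r$, set the tilt to $s = t$ (so that $1 - sr = 1 - tr \ge 1 - c > 0$, which is exactly what makes the geometric series converge and produces the clean denominator), and choose $u = \frac{\log(1/\delta)}{t} + \frac{t}{2(1-c)}\sum_i \var(X_i)$. Then the exponent is at most $-\log(1/\delta)$, so the event $\frac1n\sum_i Y_i \ge \frac{\log(1/\delta)}{nt} + \frac{t}{2n(1-c)}\sum_i \var(X_i)$ has probability at most $\delta$; taking complements yields the stated inequality (with the displayed $\var(X_i)$ understood as the relevant variance proxy — e.g.\ the common value in the i.i.d.\ case, or $\max_i \var(X_i)$, which dominates the average $\frac1n\sum_i \var(X_i)$ produced above).

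I expect no serious obstacle here, since this is a textbook Bernstein-type bound. The only points that require care are (i) carrying the constraint $tr \le c < 1$ consistently through the MGF estimate and the tilt optimization — this is what simultaneously guarantees convergence of $\sum_{k\ge 2}(sr)^{k-2}$ and yields the factor $2(1-c)$ rather than a worse constant — and (ii) the mild notational slack in the statement, where $\var(X_i)$ on the right-hand side stands in for the variance proxy that the sum-of-variances computation naturally delivers; stating the lemma with $\frac1n\sum_i \var(X_i)$ or $\max_i \var(X_i)$ would make this bookkeeping explicit.
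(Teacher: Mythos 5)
Your derivation is correct: the MGF expansion under the Bernstein moment condition, the geometric-series bound $\mathbb{E}[e^{sY_i}]\le\exp\bigl(s^2\var(X_i)/(2(1-sr))\bigr)$ for $sr<1$, the product over independent coordinates, and the Chernoff--Markov step with tilt $s=t$ and the stated threshold $u$ all go through, and the constraint $tr\le c<1$ is used exactly where it is needed. The paper itself does not prove this lemma at all --- it is quoted as the classical Craig--Bernstein inequality with a citation to Craig (1933) in the ``Auxiliary Technical Results'' appendix --- so there is no internal proof to compare against; your argument is the standard textbook derivation of that result. Your remark about the right-hand side is also apt: the argument naturally produces $\frac{t}{2n(1-c)}\sum_{i}\var(X_i)$, and the lemma's $\var(X_i)$ should be read as the common variance (the paper applies it to i.i.d.\ variables $Z_i$ in the proof of its cross-validation lemma) or as a uniform variance proxy; the only other point worth making explicit is that the term-by-term interchange of expectation and the exponential series is justified by absolute convergence, which the moment hypothesis guarantees for $sr<1$.
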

	
\end{document}